\date{}
\title{Label Leakage and Protection in Two-party Split Learning}
\author{
Oscar Li$^1$\thanks{work done as an intern at ByteDance Inc.},\; Jiankai Sun$^2$,\; Xin Yang$^2$,\; Weihao Gao$^2$, \; Hongyi Zhang$^2$, \; Junyuan Xie$^2$\\
 Virginia Smith$^1$, \; Chong Wang$^2$ \\
{\small $^1$ Carnegie Mellon University \qquad \texttt{\small \{oscarli, smithv\}@cmu.edu},}\\
{\small $^2$ ByteDance Inc.}\\
\texttt{\small \{jiankai.sun, yangxin.yx, weihao.gao, hongyi.zhang, junyuan.xie, chong.wang\}@bytedance.com}
}
\newcommand{\lam}[2]{\lambda_{#1}^{(#2)}}
\newcommand{\Lam}[2]{\lambda_{#1}^{(#2)*}}
\newcommand{\nlpartynospace}{non-label party}
\newcommand{\nlparty}{\nlpartynospace~}
\newcommand{\lpartynospace}{label party}
\newcommand{\lparty}{\lpartynospace~}
\definecolor{Red}{rgb}{1,0,0}
\definecolor{Green}{rgb}{0,0.7,0}
\definecolor{Blue}{rgb}{0,0,1}
\definecolor{Red}{rgb}{0.6,0,0}
\definecolor{Orange}{rgb}{1,0.5,0}
\newcommand{\Marvell}{\texttt{Marvell}}
\newcommand{\iso}{\texttt{iso}}
\newcommand{\maxnorm}{\texttt{max\_norm}}
\newcommand{\nonoise}{\texttt{no\_noise}}
\begin{document}
\maketitle

\begin{abstract}
Two-party split learning is a popular technique for learning a model across feature-partitioned data.  In this work, we explore whether it is possible for one party to steal the private label information from the other party during split training, and whether there are methods that can protect against such attacks.
Specifically, we first formulate a realistic threat model and propose a privacy loss  metric to quantify label leakage in split learning. We then show that there exist two simple yet effective methods within the threat model that can allow one party to accurately recover private ground-truth labels owned by the other party. To combat these attacks, we propose several random perturbation techniques, including $\texttt{Marvell}$, an approach that strategically finds the structure of the noise perturbation by minimizing the amount of label leakage (measured through our quantification metric) of a worst-case adversary. We empirically demonstrate the effectiveness of our protection techniques against the identified attacks, and show that $\texttt{Marvell}$ in particular has improved privacy-utility tradeoffs relative to baseline approaches.
\end{abstract}

\section{Introduction}
With  increasing concerns over data privacy in machine learning, \textit{federated learning} (FL)~\citep{mcmahan2017communication} has become a promising direction of study. Based on how sensitive data are distributed among parties, FL can be classified into different categories, notable among which are \textit{horizontal FL} and \textit{vertical FL}~\citep{yang2019federated}. 
In contrast to horizontal FL where the data are partitioned by examples, vertical FL considers data partitioned by features (including labels). 
As a canonical example of vertical FL, consider an online media platform $A$ which displays advertisements from company $B$ to its users, and charges $B$ for each \textit{conversion} (e.g., a user clicking the ad and buying the product). 
In this case, both parties have different features for each user: $A$ has features on the user's media viewing records, while $B$ has the user's conversion labels. $B$'s labels are not available to $A$ because each user's purchase behaviors happen entirely on $B$'s website/app. 

If both parties want to jointly learn a model to predict conversion without data sharing, 
\textit{split learning}~\citep{gupta2018distributed,vepakomma2018split} can be used to split the execution of a deep network between the parties on a layer-wise basis. In vanilla split learning, before training begins, both parties use Private Set Intersection (PSI) protocols~\citep{PSI2016vlad,PSI2016Benny} to find the intersection of their data records and achieve an example ID alignment. This alignment paves the way for the  split training phase. During training (Figure~\ref{fig:split-learning-setup}), the party without labels (\textit{\nlpartynospace}) sends the intermediate layer (\textit{cut layer}) outputs rather than the raw data to the party with labels (\textit{\lpartynospace}), and the \lparty completes the rest of the forward computation to obtain the training loss. To compute the gradients with respect to model parameters, the \lparty initiates backpropagation from its training loss and computes its own parameters' gradients. To allow the \nlparty to also compute gradients of its  parameters, the \lparty also computes {the gradients with respect to the cut layer outputs} and communicates this information back to the \nlpartynospace. As a result of the ID alignment, despite not knowing the \lpartynospace's raw label data, the \nlparty can identify the gradient value returned by the \lparty for each example.

\begin{figure}[t!]
    \centering
    \includegraphics[width=\columnwidth]{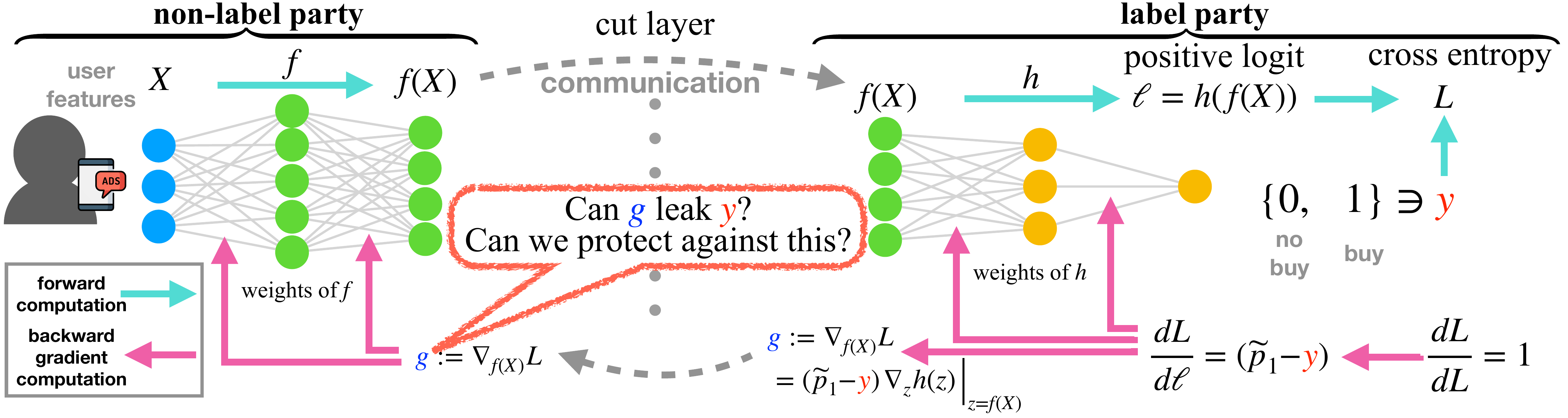}\vspace{-.15in}
    \caption{{Communication diagram of two-party split training for an example of online advertising. We study whether it is possible for the communicated gradient $g$ to leak private label information.}}
    \label{fig:split-learning-setup}
\end{figure}

At first glance, the process of split learning appears privacy-preserving because only the intermediate computations of the cut layer---rather than raw features or labels---are communicated between the two parties. 
However, such ``gradient sharing'' schemes have  been shown to be vulnerable to privacy leakage in  {horizontal FL} settings \citep[e.g.,][]{zhu2019dlg}. 
In \textbf{vertical FL} (and specifically split learning), it remains unclear whether the raw data can similarly be leaked during communication. In particular, as the raw labels often contain highly sensitive information (e.g., what a user has purchased (in online advertising) or whether a user has a disease or not (in disease prediction)~\citep{vepakomma2018split}), developing a rigorous understanding of the threat of label leakage and its protection is particularly important. Towards this goal, we make the following contributions:

\begin{enumerate}
\setlength\itemsep{0pt}
    \item We formalize a threat model for label leakage  in two-party split learning in the context of binary classification (Section~\ref{subsec:setup}), and propose specific privacy quantification metrics to measure the severity of such threats (Section~\ref{subsec:threatmodel}).
    \item We identify two simple and realistic methods within this threat model which can accurately recover the \lpartynospace's private label information (Section~\ref{subsec:threat_example}).
    \item We propose several random perturbation techniques to limit the label-stealing ability of the \nlparty (Section~\ref{sec:protection}). Among them, our principled approach \texttt{Marvell} directly searches for the optimal random perturbation noise structure to minimize label leakage (as measured via our quantification metric) against a worst-case adversarial  \nlpartynospace. 
    \item  We experimentally demonstrate the effectiveness of our protection techniques and \textsc{Marvell}'s improved privacy-utility tradeoffs compared to other protection baselines (Section~\ref{sec:experiment}).
\end{enumerate}

\section{Related Work}
\label{sec:related-work}
\textbf{Privacy leakage in split learning.} Although raw data is not shared in federated learning, sensitive information may still be leaked when gradients and/or model parameters are communicated between parties. In horizontal FL, \citet{zhu2019dlg} showed that an honest-but-curious server can uncover the raw features and labels of a device by knowing the model architecture, parameters, and  communicated gradient of the loss on the device's data.
Based on their techniques, \citet{zhao2020idlg} showed that the ground truth label of an example can be extracted by exploiting 
the directions of the gradients of the weights connected to the logits of different classes. Here we study a different setting---two-party split learning (in vertical FL)~\citep{yang2019federated}, where no party has access to the model architecture or model parameters of the other party. In this setting, \citet{vepakomma2019reducing} studied how the forward communication of feature representations can leak the \nlpartynospace's raw data to the \lpartynospace. We instead study whether \textit{label} information may be leaked from the label party to the \nlparty during the backward communication. Despite the importance of maintaining the privacy of these labels, we are unaware of prior work that has studied this problem.

\textbf{Privacy protection and quantification.} 
Techniques to protect communication privacy in FL generally fall into three categories: \textbf{1)} cryptographic methods such as {secure multi-party computation} \citep[e.g.,][]{bonawitz2017practical}; \textbf{2)} system-based methods including {trusted execution environments} \citep{subramanyan2017formal}; and \textbf{3)} perturbation methods that shuffle or modify the communicated messages~\cite[e.g.,][]{abadi2016deep, mcmahan2017learning,erlingsson2019amplification,cheu2019distributed,zhu2019dlg}. Our protection techniques belong to the third category, as we add random perturbations to the gradients to protect the labels. Many randomness-based protection methods have been proposed in the domain of horizontal FL. 
In this case, \textit{differential privacy} (DP) \citep{DPDwork2006, dwork2014algorithmic} is commonly used to measure the proposed random mechanisms' ability to anonymize the identity of any single participating example in the model iterates. However, in split learning, after PSI, both parties know \textit{exactly the identity of which example has participated in a given gradient update}. As we explain in Section~\ref{subsec:setup}, the object we aim to protect (the communicated cut layer gradients), unlike the model iterates, is not an aggregate function of all the examples but are instead example-specific. \textbf{As a result, DP and its variants} (e.g. label DP \citep{chaudhuri2011sample, ghazi2021deep}) \textbf{are not directly applicable metrics in our setting, and we instead propose a different metric} (discussed in Section~\ref{subsec:threatmodel}).

\section{Label Leakage in Split Learning}
\label{sec:leakage}
We first introduce the two-party split learning problem for binary classification, and then formally describe our threat model and privacy quantification metrics with two concrete attack examples.
\subsection{Two-party Split Learning in Binary Classification}

\label{subsec:setup}
\textbf{Problem setup.} Consider two parties learning a composition model $h \circ f$ jointly for a binary classification problem over the domain $\calX \times \cbrck{0, 1}$ (Figure~\ref{fig:split-learning-setup}). The \nlparty owns the representation function $f: \calX \to \Real^d$ and each example's raw feature $X \in \calX$ while the \lparty owns the logit function $h: \Real^d \to \Real$ and each example's label $y \in \cbrck{0, 1}$\footnote{
To simplify notation, we assume no additional features in the \lparty to compute the logit. The data leakage problem still holds true for other more complicated settings (see WDL experiment setting in Section~\ref{sec:experiment}).
}.
Let $\ell = h(f(X))$ be the logit of the positive class whose predicted probability is given through the sigmoid function: $\widetilde{p}_1 = 1/(1 + \exp(-\ell))$. We measure the loss of such prediction through the cross entropy loss $L = \log(1 + \exp(-\ell)) + (1 - y)\ell$. During model inference, the \nlparty computes $f(X)$ and sends it to the \lparty who will then execute the rest of {forward} computation in Figure~\ref{fig:split-learning-setup}.

\textbf{Model training} (Figure~\ref{fig:split-learning-setup}: backward gradient computation). To train the model using gradient descent, the \lparty starts by first computing the gradient of the loss $L$ with respect to the logit $\frac{dL}{d\ell} = (\widetilde{p}_1 - y)$. Using the chain rule, the \lparty can then compute the gradient of $L$ with respect to its function $h$'s parameters and perform the gradient updates. To also allow the \nlparty to learn its function $f$, the \lparty needs to additionally compute the gradient with respect to cut layer feature $f(X)$ and communicate it to the \nlpartynospace. We denote this gradient by $g \coloneqq \nabla_{f(X)} L = (\widetilde{p}_1 - y) \nabla_z h(z) \vert_{z = f(X)} \in \Real^d$  (by chain rule). After receiving $g$, the \nlparty continues the backpropagation towards $f$'s parameters and also perform the gradient updates. 

\textbf{Why Not Differential Privacy?} Note that for a given iteration, the \nlparty  randomly chooses $B$ example IDs to form a batch. Therefore, the identity of which examples are used is known to the \nlparty by default. In addition, the communicated features $f(X)$ and returned gradients $g$ will both be matrices in $\Real^{B \times d}$ with each row belonging to a specific example in the batch. The different gradients (rows of the matrix) are not with respect to the same model parameters, but are instead with respect to different examples' cut-layer features; thus, no averaging over or shuffling of the rows of the gradient matrix can be done prior to communication to ensure correct computation of $f$'s parameters on the \nlparty side. \textit{This example-aware and example-specific nature of the communicated gradient matrix makes differential privacy} (which focuses on anonymizing an example's participation in an aggregate function) \textit{inapplicable for this problem} (see also Section~\ref{sec:related-work}).

\subsection{Threat Model and Privacy Quantification}
\label{subsec:threatmodel}
Below we specify several key aspects of our threat model, including the adversary's objective and capabilities, our metric for quantifying privacy loss, and the possible inclusion of side information.

\textbf{Adversary's objective.} At a given moment in time during training (with $f$ and $h$ fixed), since the communicated cut layer gradient $g$ is a deterministic function of $f(X)$ and $y$ (see Section~\ref{subsec:setup}), we consider an adversarial \nlparty whose objective is to recover the \lpartynospace's hidden label $y$ based on the information contained in $g$ for every training example. 

\textbf{Adversary's capability.} We consider an \textit{honest-but-curious} \nlparty which cannot tamper with training by selecting which examples to include in a batch or  sending incorrect features $f(X)$; instead, we assume that the adversary follows the agreed-upon split training procedure while trying to guess the label $y$. This can be viewed as a binary classification problem where the (input, output) distribution is the induced distribution of $(g, y)$. We allow the adversary to use any  binary classifier $q: \Real^d \rightarrow \cbrck{0, 1}$ to guess the labels. This classifier can be represented by a (scoring function $r$, threshold $t$) tuple, where $r: \Real^d \rightarrow \Real$ maps an example's cut layer gradient to a real-valued score and the threshold $t \in \Real$ determines a cut-off so that $q(g) = 1$ if $r(g) > t$ and $q(g) = 0$ if $r(g) \le t$. Moving forward, we use this tuple representation to describe adversarial \nlparty classifiers.

\textbf{Privacy loss quantification.} As we consider binary classification, a natural metric to quantify the performance of an adverary's scoring function $r$ is the AUC of its ROC curve. Denote the unperturbed class-conditional distributions of the cut-layer gradients by $P^{(1)}$ and $P^{(0)}$ for the positive and negative class, respectively. The ROC curve of a scoring function $r$ is a parametric curve $t \mapsto (\textrm{FPR}_r(t), \textrm{TPR}_r(t)) \in [0, 1]^2$ which maps a threshold value $t \in \Real$ to the corresponding (\textit{False Positive Rate}, \textit{True Positive Rate}) tuple of the classifier represented by $(r, t)$, with $\textrm{FPR}_r(t) \coloneqq P^{(0)}(\cbrck{g: r(g) > t})$ and $\textrm{TPR}_r(t) \coloneqq P^{(1)}(\cbrck{g: r(g) > t})$. The AUC of the ROC curve of a scoring function $r$ (denote by $\textrm{AUC}(r)$) can be expressed as an integral:
\vspace{-0.03in}
\begin{equation}
\tag{\textbf{Leak AUC}}
\textrm{AUC}(r) = \int_{\infty}^{-\infty} \textrm{TPR}_r(t)\; d\textrm{FPR}_r(t) \;\; \in [0, 1]
\label{eq:leakauc}
\end{equation}
(more details on this expression see Appendix~\ref{appsubsec:auc_integral}.) We use this value as the privacy loss quantification metric for a specific adversary scoring function $r$ and refer to it as the \textit{leak AUC}. This metric  summarizes the predictive performance of all  classifiers that can be constructed through all threshold values $t$ and removes the need to tune this classifier-specific hyperparameter. The leak AUC being close to $1$ implies that the corresponding scoring function $r$ can very accurately recover the private label, whereas a value of around $0.5$ means $r$ is non-informative in predicting the labels.
{In practice, during batch training, the leak AUC of $r$ can be estimated at every gradient update iteration using the minibatch of cut-layer gradients together with their labels.}

\textbf{Side information.} Among all the scoring functions within our threat model, it is conceivable that only some would recover the hidden labels more accurately than others and thus achieve a higher value of leak AUC. Picking these effective scoring functions would require the \nlparty to have population-level side information specifically regarding the properties of (and distinction between) the positive and negative class's cut-layer gradient distributions. Because we allow the adversary to pick any specific (measurable) scoring function $r$, we implicitly allow for such population-level side information for the adversary. However, we assume the \nlparty has \textit{no example-level side information} that is different example by example. Next we provide two scoring function examples which use population-level side-information to effectively recover the label $y$.

\subsection{Practical Attack Methods}
\begin{figure*}[t!]
\begin{minipage}[t]{0.02\linewidth} \scriptsize (a)
\end{minipage}
\begin{minipage}[t]{0.29\linewidth}
  \centering
    \includegraphics[width=\linewidth]{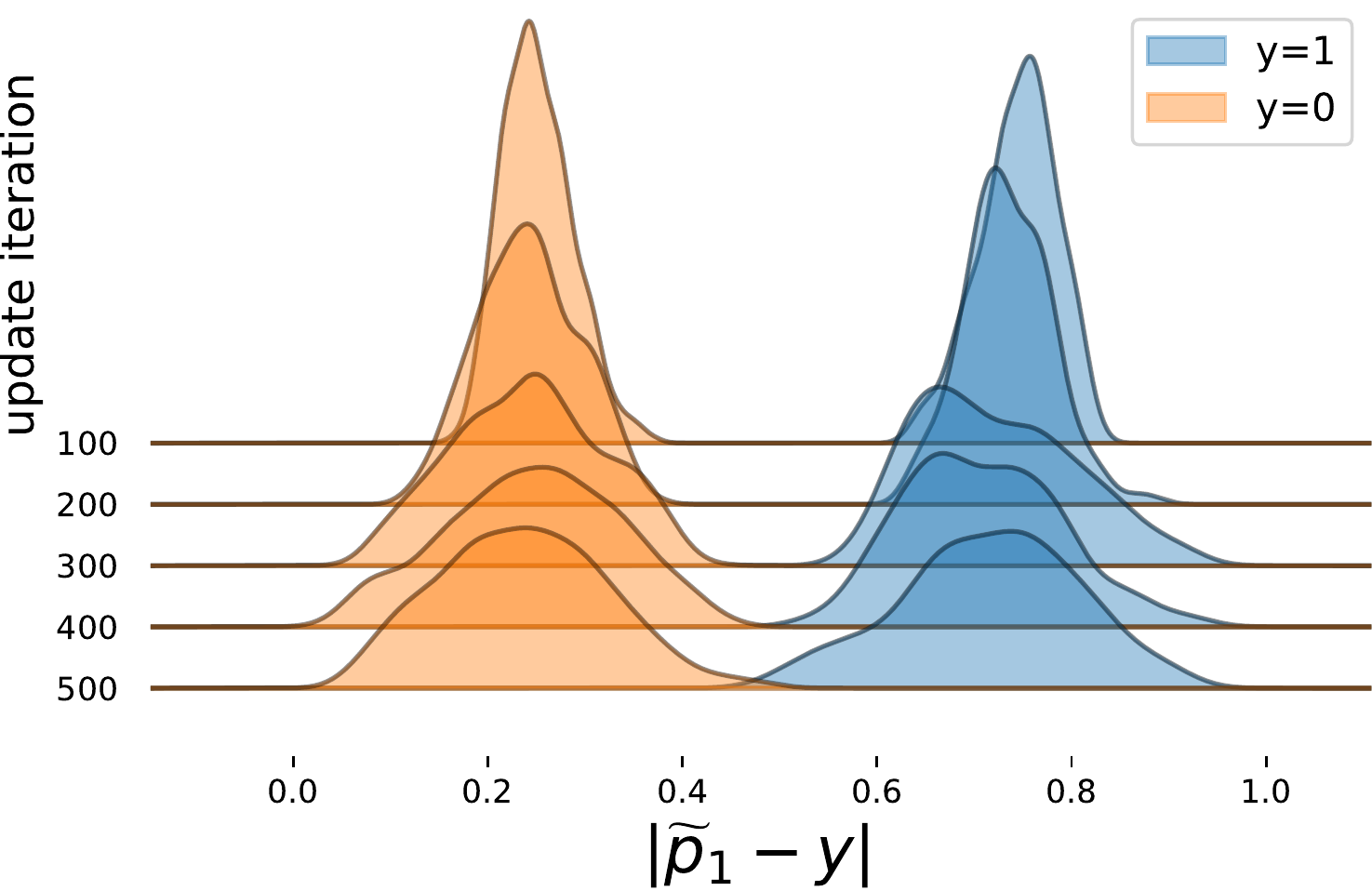}
    \vspace{-2.0em}
\end{minipage}
\begin{minipage}[t]{0.02\linewidth}\scriptsize (b)
\end{minipage}
\begin{minipage}[t]{0.29\linewidth}
\includegraphics[width=\linewidth]{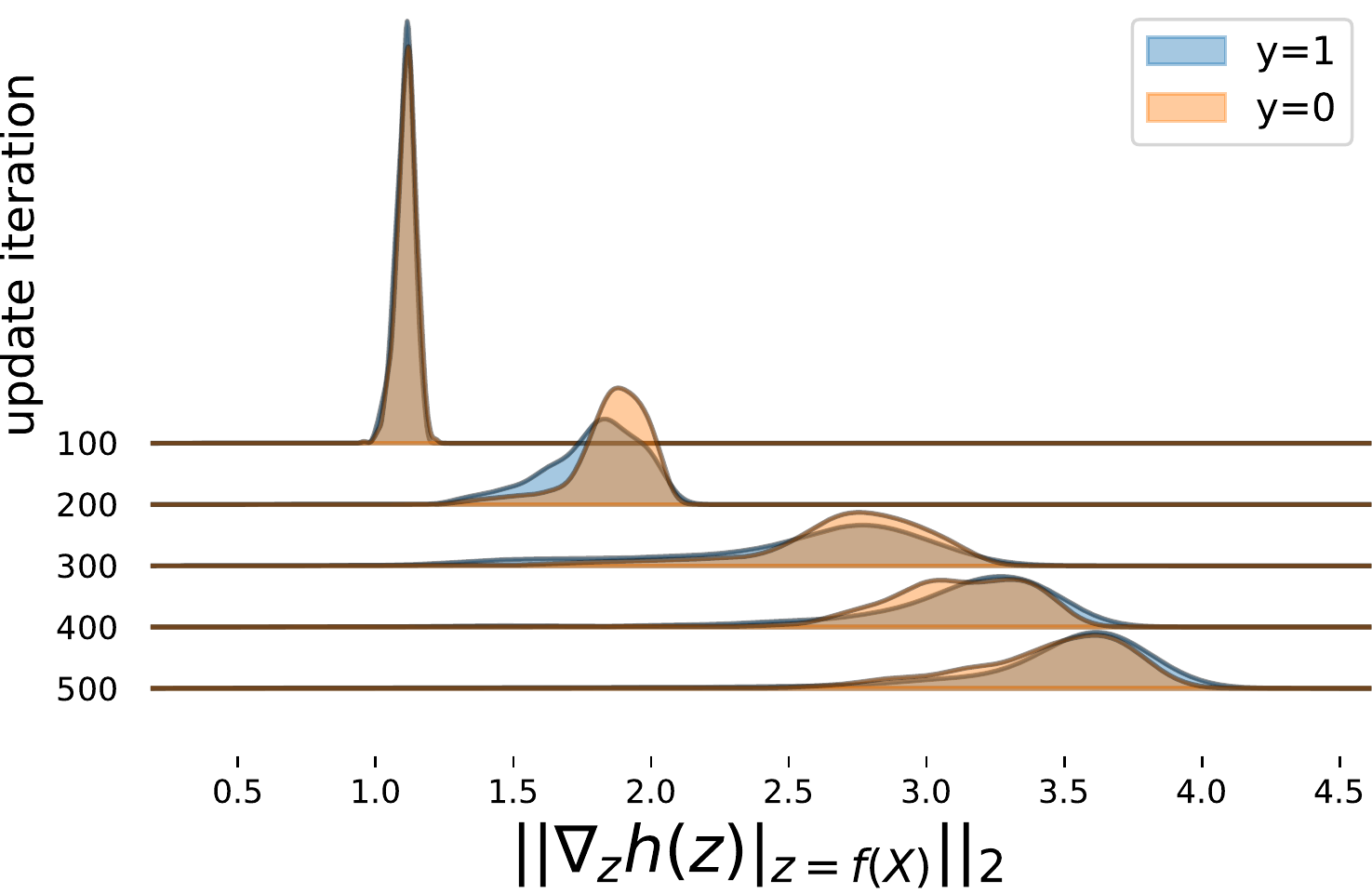}
    \vspace{-2.0em}
\end{minipage}
\begin{minipage}[t]{0.02\linewidth} \scriptsize (c)
\end{minipage}
\begin{minipage}[t]{0.29\linewidth}
    \centering
    \includegraphics[width=\linewidth]{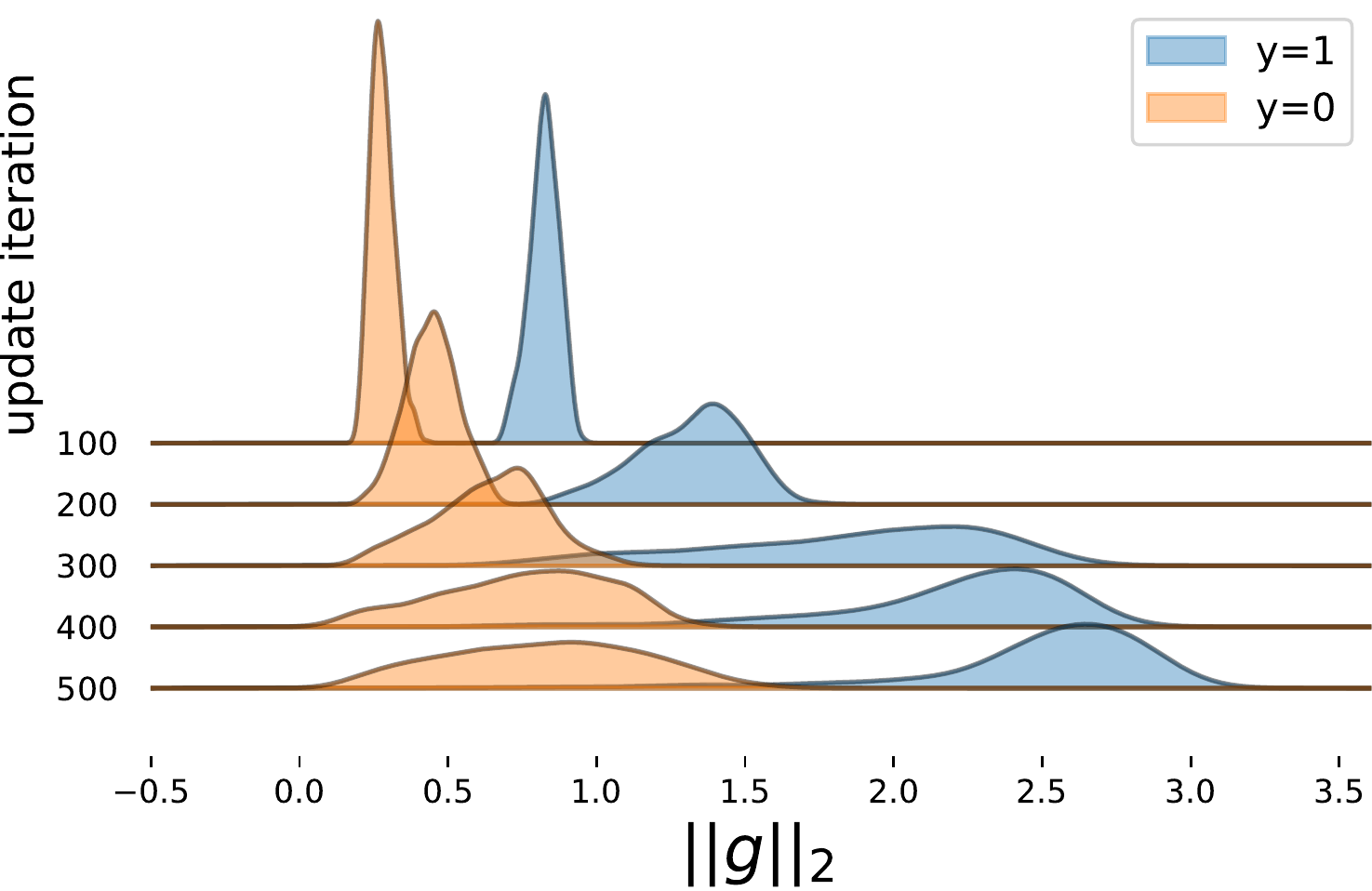}
    \vspace{-2.0em}
  \end{minipage} 

\centering
\begin{minipage}[t]{0.02\linewidth} \scriptsize (d)
\end{minipage}
\begin{minipage}[t]{0.29\linewidth}
  \centering
    \includegraphics[width=\linewidth]{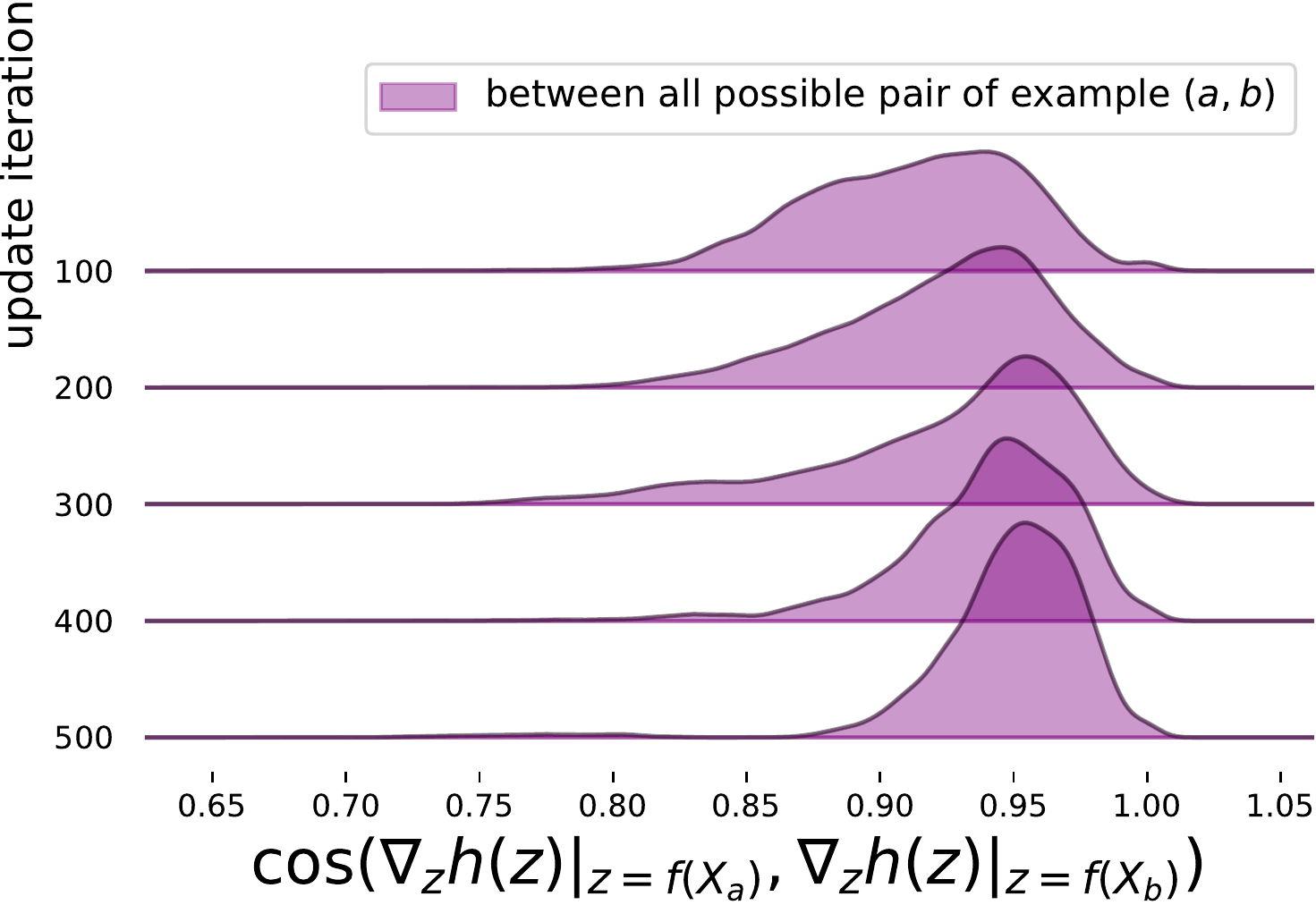}
    \vspace{-2.0em}
\end{minipage}
\begin{minipage}[t]{0.02\linewidth} \scriptsize (e)
\end{minipage}
\begin{minipage}[t]{0.29\linewidth}
    \centering
    \includegraphics[width=\linewidth]{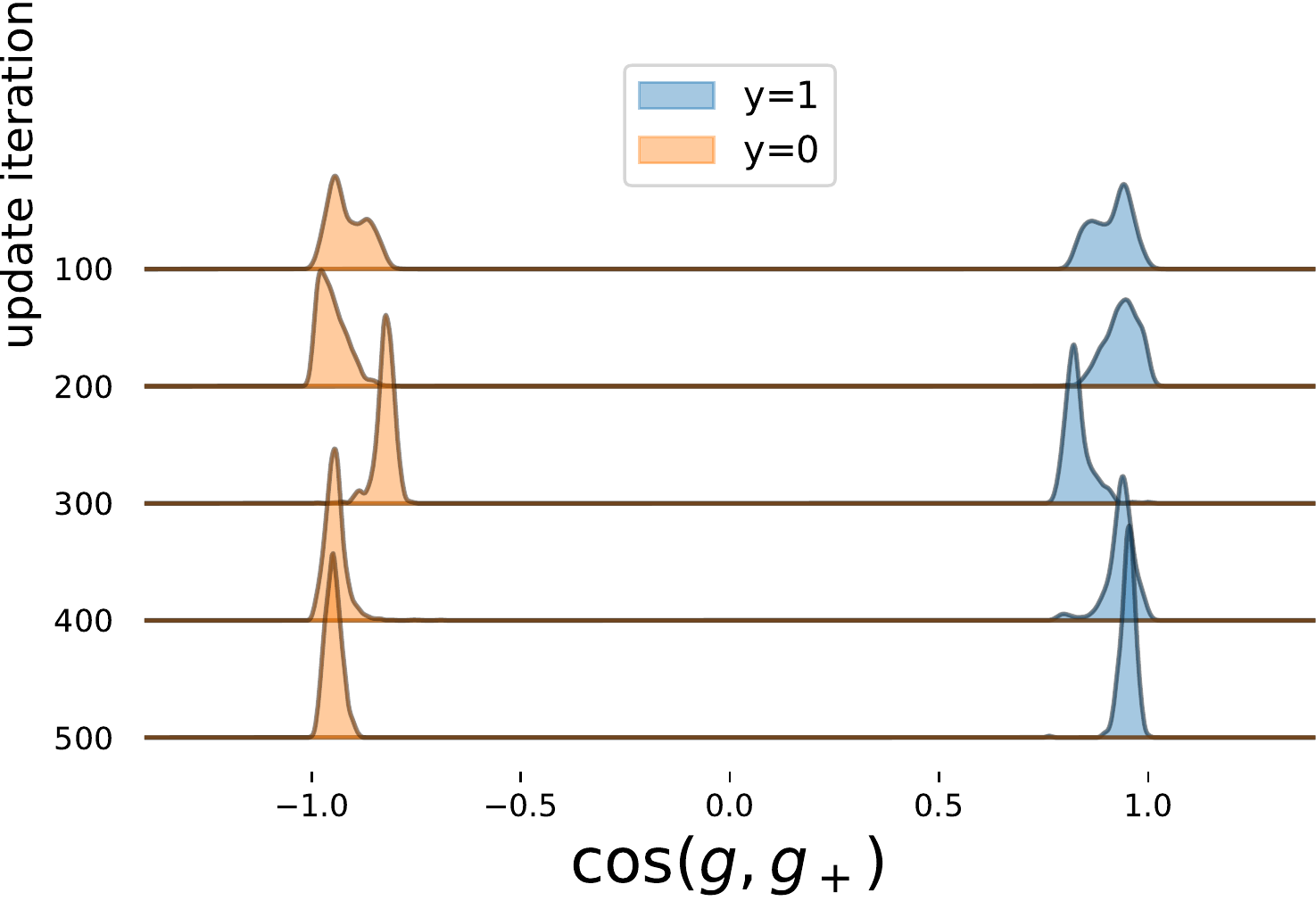}
    \vspace{-2.0em}
\end{minipage}
\vspace{-0.3em}
\caption{
Distributions of quantities discussed in Observations 1-4 after the first $100$, $200$, $300$, $400$, $500$ steps of stochastic gradient descent training of the WDL model on Criteo (see experiments).
 }
 \label{fig:dist}
\end{figure*}

\label{subsec:threat_example}

\textbf{Attack 1: Norm-based scoring function.} Note that $\|g\|_2 = \abs{\widetilde{p}_1 - y} \cdot \|\nabla_a h(a) \vert_{a=\bm{f}(X)}\|_2$. We make the following observations for $\abs{\widetilde{p}_1 - y}$ and $\|\nabla_a h(a) \vert_{a=\bm{f}(X)}\|_2$, which hold true for a wide range of real-world learning problems.
\vspace{-0.1in}
\begin{itemize}
    \item \textbf{Observation 1:} Throughout training, the model tends to be \textit{less confident about a positive example being positive than a negative example being negative.}
    In other words, the confidence gap of a positive example $1 - \widetilde{p}_1 = \abs{\widetilde{p}_1 -y}$ (when $y=1$) is typically larger than the confidence gap of a negative example $ 1 - \widetilde{p}_0 = \widetilde{p}_1 = \abs{\widetilde{p}_1 -y}$ (when $y=0$) (see Figure~\ref{fig:dist}(a)).
    This observation is particularly true for problems like advertising conversion prediction and disease prediction, where there is inherently more ambiguity for the positive class than the negative. For example, in advertising, uninterested users of a product will never click on its ad and convert, but those interested, even after clicking, might make the purchase only a fraction of the time depending on time/money constraints. (See a toy example of this ambiguity in the Appendix~\ref{appsubsec:positive_confidence_lack}.)
    
    \item {\textbf{Observation 2}: Throughout training, the norm of the gradient vector { $\| \nabla_z h(z) \vert_{z=\bm{f}(X)}\|_2$} is on the {same order of magnitude} (has similar distribution) for both the positive and negative examples (Figure~\ref{fig:dist}(b)).} This is natural because $\nabla_a h(a) \vert_{a=\bm{f}(X)}$ is not a function of $y$.
\end{itemize}
\vspace{-0.1in}
As a consequence of Observation 1 and 2, \textit{the gradient norm $\|g\|_2$ of the positive instances are generally larger than that of the negative ones} (Figure~\ref{fig:dist}(c)). Thus, the scoring function $r_n(g) = \|g\|_2$ is a strong predictor of the unseen label $y$. We name the privacy loss (leak AUC) measured against the attack $r_n$ the \textbf{\textit{norm leak AUC}}. In Figure~\ref{fig:dist}(c), the norm leak AUCs are consistently above $0.9$, signaling a high level of label leakage throughout training.

\textbf{Attack 2: Direction-based scoring function.} We now show that the direction of $g$ (in addition to its length) can also leak the label. For a pair of examples, $(X_a, y_a), (X_b, y_b)$, let their respective predicted positive class probability be {\small $\widetilde{p}_{1,a}$, $\widetilde{p}_{1,b}$} and their communicated gradients be $g_a$, $g_b$. Let {\small $\cos: \Real^d \times \Real^d \to \Real$} denote the cosine similarity function {\small $\cos(g_a, g_b) = g_a^T g_b/(\|g_a\|_2\|g_b\|_2)$}. It is easy to see that {\small$\cos(g_a, g_b) = \textrm{sgn}(\widetilde{p}_{1,a}-y_a) \cdot \textrm{sgn}(\widetilde{p}_{1,a}-y_b)\cdot \cos(\nabla_z h(z) \vert_{z=\bm{f}(X_a)}, \nabla_z h(z) \vert_{z=\bm{f}(X_b)})$}, where $\textrm{sgn(x)}$ is the sign function which returns $1$ if $x \ge 0$, and $-1$ if $x < 0$. We highlight two additional observations that can allow us to use cosine similarity to recover the label.
\vspace{-0.1in}
\begin{itemize}
    \item \textbf{Observation 3}: When the examples $a, b$ are of different classes, the term $\textrm{sgn}(\widetilde{p}_{1,a}-y_a) \cdot \textrm{sgn}(\widetilde{p}_{1,a}-y_b) = -1$ is negative. On the other hand, when examples $a$, $b$ are of the same class (both positive/both negative), this product will have a value of $1$ and thus be positive.
    \item \textbf{Observation 4}: Throughout training, for any two examples $a, b$, their gradients of the function $h$ always form an acute angle, \ie $\cos(\nabla_z h(z) \vert_{z=\bm{f}(X_a)}, \nabla_z h(z) \vert_{z=\bm{f}(X_b)}) > 0$ (Figure~\ref{fig:dist}(d)). For neural networks that use monotonically increasing activation functions (such as $\textrm{ReLU}$, sigmoid, $\tanh$), this is caused by the fact that the gradients of these activation functions with respect to its inputs are coordinatewise nonnegative and thus always lie in the first closed hyperorthant.
\end{itemize}
\vspace{-0.1in}

Since $\cos(g_a, g_b)$ is the product of the terms from Observation 3 and 4, we see that \textit{for a given example, all the examples that are of the same class result in a positive cosine similarity, while all opposite class examples result in a negative cosine similarity.} If the problem is class-imbalanced and the \nlparty knows there are fewer positive examples than negative ones, it can thus determine the label of each example: the class is negative if more than half of the examples result in positive cosine similarity; otherwise it is positive. For many practical applications, \textit{the \nlparty may reasonably guess which class has more examples in the dataset a priori without ever seeing any data}---for example, in disease prediction, the percentage of the entire population having a certain disease is almost always much lower than 50\%; in online advertising conversion prediction, the conversion rate (fraction of positive examples) is rarely higher than 30\%. Note that the \textit{\nlparty doesn't need knowledge of the exact sample proportion of each class} for this method to work.

To simplify this attack for evaluation, we consider an even worse oracle scenario where the \nlparty knows the clean gradient of one positive example $g_+$. Unlike the aforementioned practical majority counting attack which needs to first figure out the direction of one positive gradient, this oracle scenario assumes the \nlparty is directly given this information. Thus, any protection method capable of defending this oracle attack would also protect against the more practical one. With $g_+$ given, the direction-based scoring function $r_d$ is simply $r_d(g) = \cos(g, g_+)$. We name the privacy loss (leak AUC) against this oracle attack $r_d$ the \textbf{\textit{cosine leak AUC}}. In practice, we randomly choose a positive class clean gradient from each batch as $g_+$ for evaluation. For iterations in Figure~\ref{fig:dist}(e), the cosine leak AUC all have the highest value of $1$ (complete label leakage).

\section{Label Leakage Protection Methods}
\label{sec:protection}
\newcommand{\noisevec}{\eta}

In this section, we first introduce a heuristic random perturbation approach designed to prevent the practical attacks identified in Section~\ref{subsec:threat_example}. We then propose a theoretically justified method that aims to protect against the \textit{entire class of scoring functions} considered in our threat model  (Section \ref{subsec:threatmodel}).

\vspace{-0.1in}
\subsection{A Heuristic Protection Approach}
\textbf{Random perturbation and the isotropic Gaussian baseline.} To protect against label leakage, the \lparty should ideally communicate  essential information about the gradient without communicating its actual value. Random perturbation methods generally aim to achieve this goal. One obvious consideration for random perturbation is to keep the perturbed gradients unbiased. In other words, suppose $\tilde{g}$ is the perturbed version of an example's true gradient $g$, then we want $\E[\tilde{g} \mid g] = g$. By chain rule and linearity of expectation, this ensures the computed gradients of the \nlpartynospace's parameters $f$ will also be unbiased, a desirable property for stochastic optimization. Among unbiased perturbation methods, a simple approach is to add \textit{iid} isotropic Gaussian noise to every gradient to mix the positive and negative gradient distribution before sending to the \nlpartynospace. 
Although isotropic Gaussian noise is a valid option, it may not be optimal because \textbf{1)} the gradients are vectors but not scalars, so the structure of the noise covariance matrix matters. Isotropic noise might neglect the direction information; \textbf{2)} due to the asymmetry of the positive and negative gradient distribution, the \lparty could add noise with different distributions to each class's gradients.

\textbf{Norm-alignment heuristic.} We now introduce an improved heuristic approach of adding zero-mean Gaussian noise with non-isotropic and example-dependent covariance. [\textbf{Magnitude choice}] As we have seen that $\|g\|_2$ can be different for positive and negative examples and thus leak label information, this heuristic first aims to make the norm of each perturbed gradient indistinguishable from one another. Specifically, we want to match the expected squared $2$-norm of every perturbed gradient in a mini-batch to the largest squared $2$-norm in this batch (denote by {\small $\|g_{\max}\|_2^2$}). [\textbf{Direction choice}] In addition, as we have seen empirically from Figure~\ref{fig:dist}(e), the positive and negative gradients lie close to a one-dimensional line in $\Real^d$, with positive examples pointing in one direction and negative examples in the other. Thus we consider only adding noise (roughly speaking) along ``this line''. More concretely, for a gradient $g_j$ in the batch, we add a zero-mean Gaussian noise vector $\eta_j$ supported only on the one-dimensional space along the line of $g_j$.
In other words, the noise's covariance is the rank-$1$ matrix $\Cov[\eta_j] = \sigma_j^2 g_j g_j^T$. To calculate $\sigma_j$, we aim to match {\small $\E[\|g_j + \eta_j\|_2^2] = \|g_{\max}\|_2^2$}. A simple calculation gives {\small $\sigma_j = \sqrt{\|g_{\max}\|_2^2/\|g_j\|_2^2 - 1}$}. Since we align to the maximum norm, we name this heuristic protection method {\tt max\_norm}.
The advantage of \maxnorm~is that it has no parameter to tune. Unfortunately, it does not have a strong theoretical motivation, cannot flexibly trade-off between model utility and privacy, and may be broken by some unknown attacks.

\vspace{-0.1in}
\subsection{Optimized Perturbation Method: \Marvell}
\vspace{-0.1in}
Motivated by the above issues of \maxnorm, we next study how to achieve a more principled trade-off between model performance (utility) and label protection (privacy). To do so, we directly minimize the worst-case adversarial scoring function's leak AUC under a utility constraint. We name this protection method \Marvell~(opti\textbf{M}ized perturb\textbf{A}tion to p\textbf{R}e\textbf{VE}nt \textbf{L}abel \textbf{L}eakage).

\textbf{Noise perturbation structure.} Due to the distribution difference between the positive and negative class's cut layer gradients, we consider having the \lparty additively perturb the randomly sampled positive $g^{(1)}$ and negative $g^{(0)}$ gradients  with independent zero-mean random noise vectors {\scriptsize$\noisevec^{(1)}$} and {\scriptsize$\noisevec^{(0)}$} with possibly different distributions (denoted by {\scriptsize$D^{(1)}$} and {\scriptsize$D^{(0)}$)}. We use {\scriptsize$\widetilde{P}^{(1)}$} and {\scriptsize$\widetilde{P}^{(0)}$} to denote the induced perturbed positive and negative gradient distributions. Our goal is to find the optimal noise distributions {\scriptsize$D^{(1)}$} and {\scriptsize $D^{(0)}$} by optimizing our privacy objective described below.

\textbf{Privacy protection optimization objective.}
As the adversarial \nlparty in our threat model is allowed to use any measurable scoring function $r$ for label recovery, we aim to protect against \textit{all such scoring functions} by minimizing the privacy loss of the worst case scoring function measured through our leak AUC metric. Formally, our optimization objective is {\small $\min_{D^{(1)}, D^{(0)}} \max_{r} \textrm{AUC}(r)$}.
Here to compute $\textrm{AUC}(r)$, the $\textrm{FPR}_r(t)$ and $\textrm{TPR}_r(t)$ needs to be computed using the perturbed distributions {\scriptsize $\widetilde{P}^{(1)}$} and {\scriptsize $\widetilde{P}^{(0)}$} instead of the unperturbed {\scriptsize $P^{(1)}$} and {\scriptsize $P^{(0)}$} (Section~\ref{subsec:threatmodel}).
Since AUC is difficult to directly optimize, we consider optimizing an upper bound through the following theorem: 

\begin{theorem}
For $0 \le \epsilon < 4$ and any perturbed gradient distributions {\scriptsize$\widetilde{P}^{(1)}$} and {\scriptsize$\widetilde{P}^{(0)}$} that are absolutely continuous with respect to each other,

\qquad \fourquad {\normalfont  $\KL{\widetilde{P}^{(1)}}{\widetilde{P}^{(0)}} + \KL{\widetilde{P}^{(0)}}{\widetilde{P}^{(1)}} \le \epsilon$} \quad implies \quad {\normalfont $\max_r \textrm{AUC}(r) \le \frac{1}{2} + \frac{\sqrt{\epsilon}}{2} - \frac{\epsilon}{8}$}.
\label{thm:AUCupperbound}
\end{theorem}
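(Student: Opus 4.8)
The plan is to bound $\max_r \mathrm{AUC}(r)$ first by a total-variation quantity between the two perturbed class-conditional distributions, and then relate that total variation to the symmetrized KL divergence via Pinsker-type inequalities. The key observation is that for any scoring function $r$, the ROC curve $t \mapsto (\mathrm{FPR}_r(t), \mathrm{TPR}_r(t))$ lives in $[0,1]^2$, and at any threshold $t$ the gap $\mathrm{TPR}_r(t) - \mathrm{FPR}_r(t) = \widetilde{P}^{(1)}(\{g : r(g) > t\}) - \widetilde{P}^{(0)}(\{g : r(g) > t\})$ is at most $\mathrm{TV}(\widetilde{P}^{(1)}, \widetilde{P}^{(0)})$ in absolute value, since the total variation distance is the supremum of $|\widetilde{P}^{(1)}(A) - \widetilde{P}^{(0)}(A)|$ over all measurable sets $A$. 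Writing $\delta := \mathrm{TV}(\widetilde{P}^{(1)}, \widetilde{P}^{(0)})$, the entire ROC curve is therefore sandwiched in the band $\{(x,y) \in [0,1]^2 : x \le y \le \min(x + \delta, 1)\}$ (after the standard normalization that makes the curve lie above the diagonal; if a given $r$ produces a curve below the diagonal, replacing $r$ by $-r$ reflects it, and $\mathrm{AUC}(-r) = 1 - \mathrm{AUC}(r)$, so the worst case is attained above the diagonal).

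The next step is purely geometric: maximize $\int_0^1 y\, dx$ over monotone curves confined to that band. The area under the upper envelope of the band is maximized by the curve that jumps as steeply as possible and then saturates — concretely, the region $\{x \le y \le \min(x+\delta, 1)\} $ has area equal to $1 - \tfrac{1}{2}(1-\delta)^2 = \delta + \tfrac{1}{2}(1-\delta)^2\big|_{\text{rearranged}}$; carrying out the elementary integral gives $\max_r \mathrm{AUC}(r) \le \tfrac{1}{2} + \delta - \tfrac{\delta^2}{2}$, where the $\tfrac12$ comes from the triangle below the diagonal and the remaining terms from the strip of width $\delta$ above it. (One should double-check the endpoints: at $\delta = 0$ this gives $\tfrac12$, as it must; at $\delta = 1$ it gives $1$.) So far we have $\max_r \mathrm{AUC}(r) \le \tfrac12 + \delta - \tfrac{\delta^2}{2}$ with $\delta = \mathrm{TV}(\widetilde{P}^{(1)}, \widetilde{P}^{(0)})$.

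It remains to convert the hypothesis $\KL{\widetilde{P}^{(1)}}{\widetilde{P}^{(0)}} + \KL{\widetilde{P}^{(0)}}{\widetilde{P}^{(1)}} \le \epsilon$ into a bound on $\delta$. By Pinsker's inequality, $\mathrm{TV}(\widetilde{P}^{(1)}, \widetilde{P}^{(0)})^2 \le \tfrac12 \KL{\widetilde{P}^{(1)}}{\widetilde{P}^{(0)}}$ and also $\le \tfrac12 \KL{\widetilde{P}^{(0)}}{\widetilde{P}^{(1)}}$; averaging the two (or just using the smaller KL) yields $\delta^2 \le \tfrac12 \cdot \tfrac{1}{2}\big(\KL{\widetilde{P}^{(1)}}{\widetilde{P}^{(0)}} + \KL{\widetilde{P}^{(0)}}{\widetilde{P}^{(1)}}\big) \le \tfrac{\epsilon}{4}$, i.e. $\delta \le \tfrac{\sqrt{\epsilon}}{2}$. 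Since $x \mapsto \tfrac12 + x - \tfrac{x^2}{2}$ is increasing on $[0,1]$ and $\tfrac{\sqrt\epsilon}{2} \le 1$ exactly when $\epsilon \le 4$ (hence the hypothesis $\epsilon < 4$), we may substitute $\delta \le \tfrac{\sqrt\epsilon}{2}$ to obtain $\max_r \mathrm{AUC}(r) \le \tfrac12 + \tfrac{\sqrt\epsilon}{2} - \tfrac{1}{2}\big(\tfrac{\sqrt\epsilon}{2}\big)^2 = \tfrac12 + \tfrac{\sqrt\epsilon}{2} - \tfrac{\epsilon}{8}$, which is the claim.

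The main obstacle I anticipate is making the ROC-band geometry argument fully rigorous at the level of generality required — in particular, handling scoring functions whose ROC curves are not continuous (atoms in the distributions of $r(g)$ under $\widetilde{P}^{(0)}$ or $\widetilde{P}^{(1)}$), justifying the "reflect by $-r$" reduction so that the supremum of AUC is genuinely attained on curves above the diagonal, and confirming that the integral $\int \mathrm{TPR}_r\, d\mathrm{FPR}_r$ as defined in \eqref{eq:leakauc} coincides with the geometric area even for such step curves. A clean way around this is to avoid curve-by-curve reasoning entirely: use the identity $\mathrm{AUC}(r) = \Pr(r(G_1) > r(G_0)) + \tfrac12\Pr(r(G_1) = r(G_0))$ where $G_1 \sim \widetilde{P}^{(1)}, G_0 \sim \widetilde{P}^{(0)}$ are independent, and bound this probability directly — coupling $G_1$ and $G_0$ optimally so they disagree with probability exactly $\delta$, and on the agreement event the two scores are equal, contributing at most the diagonal's worth. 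The TV-to-KL step via Pinsker is entirely standard and should be cited rather than reproved.
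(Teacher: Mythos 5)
Your proposal is correct and follows essentially the same route as the paper's proof: bound $\mathrm{TPR}_r(t)-\mathrm{FPR}_r(t)$ by the total variation distance, bound that by $\sqrt{\epsilon}/2$ via Pinsker (the paper combines the two one-sided Pinsker bounds using Jensen's inequality, you use $\min\le$ average; both land on the same constant), and then compute the area under the envelope $\min(x+\sqrt{\epsilon}/2,\,1)$, which yields $\tfrac12+\tfrac{\sqrt\epsilon}{2}-\tfrac{\epsilon}{8}$. The technical point you flag about discontinuous ROC curves is handled in the paper by working directly with the Riemann--Stieltjes integral $\int \mathrm{TPR}_r(t)\,d\mathrm{FPR}_r(t)$ (splitting at $k=\inf\{t:\mathrm{FPR}_r(t)\le 1-\sqrt{\epsilon}/2\}$), which is exactly your band computation carried out analytically rather than geometrically.
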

From Theorem~\ref{thm:AUCupperbound} (proof in Appendix~\ref{appsubsec:proof_aucupperbound}), we see that as long as the sum KL divergence is below $4$, the smaller sumKL is, the smaller $\max_r \textrm{AUC}(r)$ is. ($1/2 + \sqrt{\epsilon}/2 - \epsilon/8$ decreases as $\epsilon$ decreases.) Thus we can instead minimize the sum KL divergence between the perturbed gradient distributions:

\begin{align}
 \textstyle
    \textrm{sumKL}^*:=\min_{D^{(1)}, D^{(0)}} \KL{\widetilde{P}^{(1)}}{\widetilde{P}^{(0)}} + \KL{\widetilde{P}^{(0)}}{\widetilde{P}^{(1)}}.
    \label{eq:protection_objective_sumKL}
\end{align}

\textbf{Utility constraint.} In an extreme case, we could add infinite noise to both the negative and positive gradients. This would minimize \eqref{eq:protection_objective_sumKL} optimally to $0$ and make the worst case leak AUC $0.5$, which is equivalent to a random guess. However, stochastic gradient descent cannot converge under infinitely large noise, so it is necessary to control the variance of the added noise. We thus introduce the noise power constraint: {\small $p \cdot \tr(\Cov[\noisevec^{(1)}]) + (1-p)\cdot\tr(\Cov[\noisevec^{(0)}]) \le \; P$}, where $p$ is the fraction of positive examples (already known to the \lpartynospace); {\scriptsize $\tr(\Cov[\noisevec^{(i)}])$} denotes the trace of the covariance matrix of the random noise $\noisevec^{(i)}$; and the upper bound $P$ is a tunable hyperparameter to control the level of noise: larger $P$ would achieve a lower sumKL and 
thus lower worst-case leak AUC and better \textbf{privacy}; however, it would also add more noise to the gradients, leading to slower optimization convergence and possibly worse model \textbf{utility}.
We weight each class's noise level {\scriptsize $\tr(\Cov[\noisevec^{(i)}])$} by its example proportion ($p$ or $1-p$) since, from an optimization perspective, we want to equally control every training example's gradient noise.
The constrained optimization problem becomes:
{
\begin{align}
    \min_{D^{(1)}, D^{(0)}} \KL{\widetilde{P}^{(1)}}{\widetilde{P}^{(0)}} + \KL{\widetilde{P}^{(0)}}{\widetilde{P}^{(1)}} \label{eq:protection_problem}
\;\; \textrm{s.t.}\;\; p \cdot \tr(\Cov[\noisevec^{(1)}]) + (1-p)\cdot\tr(\Cov[\noisevec^{(0)}]) \le \; P.
\end{align}
}
\normalsize
\vspace{-0.1in}

\textbf{Optimizing the objective in practice.}
\label{sec: optimization} To solve the optimization problem we first introduce some modelling assumptions. We assume that the unperturbed gradient of each class follows a Gaussian distribution: {\small $g^{(1)} \sim \mathcal{N}(\bar{g}^{(1)}, vI_{d \times d})$} and {\small $g^{(0)} \sim \mathcal{N}(\bar{g}^{(0)}, uI_{d \times d})$}. Despite this being an approximation, as we see later in Section~\ref{sec:experiment}, it can achieve strong protection quality against our identified attacks. In addition, it makes the optimization easier (see below) and provides us with insight on the optimal noise structure. We also search for perturbation distributions that are Gaussian: {\small $D^{(1)} = \mathcal{N}(0, \Sigma_1)$} and {\small $D^{(0)} = \mathcal{N}(0, \Sigma_0)$} with
commuting covariance matrices: {\small $\Sigma_1 \Sigma_0 = \Sigma_0 \Sigma_1$}. The commutative requirement slightly restricts our search space but also makes the optimization problem more tractable. Our goal is to solve for the optimal noise structure, \ie the positive semidefinite covariance matrices $\Sigma_0$, $\Sigma_1$. Let $\Delta g \coloneqq \bar{g}^{(1)} - \bar{g}^{(0)}$ denote the difference between the positive and negative gradient's mean vectors. We now have the following theorem (proof and interpretation in Appendix~\ref{appsubsec:proof_optimalsoln}):
\begin{theorem}
\label{thm:optimal_solution}
    The optimal $\Sigma_1^*$ and $\Sigma_0^*$ to \eqref{eq:protection_problem} with the above assumptions have the form:
    \begin{align}
        \Sigma_1^* = \frac{\Lam{1}{1} - \Lam{2}{1}}{\norm{\Delta g}_2^2} (\Delta g)(\Delta g)^{\top} + \Lam{2}{1}I_d, \quad
        \Sigma_0^* = \frac{\Lam{1}{0} - \Lam{2}{0}}{\norm{\Delta g}_2^2} (\Delta g)(\Delta g)^{\top} + \Lam{2}{0}I_d,
        \label{eq:solution}
    \end{align}
    \vspace{-0.05in}
    where $(\Lam{1}{0}, \Lam{2}{0}, \Lam{1}{1}, \Lam{2}{1})$ is the solution to the following 4-variable optimization problem:
    \begin{align*}
    \min_{\lambda_1^{(0)}, \lambda_1^{(1)}, \lambda_2^{(0)}, \lambda_2^{(1)}} \; &(d-1) \frac{\lambda_2^{(0)} + u}{\lambda_2^{(1)} + v} + (d-1) \frac{\lambda_2^{(1)} + v}{\lambda_2^{(0)} + u} + \frac{\lambda_1^{(0)} + u + \|\Delta g\|_2^2}{\lambda_1^{(1)} + v} + \frac{ \lambda_1^{(1)} + v + \|\Delta g\|_2^2}{\lambda_1^{(0)} + u}\\
   \textrm{s.t.} & \quad p \lambda_1^{(1)} + p(d-1) \lambda_2^{(1)}+ (1-p) \lambda_1^{(0)}
   + (1-p)(d-1)\lambda_2^{(0)} \le \; P,\nonumber \\
     \quad & \quad \qquad -\lambda_1^{(1)} \le \; 0,\;\;\; -\lambda_1^{(0)} \le \; 0,  \;\;\; -\lambda_2^{(1)} \le \; 0, \;\;\; - \lambda_2^{(0)} \le \; 0, \\   & \eightquad \lambda_2^{(1)} - \lambda_1^{(1)} \le \; 0, \;\;\; \lambda_2^{(0)} - \lambda_1^{(0)} \le \; 0 \nonumber \\
  \end{align*}
\end{theorem}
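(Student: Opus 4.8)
\textbf{Proof proposal for Theorem~\ref{thm:optimal_solution}.}
The plan is to convert the matrix problem \eqref{eq:protection_problem} into a scalar eigenvalue problem, use commutativity to decouple directions, and then argue structurally that (i) $\Delta g$ must be a common eigenvector of the two perturbed covariances and (ii) the block orthogonal to $\Delta g$ must be isotropic. \textbf{Step 1 (Gaussian symmetrized KL).} Under the modeling assumptions the perturbed distributions are $\widetilde P^{(1)}=\mathcal N(\bar g^{(1)},A)$ and $\widetilde P^{(0)}=\mathcal N(\bar g^{(0)},B)$ with $A:=vI_d+\Sigma_1$ and $B:=uI_d+\Sigma_0$. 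Plugging the closed-form Gaussian KL into the objective, the $\log\det$ terms cancel in the symmetrized sum, leaving (up to the additive constant $-2d$ and the factor $\tfrac12$, neither of which affects the argmin)
\[
  \tr(B^{-1}A)+\tr(A^{-1}B)+(\Delta g)^{\top}\bigl(A^{-1}+B^{-1}\bigr)(\Delta g).
\]
Since $\Sigma_1\Sigma_0=\Sigma_0\Sigma_1$, the matrices $A,B$ commute and share an orthonormal eigenbasis $w_1,\dots,w_d$; write $Aw_i=a_iw_i$, $Bw_i=b_iw_i$ with $a_i\ge v$, $b_i\ge u$ (the lower bounds come from $\Sigma_1,\Sigma_0\succeq 0$). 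With $c_i:=(w_i^{\top}\Delta g)^2\ge 0$ (so $\sum_i c_i=\norm{\Delta g}_2^2$), the objective becomes $\sum_i(a_i/b_i+b_i/a_i)+\sum_i c_i(1/a_i+1/b_i)$ and the power constraint becomes $p\sum_i(a_i-v)+(1-p)\sum_i(b_i-u)\le P$.

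\textbf{Step 2 ($\Delta g$ is a common eigenvector, WLOG).} The constraint and the first sum depend only on the multiset of eigenvalue pairs $\{(a_i,b_i)\}$, not on the orientation of the eigenbasis; only $\sum_i c_i(1/a_i+1/b_i)$ does. Since $\sum_i c_i=\norm{\Delta g}_2^2$ is fixed, $\sum_i c_i(1/a_i+1/b_i)\ge \norm{\Delta g}_2^2\min_i(1/a_i+1/b_i)$, with equality exactly when all the mass of $\Delta g$ lies along the eigenvector attaining the minimum. So, given any feasible $(\Sigma_1,\Sigma_0)$, rotating the eigenbasis to make $\Delta g/\norm{\Delta g}_2$ that eigenvector preserves feasibility and does not increase the objective; hence we may assume $\Delta g\parallel w_1$, i.e.\ $c_1=\norm{\Delta g}_2^2$ and $c_i=0$ for $i\ge 2$. \textbf{Step 3 (the orthogonal block is isotropic).} Substitute $a_i=e^{s_i}$, $b_i=e^{t_i}$. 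Then each term $a_i/b_i+b_i/a_i=2\cosh(s_i-t_i)$ is convex in $(s_i,t_i)$ -- note $a/b+b/a$ is \emph{not} jointly convex in $(a,b)$, which is why this change of variables is needed -- $c_1(1/a_1+1/b_1)=\norm{\Delta g}_2^2(e^{-s_1}+e^{-t_1})$ is convex, the box constraints $s_i\ge\log v$, $t_i\ge\log u$ are convex, and the budget $p\sum_i(e^{s_i}-v)+(1-p)\sum_i(e^{t_i}-u)\le P$ is convex. The resulting program is convex and symmetric under permutations of the indices $\{2,\dots,d\}$, so averaging an optimal solution over this permutation group (Jensen) gives an optimal solution with $a_2=\cdots=a_d$ and $b_2=\cdots=b_d$.

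\textbf{Step 4 (assemble).} Set $\lambda_1^{(1)}:=a_1-v$, $\lambda_1^{(0)}:=b_1-u$, $\lambda_2^{(1)}:=a_2-v$, $\lambda_2^{(0)}:=b_2-u$. Nonnegativity of these four quantities is exactly $\Sigma_1,\Sigma_0\succeq 0$; the inequalities $\lambda_2^{(i)}\le\lambda_1^{(i)}$ encode that the signal direction $w_1=\Delta g/\norm{\Delta g}_2$ receives at least as much noise as an orthogonal one, which is without loss of optimality (it follows from the KKT conditions of the convex program of Step~3, and intuitively because draining noise from the signal direction only hurts). Since $\Sigma_1=A-vI_d=(a_1-a_2)w_1w_1^{\top}+(a_2-v)I_d$, substituting $w_1w_1^{\top}=(\Delta g)(\Delta g)^{\top}/\norm{\Delta g}_2^2$ yields the stated rank-one-plus-isotropic form of $\Sigma_1^*$, and likewise $\Sigma_0^*$. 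Substituting the symmetric eigenvalues into the objective and constraint of Step~1 reproduces the $4$-variable program: each of the $d-1$ directions orthogonal to $\Delta g$ contributes $a_2/b_2+b_2/a_2=\tfrac{\lambda_2^{(0)}+u}{\lambda_2^{(1)}+v}+\tfrac{\lambda_2^{(1)}+v}{\lambda_2^{(0)}+u}$, and the $\Delta g$ direction contributes $a_1/b_1+b_1/a_1+\norm{\Delta g}_2^2(1/a_1+1/b_1)=\tfrac{\lambda_1^{(0)}+u+\norm{\Delta g}_2^2}{\lambda_1^{(1)}+v}+\tfrac{\lambda_1^{(1)}+v+\norm{\Delta g}_2^2}{\lambda_1^{(0)}+u}$, while the constraint becomes $p\lambda_1^{(1)}+p(d-1)\lambda_2^{(1)}+(1-p)\lambda_1^{(0)}+(1-p)(d-1)\lambda_2^{(0)}\le P$.

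\textbf{Main obstacle.} The two genuinely structural steps -- Step~2 (that $\Delta g$ should be a common eigenvector) and Step~3 (that the orthogonal block should be isotropic) -- are where the content lies; Step~3 in particular hinges on recognizing that the per-direction objective is convex only after the logarithmic change of variables. The Gaussian-KL algebra of Step~1 and the translation of Step~4 are routine. A secondary technical point is verifying that the constraints $\lambda_2^{(i)}\le\lambda_1^{(i)}$ are non-binding / without loss of optimality rather than an actual restriction of the feasible set.
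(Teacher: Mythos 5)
Your proposal is correct and follows the same skeleton as the paper's proof (Gaussian symmetrized KL in closed form, simultaneous diagonalization from the commutativity constraint, concentrating $\Delta g$ on a single common eigenvector since only the quadratic term depends on the eigenbasis orientation), but the key step establishing that the block orthogonal to $\Delta g$ is isotropic is argued genuinely differently. The paper first shows, by restricting to budget line segments $p\lam{i}{1}+(1-p)\lam{i}{0}=c$, that for each $i\ge 2$ one of $\lam{i}{0},\lam{i}{1}$ vanishes at the optimum (which one depends on the sign of $u-v$), and only then applies Jensen's inequality to the single-variable convex function $m(x)=\frac{u}{x+v}+\frac{x+v}{u}$ to equalize the remaining eigenvalues; your route instead makes the \emph{entire} program jointly convex via the substitution $a_i=e^{s_i},b_i=e^{t_i}$ (turning each ratio pair into $2\cosh(s_i-t_i)$) and then symmetrizes over the permutation group of $\{2,\dots,d\}$. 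Your argument is cleaner in that it avoids the $u\gtrless v$ case split and handles both eigenvalue families at once, but it only yields \emph{existence} of an optimum of the stated form and loses the extra structural fact $\Lam{2}{1}=0$ (if $u<v$) or $\Lam{2}{0}=0$ (if $u\ge v$), which the paper extracts along the way and exploits in its solver to reduce to three variables. One point worth noting: the ordering constraints $\lambda_2^{(i)}\le\lambda_1^{(i)}$, which you flag as needing a KKT verification, are simply imposed in the paper's proof by decreeing the eigenvalues of $\Sigma_1$ and $\Sigma_0$ to be \emph{simultaneously} decreasing in the shared eigenbasis --- an implicit restriction beyond commutativity that the paper does not justify either --- so your treatment of this point is no less rigorous than the original, and is arguably more candid.
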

\vspace{-0.2in}
\textbf{Additional details of} \Marvell. By Theorem~\ref{thm:optimal_solution}, our optimization problem over two positive semidefinite matrices is reduced to a much simpler 4-variable optimization problem. We include a detailed description of how the constants in the problem are estimated in practice and what solver we use in a full description of the \Marvell~algorithm in Appendix~\ref{appsubsec:marvell_details}. Beyond optimization details, it is worth noting how to set the power constraint hyperparameter $P$ in Equation~\ref{eq:protection_problem} in practice. As directly choosing $P$ requires knowledge of the scale of the gradients in the specific application and the scale could also shrink as the optimization converges, we instead express $P = s \norm{\Delta g}_2^2$, and tune for a fixed hyperparameter $s > 0$. This alleviates the need to know the scale of the gradients in advance, and the resulting value of $P$ can also dynamically change throughout training as the distance between the two gradient distributions' mean $\norm{\Delta g}_2$ changes. 


\section{Experiments}
\label{sec:experiment}
In this section, we first describe our experiment setup and then demonstrate the  label protection quality of \Marvell~as well as its privacy-utility  trade-off relative to baseline approaches.

\textbf{Empirical Setup.} We use three real-world binary classification datasets for evaluation:
\citet{criteo} and \cite{avazu}, two online advertising prediction datasets with millions of examples; and
\cite{isic}, a healthcare image dataset for skin cancer prediction.
 All datasets are naturally imbalanced, making the label leakage problem more severe
(see Appendix~\ref{appsubsubsec:dataset_preprocessing} on dataset and preprocessing details). We defer similar results on Avazu to Appendix~\ref{appsubsec:experiment_results} and focus on Criteo and ISIC in this section.
For Criteo, we train a Wide\&Deep model \citep{cheng2016wide} where the \nlparty owns the embedding layers for input features and the first three $128$-unit ReLU activated MLP layers (first half of the deep part) while the \lparty owns the remaining layers of the deep part and the entire wide part of the model\footnote{In this setting, the \lparty will also process input features (through the wide part) just like the \nlpartynospace, further relaxing our formal split learning setup in Section~\ref{sec:leakage}.}.
For ISIC, we train a model with 6 convolutional layers each with 64 channels followed by a 64-unit ReLU MLP layer, and the cut layer is after the fourth convolutional layer. In this case, an example's cut layer feature $f(X)$ and gradient $g$ are both in $\Real^{5 \times 5 \times 64}$. We treat such tensors as vectors in $\Real^{1600}$ to fit into our analysis framework (for additional model architecture and training details see Appendix~\ref{appsubsubsec:model_architecture}, \ref{appsubsubsec:model_training_details}).

\subsection{Label Leakage and \Marvell's Strong and Flexible Protection}
We first evaluate the protection quality of \Marvell~against the norm and cosine attacks discussed in Section \ref{subsec:threat_example}. We also compare against the leakage metrics when no protection is applied (\nonoise). As the results across the three datasets are highly similar, we use ISIC as an example (other datasets see Appendix~\ref{app:subsubsec_leakauc_progression}). We see in Figure~\ref{fig:leakage_trajectory}(a)(b) that unlike \nonoise~where the label information is completely leaked (leak AUC $\approx 1$) throughout training, \Marvell~\textbf{achieves a flexible degree of protection (by varying $s$) against both the norm 2(a) and direction attacks 2(b) on the cut layer gradients and has  strong protection (leak AUC $\approx$ 0.5) at $s=4.0$}. Additionally, it is natural to ask \textit{whether the gradients of layers before the cut layer (on the \nlparty side) can also leak the labels as the \nlparty keeps back propagating towards the first layer}. In Figure~\ref{fig:leakage_trajectory}(c)(d), we compute the leak AUC values when using the \nlpartynospace's first layer activation gradient as inputs to the scoring functions to predict $y$. Without protection, the first layer gradient still leaks the label very consistently. In constrast, \Marvell~still {achieves strong privacy protection at the first layer} ($s = 4.0$) despite the protection being analyzed at the cut layer.
\begin{figure*}[h!]
\centering
\includegraphics[width=\linewidth]{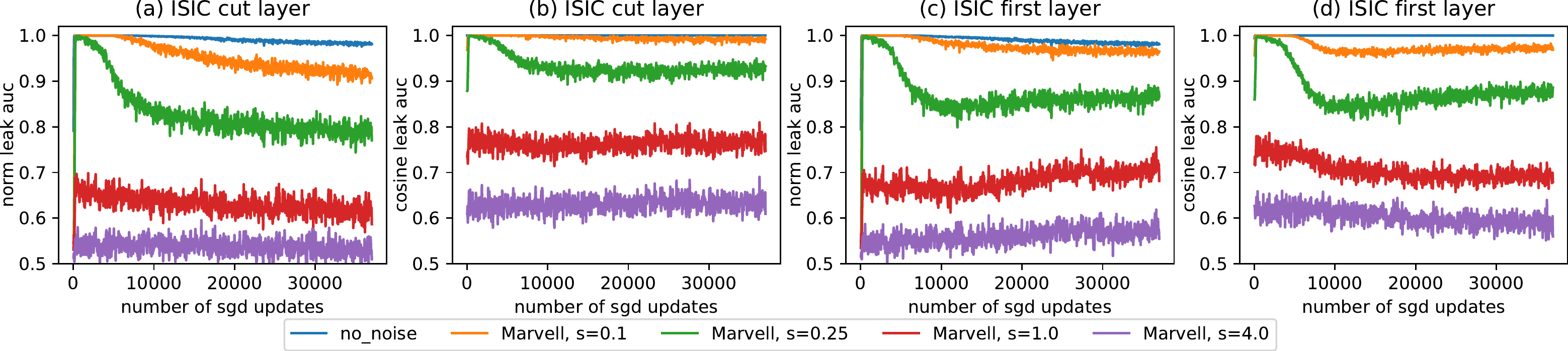}
\caption{Norm and cosine leak AUC (computed every batch) at the cut layer and at the first layer under no protection vs. {\small \Marvell} with different scale hyperparameter $s$ throughout the ISIC training.\vspace{.05in}}
\label{fig:leakage_trajectory} 
\end{figure*}
\subsection{Privacy-Utility Trade-off Comparison}

After showing \Marvell~can provide strong privacy protection against our identified attacks, we now see how well it can preserve utility by comparing its privacy-utility tradeoff against other protection baselines: \nonoise, isotropic Gaussian (\texttt{iso}), and our proposed heuristic \maxnorm. Similar to how we allow \Marvell~to use a power constraint to depend on the current iteration's gradient distribution through $P = s\|\Delta g\|_2^2$, we also allow \iso~to have such type of dependence---specifically, we add {\small $\eta \sim \calN(\bzero, (t/d) \cdot \|g_{\max}\|_2^2 I_{d \times d})$} to every gradient in a batch with $t$ a tunable privacy hyperparameter to be fixed throughout training.
To trace out the complete tradeoff curve for \Marvell~and \iso, we conduct more than 20 training runs for each protection method with a different value of privacy hyperparameter ($s$ for \Marvell, $t$ for \iso) in each run on every dataset. (Note that \nonoise~and \maxnorm~do not have privacy hyperparameters.) 

We present the tradeoffs between privacy (measured through norm and cosine leak AUC at cut layer/first layer) and utility (measured using test loss and test AUC) in Figure~\ref{fig:utility_privacy_tradeoff}. To summarize the leak AUC over a given training run, we pick the 95\% quantile over the batch-computed leak AUCs throughout all training iterations. This quantile is chosen instead of the mean because we want to measure the most-leaked iteration's privacy leakage (highest leak AUC across iterations) to ensure the labels are not leaked at any points during training. $95\%$ quantile is chosen instead of the max ($100\%$) as we want this privacy leak estimate to be robust against randomness of the training process.

\begin{figure}[h!]
    \centering
\begin{minipage}[t]{0.24\linewidth}
  \centering
    \includegraphics[width=0.97\linewidth]{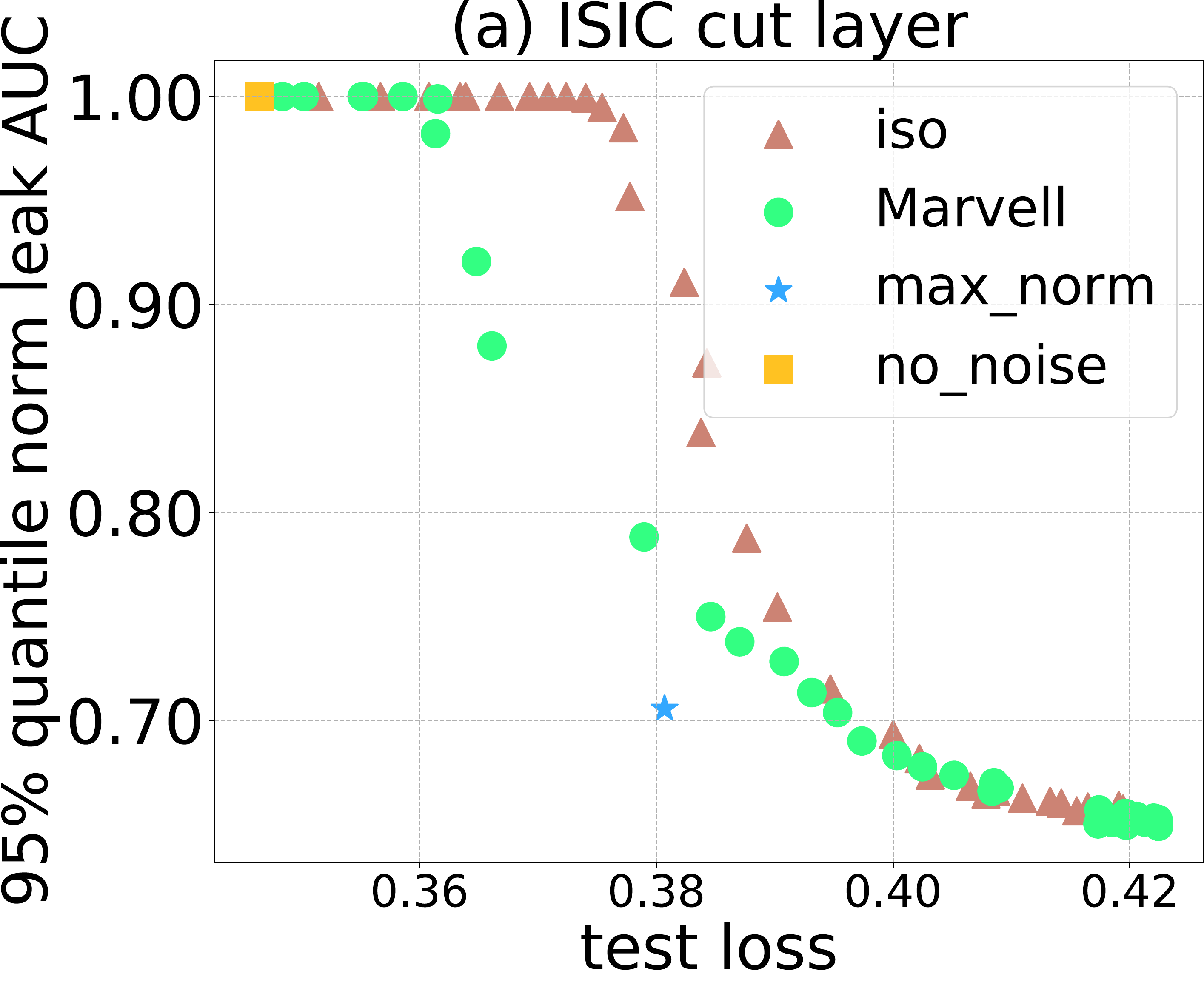}
  \end{minipage}
\begin{minipage}[t]{0.24\linewidth}
  \centering
    \includegraphics[width=0.97\linewidth]{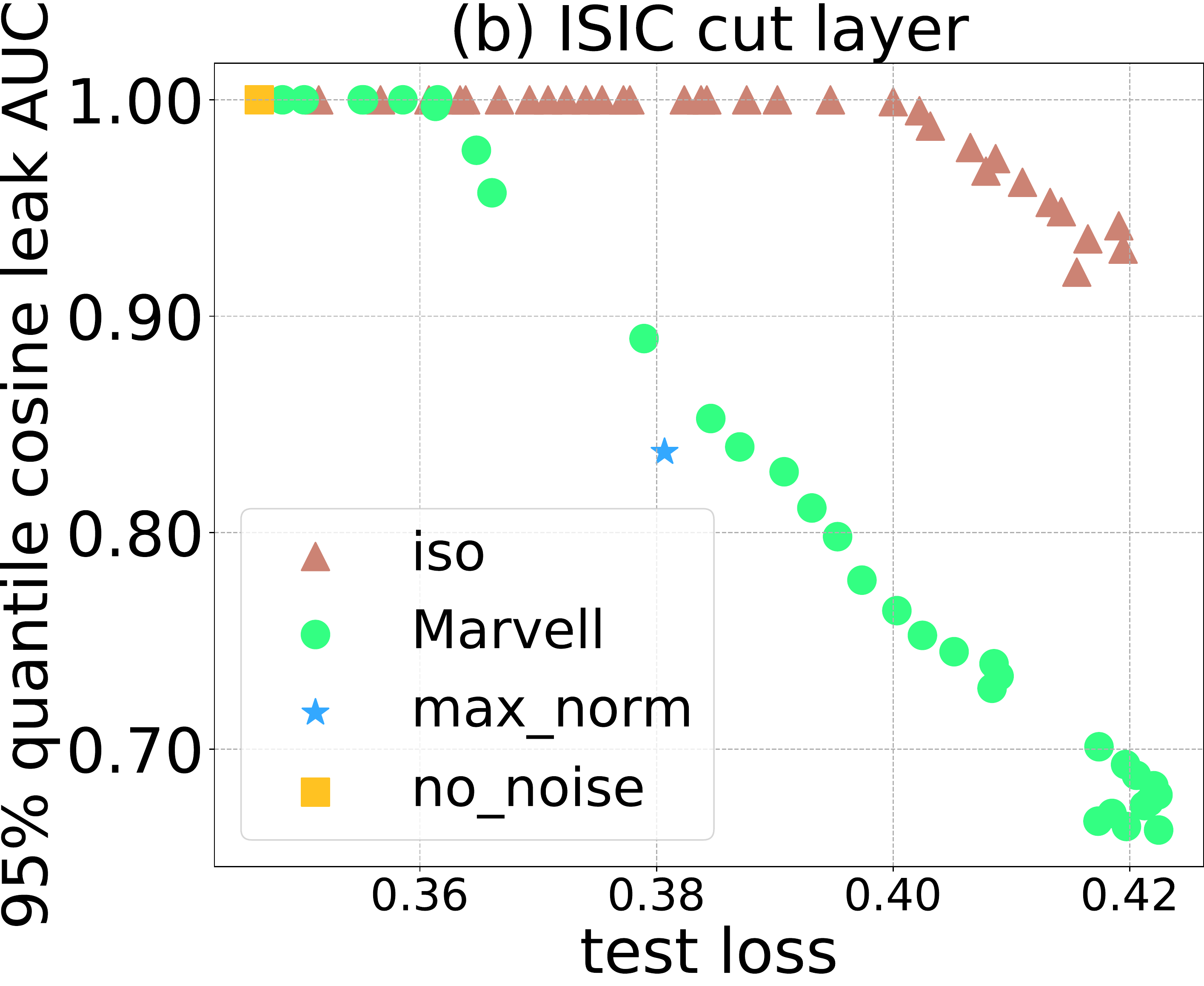}
  \end{minipage}
\begin{minipage}[t]{0.24\linewidth}
  \centering
    \includegraphics[width=0.97\linewidth]{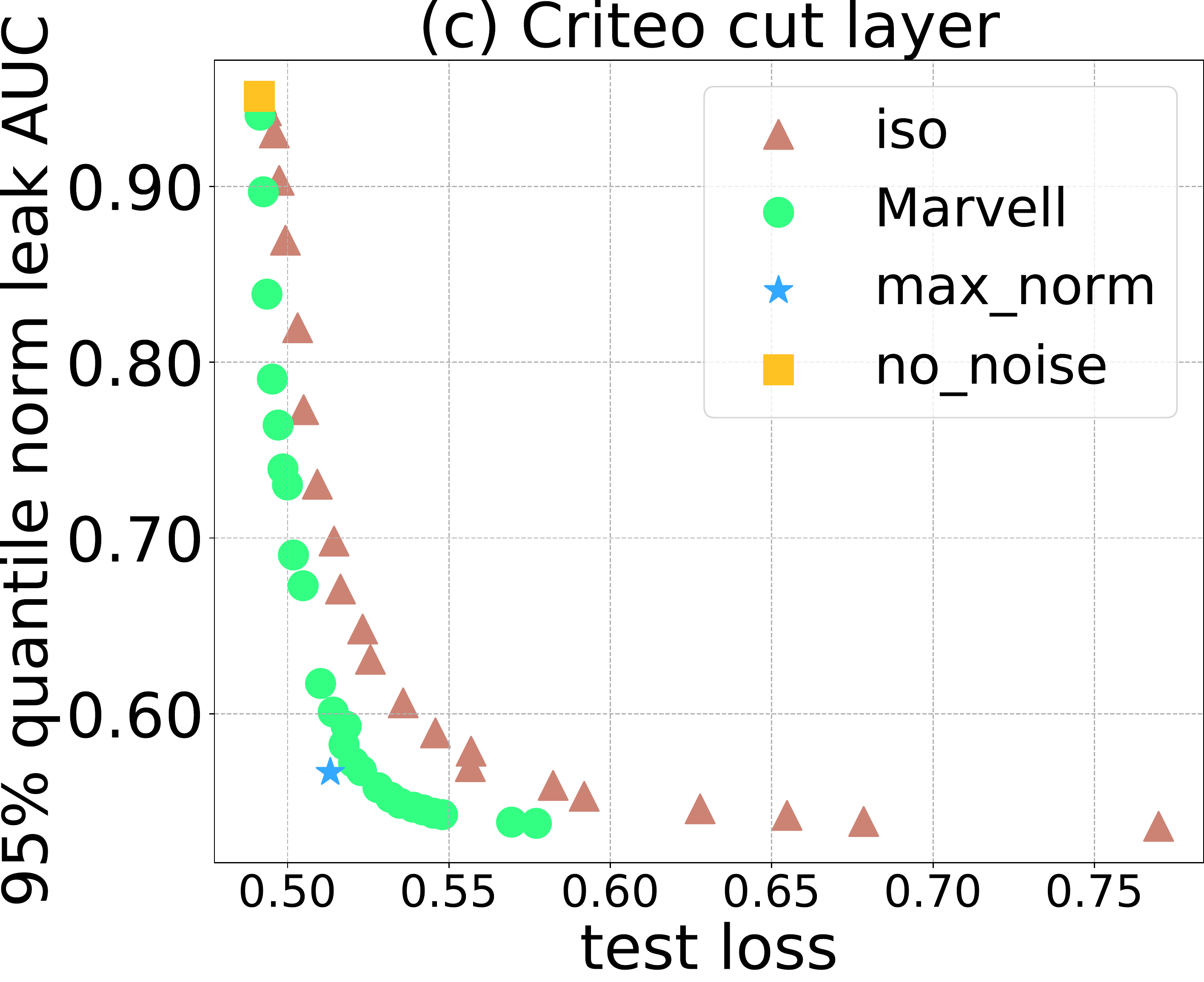}
  \end{minipage}
\begin{minipage}[t]{0.24\linewidth}
  \centering
    \includegraphics[width=0.97\linewidth]{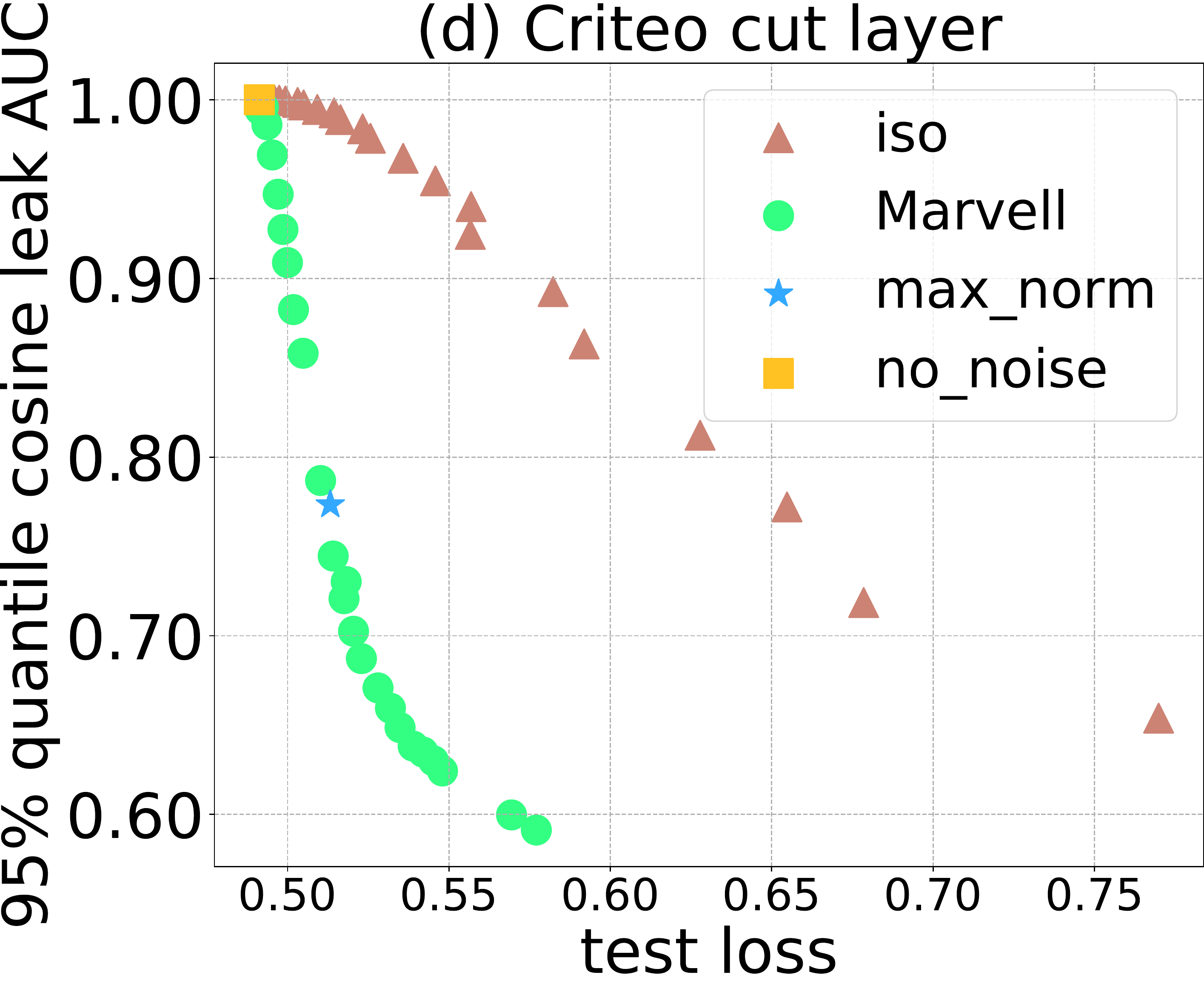}
  \end{minipage}
\vspace{1em}
\begin{minipage}[t]{0.24\linewidth}
  \centering
    \includegraphics[width=0.97\linewidth]{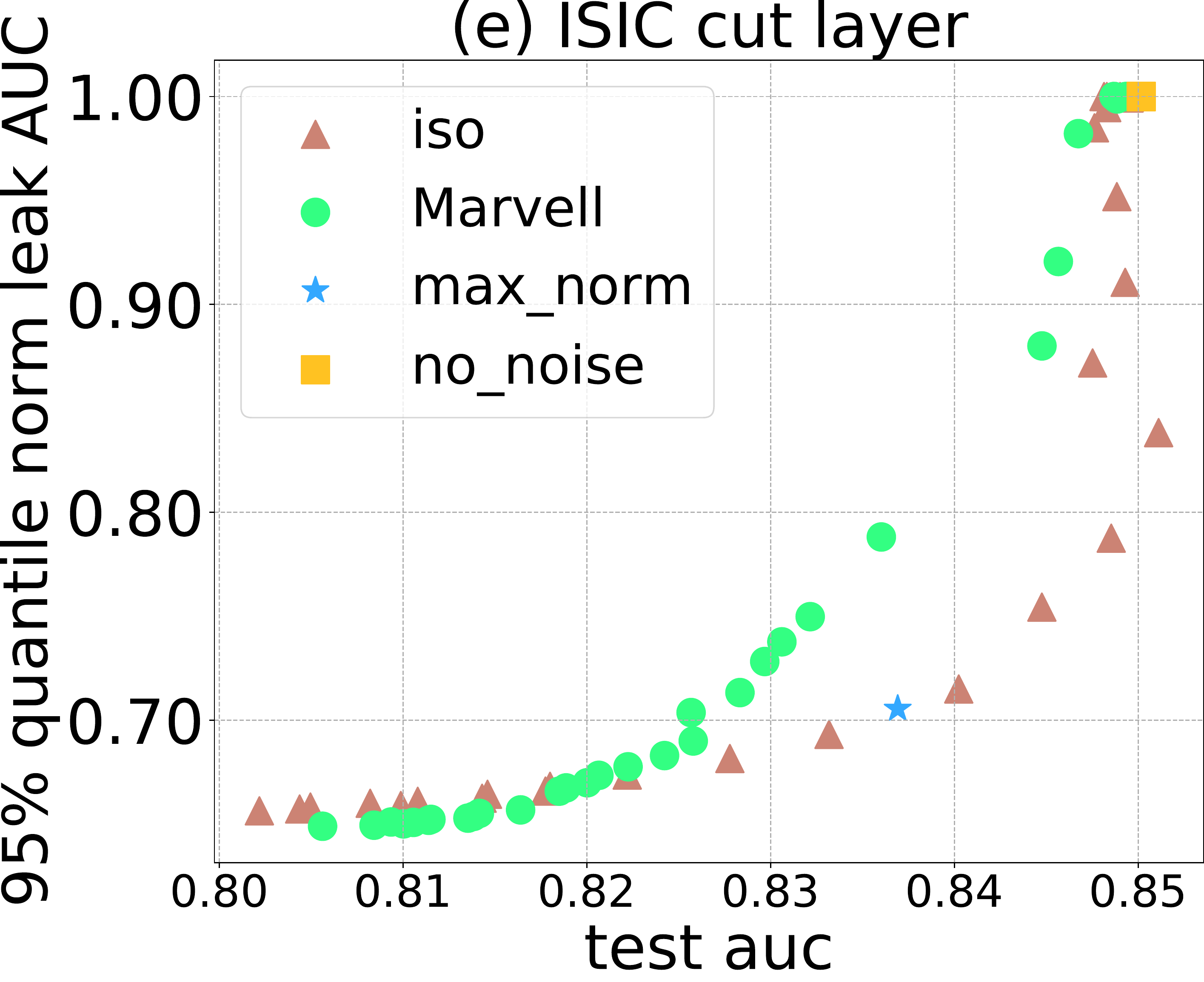}
  \end{minipage}
\begin{minipage}[t]{0.24\linewidth}
  \centering
    \includegraphics[width=0.97\linewidth]{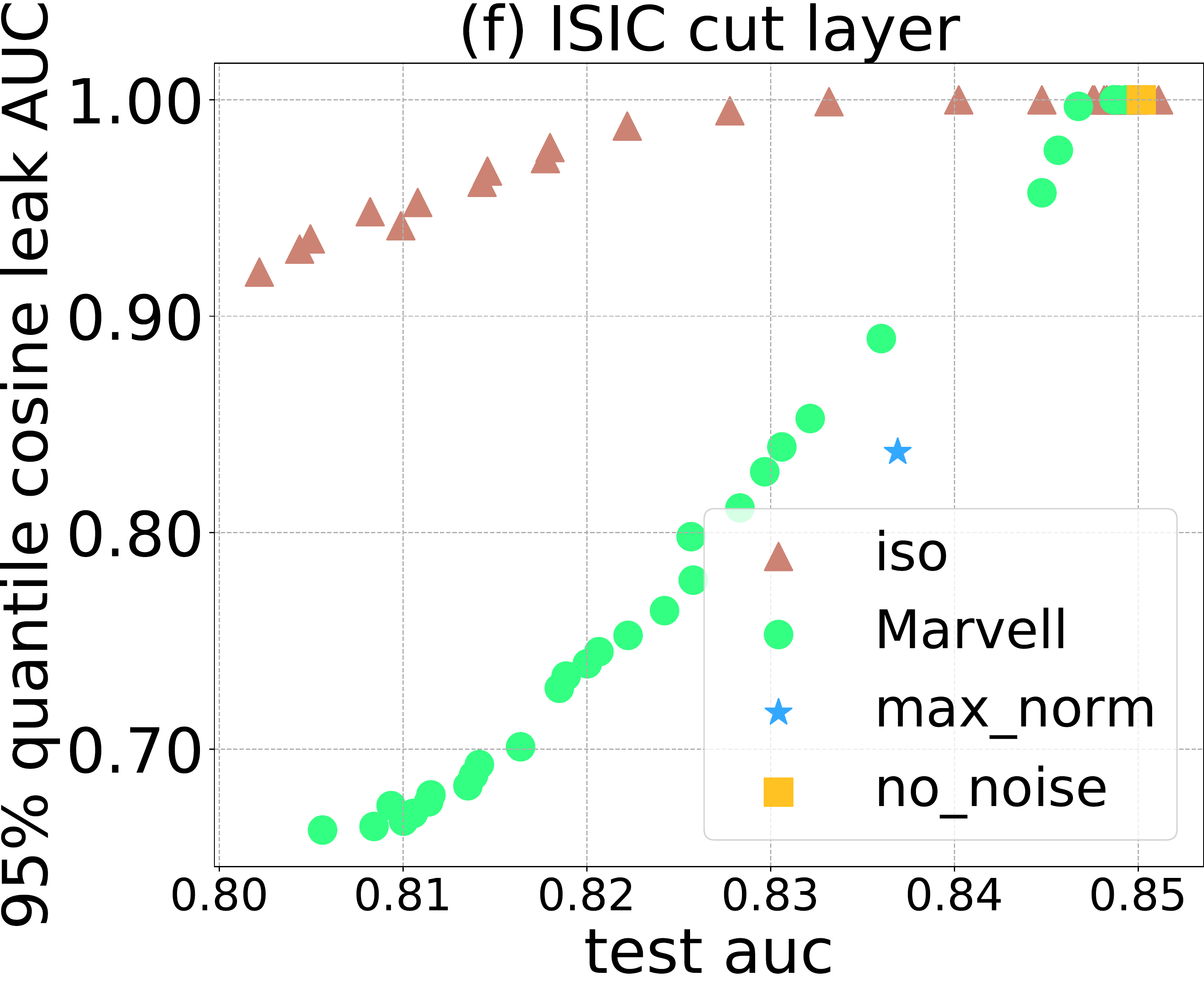}
  \end{minipage}
\begin{minipage}[t]{0.24\linewidth}
  \centering
    \includegraphics[width=0.97\linewidth]{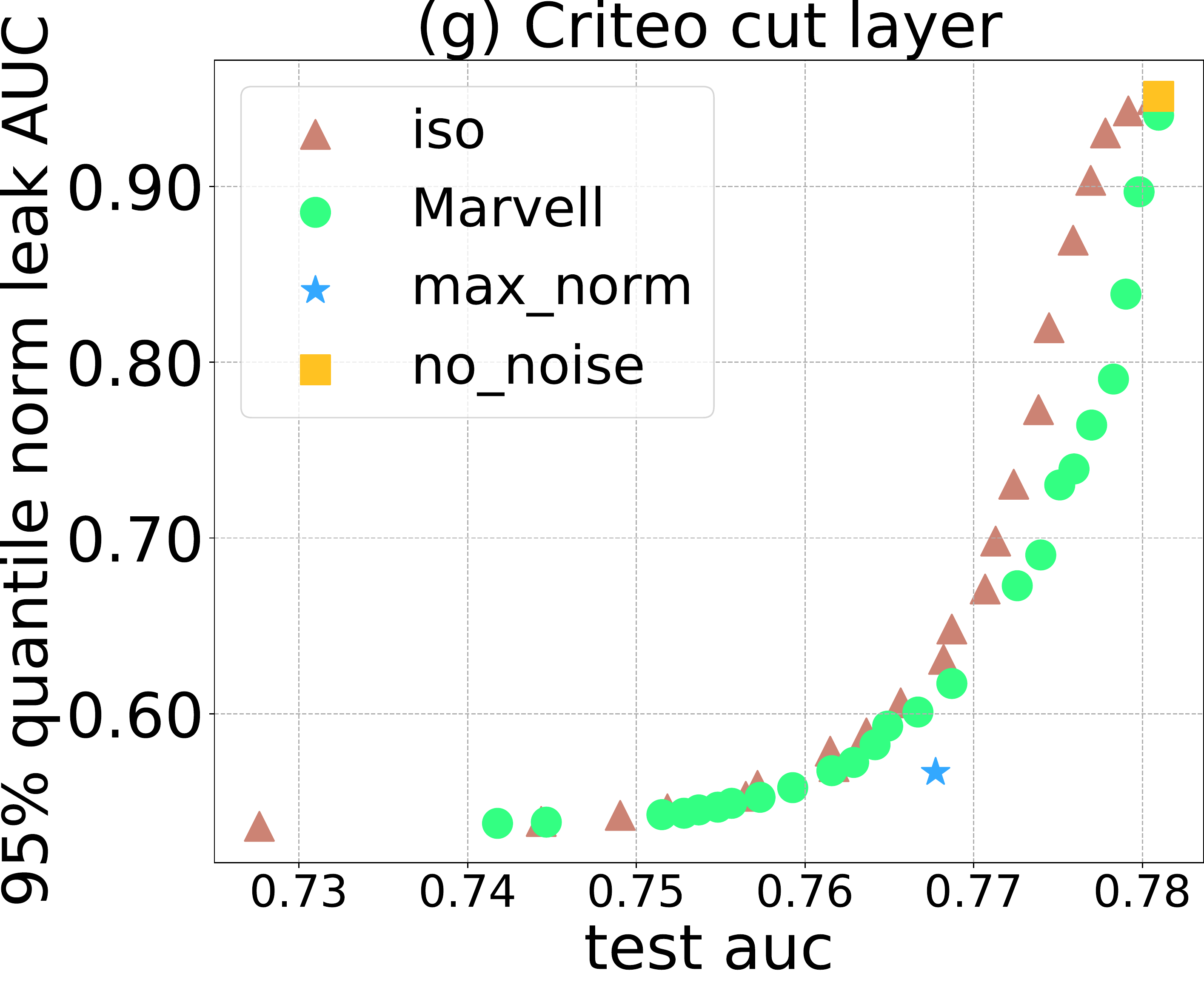}
  \end{minipage}
\begin{minipage}[t]{0.24\linewidth}
  \centering
    \includegraphics[width=0.97\linewidth]{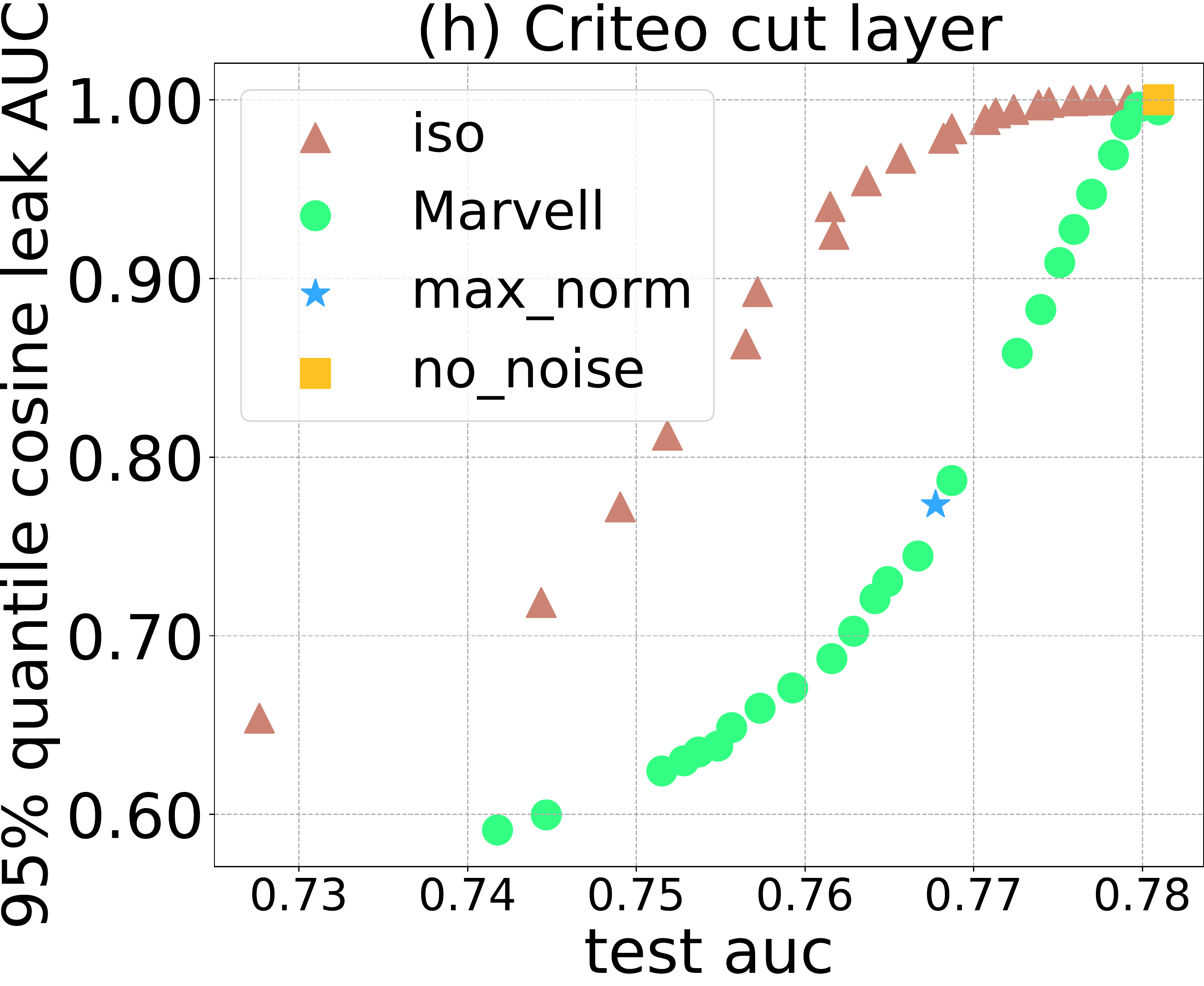}
  \end{minipage}
\vspace{1em}
\begin{minipage}[t]{0.24\linewidth}
  \centering
    \includegraphics[width=0.97\linewidth]{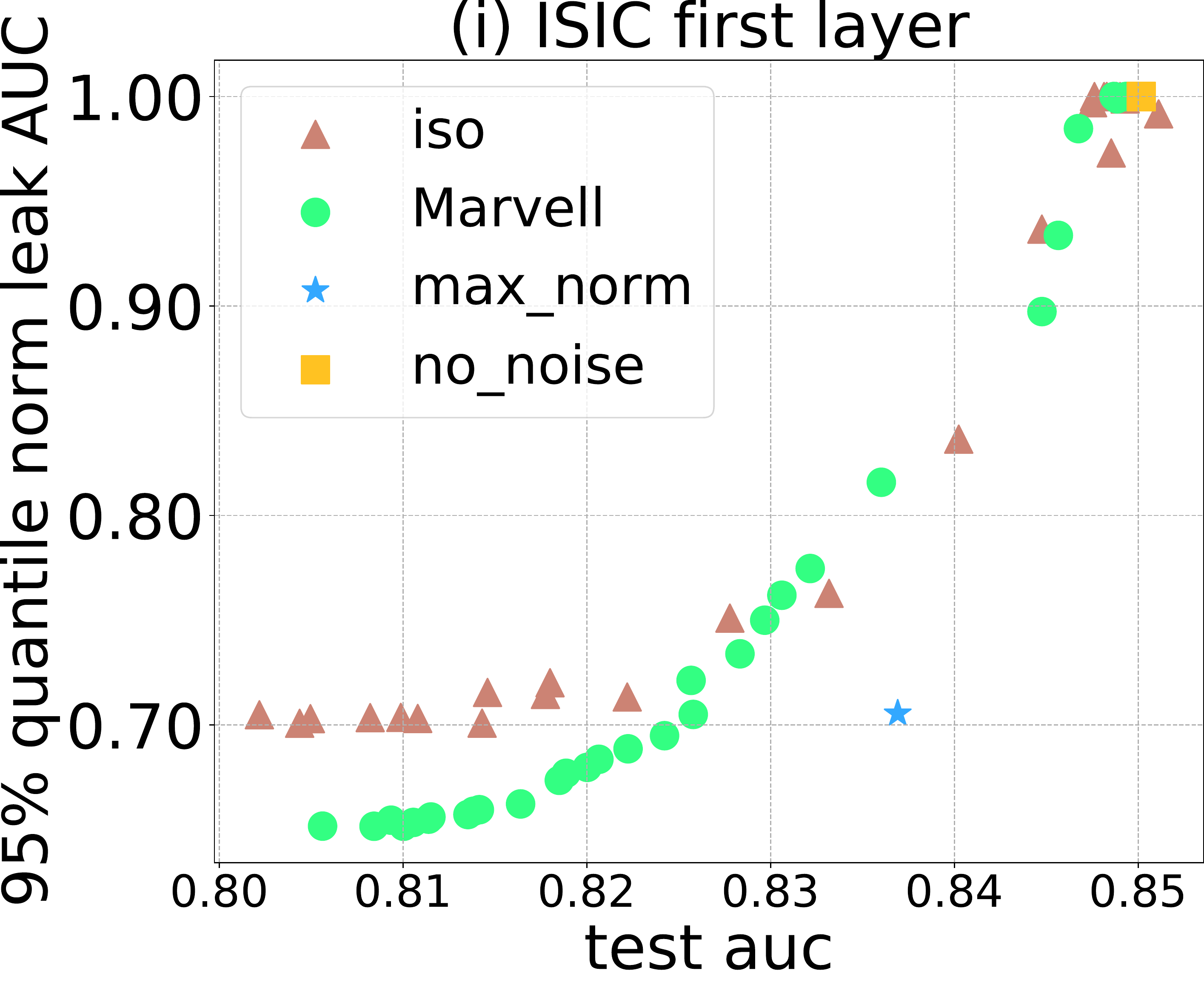}
  \end{minipage}
\begin{minipage}[t]{0.24\linewidth}
  \centering
    \includegraphics[width=0.97\linewidth]{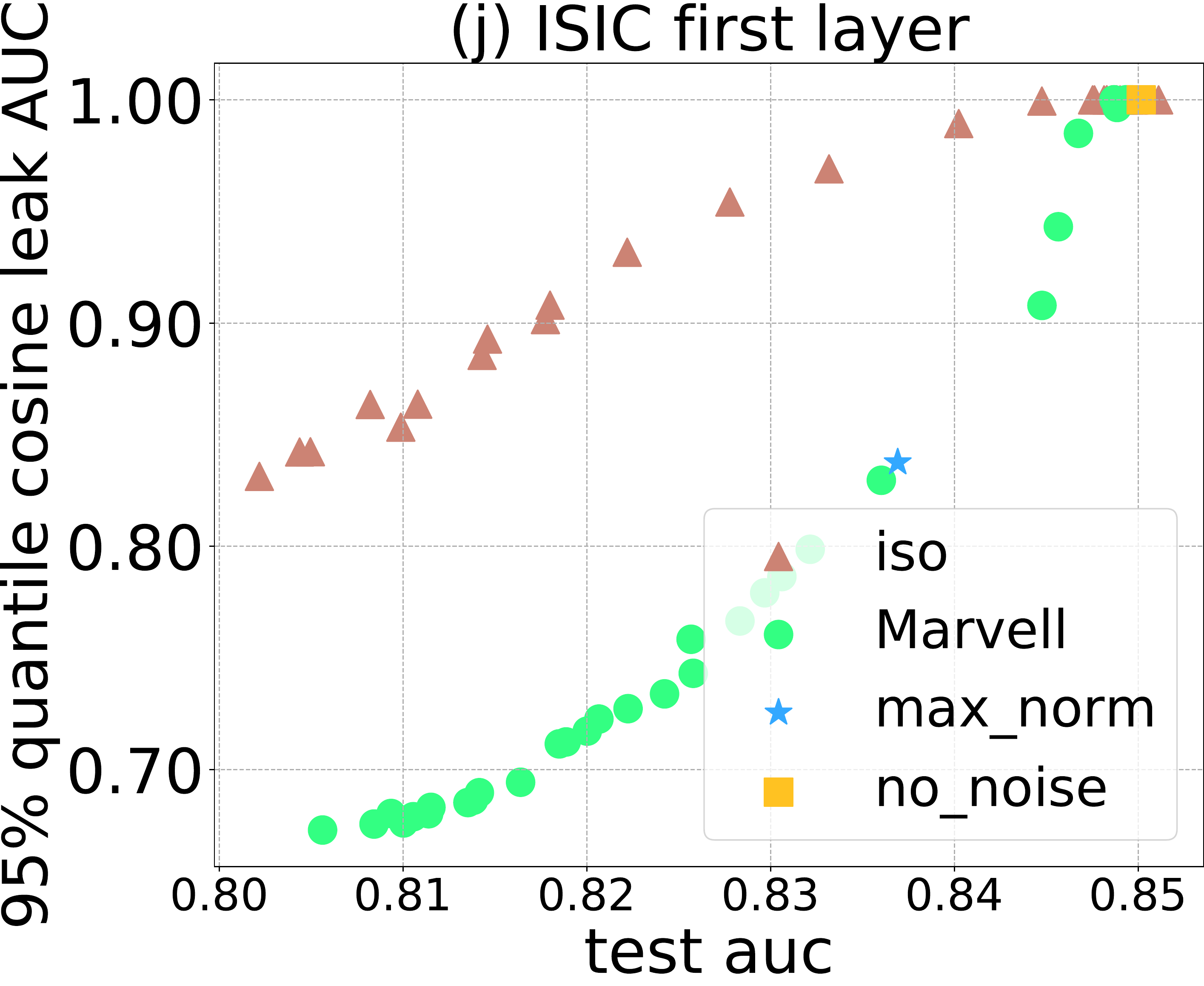}
  \end{minipage}
\begin{minipage}[t]{0.24\linewidth}
  \centering
    \includegraphics[width=0.97\linewidth]{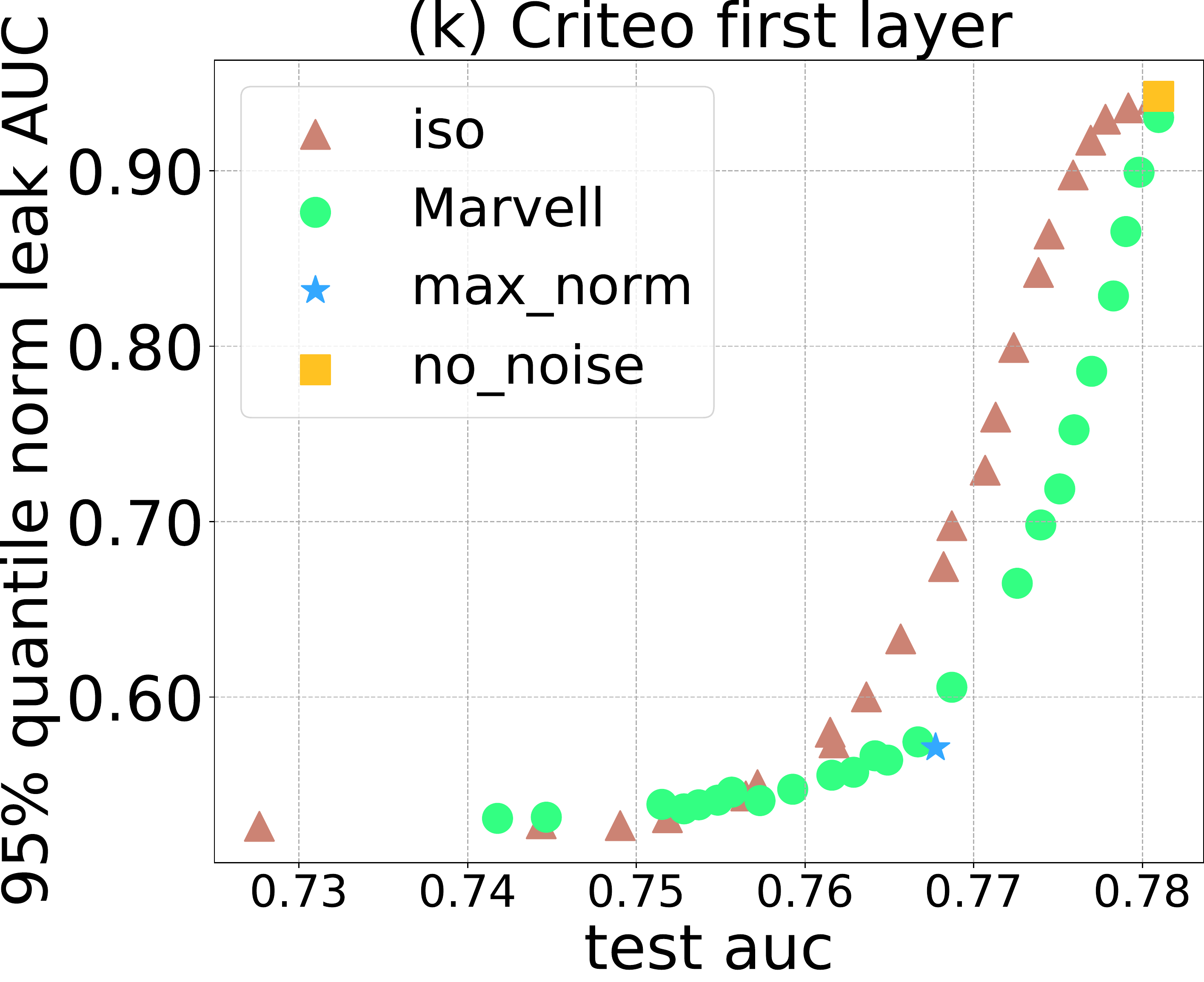}
  \end{minipage}
\begin{minipage}[t]{0.24\linewidth}
  \centering
    \includegraphics[width=0.97\linewidth]{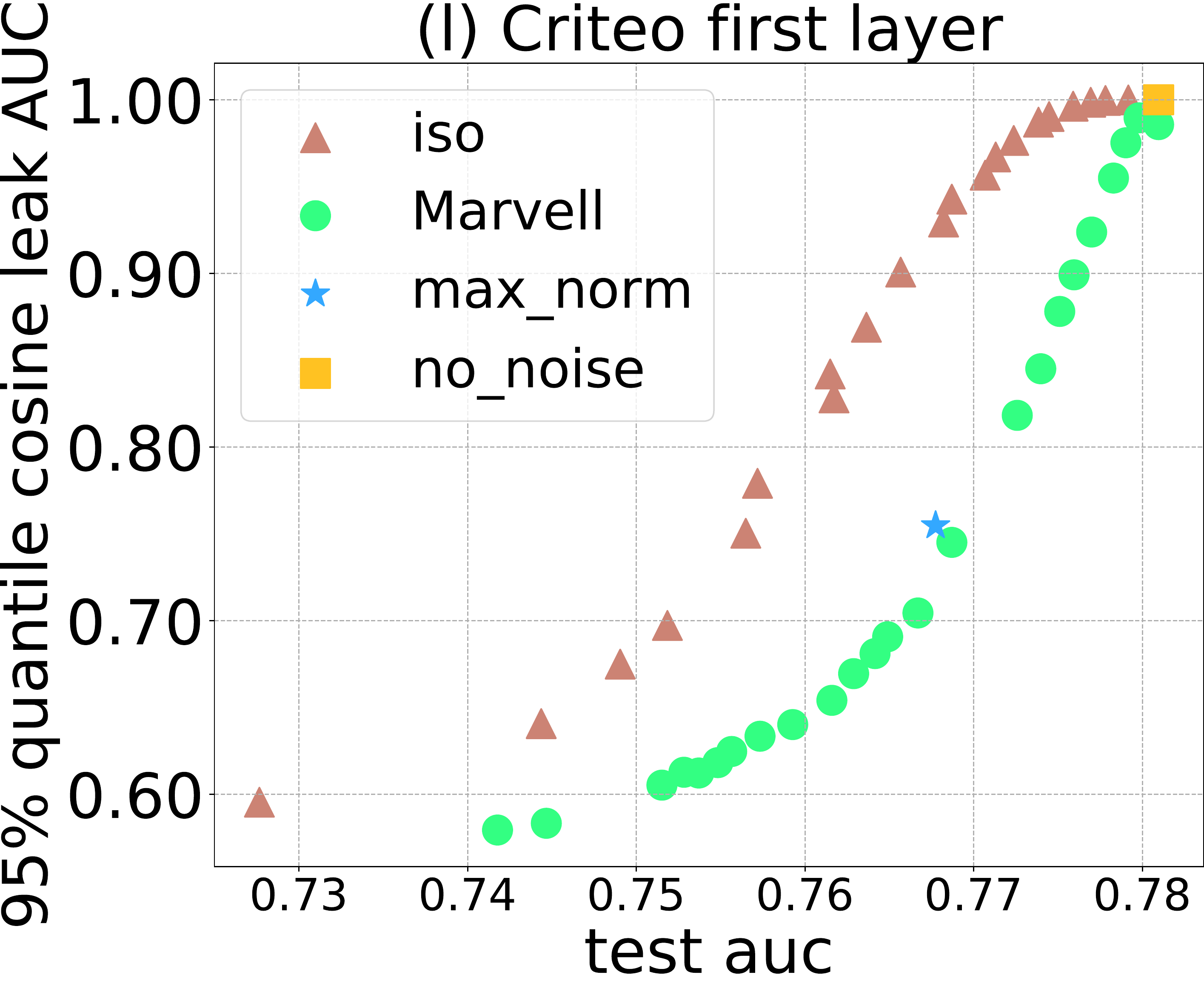}
  \end{minipage}
  \vspace{-1.5em}
    \caption{Privacy (norm \& cosine leak AUC) vs Utility (test loss \& test AUC) trade-off of protection methods (\Marvell, \iso, \nonoise, \maxnorm) at the cut and first layer on ISIC and Criteo.\vspace{.1in}}
\label{fig:utility_privacy_tradeoff}
\end{figure}

\textbf{Privacy-Utility Tradeoff comparison results.} In measuring the privacy-utility tradeoff, we aim to find a method that consistently achieves a lower leak AUC (better privacy) for the same utility value.
\begin{itemize}

\item \textbf{[}\Marvell~\textbf{vs} \iso\textbf{]} As shown in Figure~\ref{fig:utility_privacy_tradeoff}, \Marvell~\textit{almost always achieves a better tradeoff than} \iso~\textit{against both of our proposed attacks at both the cut layer and the first layer on both the} ISIC \textit{and} Criteo \textit{datasets}. It is important to note that although the utility constraint is in terms of training loss optimization, \Marvell's better tradeoff still translates to the generalization performance when the utility is measured through test loss or test AUC. Additionally, despite achieving reasonable (though still worse than \Marvell) privacy-utility tradeoff against the norm-based attack, \iso~performs much worse against the direction-based attack: on ISIC, even after applying a significant amount of isotropic noise (with $t>20$), ~\iso's cosine leak AUC is still higher than $0.9$ at the cut layer (Figure~\ref{fig:utility_privacy_tradeoff}(b,f)). In contrast, \Marvell~is effective against this direction-based attack with a much lower cosine leak AUC $\approx 0.6$. 
    \item  \textbf{[}\maxnorm~ \textbf{heuristic]} Beyond \Marvell, we see that our heuristic approach \maxnorm~can match and sometimes achieve even lower (Figure~\ref{fig:utility_privacy_tradeoff}(a,f,i)) leak AUC value than \Marvell~at the same utility level. We believe this specifically results from our norm and direction consideration when designing this heuristic. However, without a tunable hyperparameter, \maxnorm~cannot tradeoff between privacy and utility. Additionally, unlike \Marvell~which is designed to protect against the entire class of adversarial scoring functions, \maxnorm~might still fail to protect against other future attack methods beyond those considered here.
\end{itemize}
In summary, {our principled method} \Marvell~\textbf{significantly outperforms the isotropic Gaussian baseline}, and {our proposed} \maxnorm~\textbf{heuristic can also work particularly well against the norm- and direction-based attacks} which we identified in Section~\ref{subsec:threat_example}.

\section{Conclusion}
\label{sec:discussion}
In this paper, we formulate a label leakage threat model in the two-party split learning binary classification problem through a novel privacy loss quantification metric (leak AUC). Within this threat model, we provide two simple yet effective attack methods that can  accurately uncover the private labels of the label party. To counter such attacks, we propose a heuristic random perturbation method \maxnorm~as well as a theoretically principled method \Marvell~which searches for the optimal noise distributions to protect against the worst-case adversaries in the threat model. We have conducted extensive experiments to demonstrate the effectiveness of \Marvell~and \maxnorm~over the isotropic Gaussian perturbation baseline \iso. 

\textbf{Open questions and future work.} Our work is the first we are aware of to identify, rigorously quantify, and protect against the threat of label leakage in split-learning, and opens up a number of worthy directions of future study. 
In particular, as the model parameters are updated every batch in our problem setup, the true gradient of an example and the gradient distribution would both change. An interesting question is whether the adversarial \nlparty can remember the stale gradient of the same example from past updates (possibly separated by hundreds of updates steps) in order to recover the label information in the current iteration in a more complex threat model.
It would also be interesting to build on our results to study whether there exist attack methods when the classification problem is multiclass instead of binary, and when the split learning scenario involves more than two parties with possibly more complicated training communication protocols \citep[e.g.,][]{vepakomma2018split}. 

\bibliographystyle{unsrtnat}
\bibliography{references}

\begin{thebibliography}{23}
\providecommand{\natexlab}[1]{#1}
\providecommand{\url}[1]{\texttt{#1}}
\expandafter\ifx\csname urlstyle\endcsname\relax
  \providecommand{\doi}[1]{doi: #1}\else
  \providecommand{\doi}{doi: \begingroup \urlstyle{rm}\Url}\fi

\bibitem[McMahan et~al.(2017)McMahan, Moore, Ramage, Hampson, and
  y~Arcas]{mcmahan2017communication}
Brendan McMahan, Eider Moore, Daniel Ramage, Seth Hampson, and Blaise~Aguera
  y~Arcas.
\newblock Communication-efficient learning of deep networks from decentralized
  data.
\newblock In \emph{Artificial Intelligence and Statistics}, pages 1273--1282.
  PMLR, 2017.

\bibitem[Yang et~al.(2019)Yang, Liu, Chen, and Tong]{yang2019federated}
Qiang Yang, Yang Liu, Tianjian Chen, and Yongxin Tong.
\newblock Federated machine learning: Concept and applications.
\newblock \emph{ACM Transactions on Intelligent Systems and Technology (TIST)},
  10\penalty0 (2):\penalty0 1--19, 2019.

\bibitem[Gupta and Raskar(2018)]{gupta2018distributed}
Otkrist Gupta and Ramesh Raskar.
\newblock Distributed learning of deep neural network over multiple agents.
\newblock \emph{Journal of Network and Computer Applications}, 116:\penalty0
  1--8, 2018.

\bibitem[Vepakomma et~al.(2018)Vepakomma, Gupta, Swedish, and
  Raskar]{vepakomma2018split}
Praneeth Vepakomma, Otkrist Gupta, Tristan Swedish, and Ramesh Raskar.
\newblock Split learning for health: Distributed deep learning without sharing
  raw patient data.
\newblock \emph{arXiv preprint arXiv:1812.00564}, 2018.

\bibitem[Kolesnikov et~al.(2016)Kolesnikov, Kumaresan, Rosulek, and
  Trieu]{PSI2016vlad}
Vladimir Kolesnikov, Ranjit Kumaresan, Mike Rosulek, and Ni~Trieu.
\newblock Efficient batched oblivious prf with applications to private set
  intersection.
\newblock In \emph{Proceedings of the 2016 ACM SIGSAC Conference on Computer
  and Communications Security}, CCS '16, page 818–829, 2016.

\bibitem[Pinkas et~al.(2018)Pinkas, Schneider, and Zohner]{PSI2016Benny}
Benny Pinkas, Thomas Schneider, and Michael Zohner.
\newblock Scalable private set intersection based on ot extension.
\newblock \emph{ACM Transactions on Privacy and Security (TOPS)}, 21\penalty0
  (2):\penalty0 1--35, 2018.

\bibitem[Zhu et~al.(2019)Zhu, Liu, and Han]{zhu2019dlg}
Ligeng Zhu, Zhijian Liu, and Song Han.
\newblock Deep leakage from gradients.
\newblock In \emph{Advances in Neural Information Processing Systems}, pages
  14774--14784, 2019.

\bibitem[Zhao et~al.(2020)Zhao, Mopuri, and Bilen]{zhao2020idlg}
Bo~Zhao, Konda~Reddy Mopuri, and Hakan Bilen.
\newblock idlg: Improved deep leakage from gradients.
\newblock \emph{arXiv preprint arXiv:2001.02610}, 2020.

\bibitem[Vepakomma et~al.(2019)Vepakomma, Gupta, Dubey, and
  Raskar]{vepakomma2019reducing}
Praneeth Vepakomma, Otkrist Gupta, Abhimanyu Dubey, and Ramesh Raskar.
\newblock Reducing leakage in distributed deep learning for sensitive health
  data.
\newblock \emph{arXiv preprint arXiv:1812.00564}, 2019.

\bibitem[Bonawitz et~al.(2017)Bonawitz, Ivanov, Kreuter, Marcedone, McMahan,
  Patel, Ramage, Segal, and Seth]{bonawitz2017practical}
Keith Bonawitz, Vladimir Ivanov, Ben Kreuter, Antonio Marcedone, H~Brendan
  McMahan, Sarvar Patel, Daniel Ramage, Aaron Segal, and Karn Seth.
\newblock Practical secure aggregation for privacy-preserving machine learning.
\newblock In \emph{Proceedings of the 2017 ACM SIGSAC Conference on Computer
  and Communications Security}, pages 1175--1191, 2017.

\bibitem[Subramanyan et~al.(2017)Subramanyan, Sinha, Lebedev, Devadas, and
  Seshia]{subramanyan2017formal}
Pramod Subramanyan, Rohit Sinha, Ilia Lebedev, Srinivas Devadas, and Sanjit~A
  Seshia.
\newblock A formal foundation for secure remote execution of enclaves.
\newblock In \emph{Proceedings of the 2017 ACM SIGSAC Conference on Computer
  and Communications Security}, pages 2435--2450, 2017.

\bibitem[Abadi et~al.(2016)Abadi, Chu, Goodfellow, McMahan, Mironov, Talwar,
  and Zhang]{abadi2016deep}
Martin Abadi, Andy Chu, Ian Goodfellow, H~Brendan McMahan, Ilya Mironov, Kunal
  Talwar, and Li~Zhang.
\newblock Deep learning with differential privacy.
\newblock In \emph{Proceedings of the 2016 ACM SIGSAC Conference on Computer
  and Communications Security}, pages 308--318, 2016.

\bibitem[McMahan et~al.(2018)McMahan, Ramage, Talwar, and
  Zhang]{mcmahan2017learning}
H.~Brendan McMahan, Daniel Ramage, Kunal Talwar, and Li~Zhang.
\newblock Learning differentially private recurrent language models.
\newblock In \emph{International Conference on Learning Representations}, 2018.
\newblock URL \url{https://openreview.net/forum?id=BJ0hF1Z0b}.

\bibitem[Erlingsson et~al.(2019)Erlingsson, Feldman, Mironov, Raghunathan,
  Talwar, and Thakurta]{erlingsson2019amplification}
{\'U}lfar Erlingsson, Vitaly Feldman, Ilya Mironov, Ananth Raghunathan, Kunal
  Talwar, and Abhradeep Thakurta.
\newblock Amplification by shuffling: From local to central differential
  privacy via anonymity.
\newblock In \emph{Proceedings of the Thirtieth Annual ACM-SIAM Symposium on
  Discrete Algorithms}, pages 2468--2479. SIAM, 2019.

\bibitem[Cheu et~al.(2019)Cheu, Smith, Ullman, Zeber, and
  Zhilyaev]{cheu2019distributed}
Albert Cheu, Adam Smith, Jonathan Ullman, David Zeber, and Maxim Zhilyaev.
\newblock Distributed differential privacy via shuffling.
\newblock In \emph{Annual International Conference on the Theory and
  Applications of Cryptographic Techniques}, pages 375--403. Springer, 2019.

\bibitem[Dwork(2006)]{DPDwork2006}
Cynthia Dwork.
\newblock Differential privacy.
\newblock In Michele Bugliesi, Bart Preneel, Vladimiro Sassone, and Ingo
  Wegener, editors, \emph{Automata, Languages and Programming}, pages 1--12,
  Berlin, Heidelberg, 2006. Springer Berlin Heidelberg.
\newblock ISBN 978-3-540-35908-1.

\bibitem[Dwork et~al.(2014)Dwork, Roth, et~al.]{dwork2014algorithmic}
Cynthia Dwork, Aaron Roth, et~al.
\newblock The algorithmic foundations of differential privacy.
\newblock \emph{Foundations and Trends in Theoretical Computer Science},
  9\penalty0 (3-4):\penalty0 211--407, 2014.

\bibitem[Chaudhuri and Hsu(2011)]{chaudhuri2011sample}
Kamalika Chaudhuri and Daniel Hsu.
\newblock Sample complexity bounds for differentially private learning.
\newblock In \emph{Proceedings of the 24th Annual Conference on Learning
  Theory}, pages 155--186. JMLR Workshop and Conference Proceedings, 2011.

\bibitem[Ghazi et~al.(2021)Ghazi, Golowich, Kumar, Manurangsi, and
  Zhang]{ghazi2021deep}
Badih Ghazi, Noah Golowich, Ravi Kumar, Pasin Manurangsi, and Chiyuan Zhang.
\newblock On deep learning with label differential privacy.
\newblock \emph{arXiv preprint arXiv:2102.06062}, 2021.

\bibitem[Criteo()]{criteo}
Criteo.
\newblock Criteo display advertising challenge, 2014.
\newblock URL \url{https://www.kaggle.com/c/criteo-display-ad-challenge/data}.

\bibitem[Avazu()]{avazu}
Avazu.
\newblock Avazu click-through rate prediction, 2015.
\newblock URL \url{https://www.kaggle.com/c/avazu-ctr-prediction/data}.

\bibitem[ISIC()]{isic}
ISIC.
\newblock Siim-isic melanoma classification, 2020.
\newblock URL
  \url{https://www.kaggle.com/c/siim-isic-melanoma-classification/data}.

\bibitem[Cheng et~al.(2016)Cheng, Koc, Harmsen, Shaked, Chandra, Aradhye,
  Anderson, Corrado, Chai, Ispir, et~al.]{cheng2016wide}
Heng-Tze Cheng, Levent Koc, Jeremiah Harmsen, Tal Shaked, Tushar Chandra,
  Hrishi Aradhye, Glen Anderson, Greg Corrado, Wei Chai, Mustafa Ispir, et~al.
\newblock Wide \& deep learning for recommender systems.
\newblock In \emph{Proceedings of the 1st workshop on deep learning for
  recommender systems}, pages 7--10, 2016.

\end{thebibliography}

\newpage
\appendix
\section{Appendix}
\subsection{Expressing \texorpdfstring{$\textrm{AUC}(r)$}{AUC(r)} as an integral}
\label{appsubsec:auc_integral}
Recall that the ROC curve of a scoring function $r: \Real^d \rightarrow \Real$ is a parametric curve $c_r: \Real \rightarrow [0, 1]^2$ such that $c_r(t) = (\textrm{FPR}_r(t), \textrm{TPR}_r(t))$, with
\begin{align*}
    \textrm{FPR}_r: \Real \rightarrow [0, 1],&\; \textrm{such that}\;\;  \textrm{FPR}_r(t) \coloneqq P^{(0)}\paren{\cbrck{g \in \Real^d: r(g) > t}} \\
    \textrm{TPR}_r: \Real \rightarrow [0, 1],&\; \textrm{such that}\;\;  \textrm{TPR}_r(t) \coloneqq P^{(1)}\paren{\cbrck{g \in \Real^d: r(g) > t}}.
\end{align*}
We notice that $\textrm{FPR}_r$ and $\textrm{TPR}_r$ are both monotonically decreasing functions in $t$ with
\begin{align*}
    \lim_{t\rightarrow -\infty} \textrm{FPR}_r(t) = 1,& \;\; \lim_{t\rightarrow \infty} \textrm{FPR}_r(t) = 0 \\
    \lim_{t\rightarrow -\infty} \textrm{TPR}_r(t) = 1,& \;\; \lim_{t\rightarrow \infty} \textrm{TPR}_r(t) = 0. \\
\end{align*}
However, $\textrm{FPR}_r$ and $\textrm{TPR}_r$ does not need to be differentiable everywhere with respect to $t$. Thus we cannot simply express the area under the curve using $\int_{\infty}^{-\infty} \textrm{TPR}_r(t)\; \textrm{FPR}_r'(t) dt$.
On the other hand, because both $\textrm{FPR}_r$ and $\textrm{TPR}_r$ are functions of bounded variations, we can express the area under the parametric curve $c_r$ through the Riemann-Stieltjes integral which involves integrating the function $\textrm{TPR}_r$ with respect to the function $\textrm{FPR}_r$:
\begin{align*}
    \textrm{AUC}(r) = \int_{t=\infty}^{t=-\infty} \textrm{TPR}_r(t)\; d\textrm{FPR}_r(t)
\end{align*}
Here the integration boundary is from $\infty$ to $-\infty$ in order to ensure the integral evaluates to a positive value.

\subsection{Toy example of positive example prediction lacking confidence}
\label{appsubsec:positive_confidence_lack}
As mentioned in Observation 1 of norm-based attack in  Section~\ref{subsec:threat_example}, for many practical applications, there is inherently more ambiguity for the positive class than the negative class. To make this more concrete, let's consider a simple toy example.

Suppose the binary classification problem is over the real line. Here, 50\% of the data is positive and lies uniformly in the interval $[0,1]$. On the other hand, the remaining negative data (the other 50\% of the total data) is a mixture distribution: 10\% of the negative data also lies uniformly in $[0, 1]$, while the rest 90\% of the negative data lies uniformly in $[1, 2]$.
\begin{align*}
    &p(x \mid y = 1) = \I(x \in [0, 1]),& &p(y = 1) = 0.5.\\
    &p(x \mid y = 0) = 0.1 \cdot \I(x \in [0, 1]) + 0.9 \cdot \I(x \in [1, 2]),& &p(y= 0) = 0.5.\\
\end{align*}
This setup mirrors our online advertising example where for all the users interested in the product (with feature $x \in [0, 1]$), only a part of them would actually make the purchase after clicking on its advertisement.

In this case, the best possible probabilistic classifiers $C^{\textrm{opt}}$ that can be ever learned, by Bayes Rule, would predict any example in $[0,1]$ to be of the positive class with probability $\frac{10}{11}$, while it would predict any example in $[1,2]$ to be of the negative class with probability $1$:
\begin{align*}
    C^{\textrm{opt}}(y=1 \mid x) = \begin{cases} \frac{10}{11} \ifcases x \in [0, 1] \\ 0 \ifcases x \in [1, 2] \end{cases}.\\
\end{align*}

Thus even for this best possible classifier $C^{\textrm{opt}}$, every positive example would have a confidence gap of $\frac{1}{11}$ while $90\%$ of the negative examples (the ones in $[1, 2]$) would have a confidence gap of $0$. Hence we see that in this toy example, our empirical observation of the lack of prediction confidence for positive examples would still hold true.

Besides, it is important to notice that this lack of positive prediction confidence phenomenon happens even in this class-balanced toy example ($p(y=1) = p(y=0) = 0.5$). Thus, \textbf{Observation 1 does not require the data distribution to be class-imbalanced to hold}, which further demonstrates the generality of our norm-based attack.

\subsection{Proof of Theorem~\ref{thm:AUCupperbound}}
\label{appsubsec:proof_aucupperbound}
\begin{theorem*}[\textbf{Theorem}~\ref{thm:AUCupperbound}]
For $0 \le \epsilon < 4$ and any perturbed gradient distributions {\scriptsize$\widetilde{P}^{(1)}$} and {\scriptsize$\widetilde{P}^{(0)}$} that are absolutely continuous with respect to each other,

\begin{align*}
    {\normalfont\KL{\widetilde{P}^{(1)}}{\widetilde{P}^{(0)}} + \KL{\widetilde{P}^{(0)}}{\widetilde{P}^{(1)}} \le \epsilon \quad \textit{implies} \quad \max_r \textrm{AUC}(r) \le \frac{1}{2} + \frac{\sqrt{\epsilon}}{2} - \frac{\epsilon}{8}}.
\end{align*}
\end{theorem*}

\begin{proof}[Proof of Theorem~\ref{thm:AUCupperbound}]
Combining Pinsker's inequality with Jensen's inequality, we can obtain an upper bound of total variation distance by the symmetrized KL divergence (sumKL) for a pair of distributions $(P, Q)$ that are absolutely continuous with respect to each other:
\begin{align}
    \TV(P, Q) \le \frac{1}{2} ( \sqrt{{\KL{P}{Q}/{2}}} + \sqrt{{\KL{Q}{P}/{2}}}) \le \frac{1}{2} \sqrt{\KL{P}{Q} + \KL{Q}{P}}.
\label{eq:tvkl}
\end{align}
By our assumption, this implies that $\TV(\widetilde{P}^{(1)}, \widetilde{P}^{(0)}) \le \frac{\sqrt{\epsilon}}{2}$. By the equivalent definition of total variation distance $\TV(\widetilde{P}^{(1)}, \widetilde{P}^{(0)})) = \max_{A \subset \Real^d} [\widetilde{P}^{(1)}(A) -  \widetilde{P}^{(0)}(A)]$, we know that for any $A \subset \Real^d$, \quad $\widetilde{P}^{(1)}(A) -  \widetilde{P}^{(0)}(A) \le \frac{\sqrt{\epsilon}}{2}$. For any scoring function $r$ and any threshold value $t$, let $A = \cbrck{g: r(g) > t}$, then we have $\textrm{TPR}_r(t) - \textrm{FPR}_r(t) = \widetilde{P}^{(1)}(A) -  \widetilde{P}^{(0)}(A) \le \frac{\sqrt{\epsilon}}{2}$.

Therefore, the AUC of the scoring function $r$ can be upper bounded in the following way:
\begin{align}
    \textrm{AUC}(r) =& \; \int_{\infty}^{-\infty} \textrm{TPR}_r(t)\; d\textrm{FPR}_r(t) \\
    \le& \; \int_{\infty}^{-\infty} \min\paren{\textrm{FPR}_r(t) + \frac{\sqrt{\epsilon}}{2},\;\; 1} \; d\textrm{FPR}_r(t), \label{eq:integration_upperbound}
\end{align}
where in \eqref{eq:integration_upperbound} we use the additional fact that $\textrm{TPR}_r(t) \le 1$ for all $t \in \Real$.

When $\epsilon \in [0, 4)$, we have $1 - \frac{\sqrt{\epsilon}}{2} \in (0, 1]$. As $\textrm{FPR}_r(t)$ is a monotonically nonincreasing function in $t$ with range in $[0, 1]$, the set $\cbrck{t: \textrm{FPR}_r(t) \le 1 - \frac{\sqrt{\epsilon}}{2}} \neq \phi$ is not empty. Let $k \coloneqq \inf\cbrck{t: \textrm{FPR}_r(t) \le 1 - \frac{\sqrt{\epsilon}}{2}}$. Again by $\textrm{FPR}_r(t)$ being a monotonically nonincreasing function in $t$, we can break the integration in Equation~\eqref{eq:integration_upperbound} into two terms:
\begin{align}
    & \; \textrm{AUC}(r) \nonumber\\
    =& \; \int_{\infty}^{k} \min\paren{\textrm{FPR}_r(t) + \frac{\sqrt{\epsilon}}{2},\;\; 1} d\textrm{FPR}_r(t) + \int_{k}^{-\infty} \min\paren{\textrm{FPR}_r(t) + \frac{\sqrt{\epsilon}}{2},\;\; 1} d\textrm{FPR}_r(t) \\
    \le& \; \int_{\infty}^{k} \paren{\textrm{FPR}_r(t) + \frac{\sqrt{\epsilon}}{2}} \; d\textrm{FPR}_r(t) + \int_{k}^{-\infty} 1 \; d\textrm{FPR}_r(t) \\
    =& \; \brck{\frac{[\textrm{FPR}_r(t)]^2}{2} + \frac{\sqrt{\epsilon}}{2}\textrm{FPR}_r(t)} \Bigg \vert_{t=\infty}^{t=k} + \textrm{FPR}_r(t) \bigg \vert_{t=k}^{t=-\infty} \\
    \le& \; \frac{(1 - \frac{\sqrt{\epsilon}}{2})^2}{2} + \frac{\sqrt{\epsilon}}{2} \paren{1 - \frac{\sqrt{\epsilon}}{2}} + \brck{1 - \paren{1 - \frac{\sqrt{\epsilon}}{2}}} \\
    =& \; \frac{1}{2} + \frac{\sqrt{\epsilon}}{2} - \frac{\epsilon}{8}
\end{align} Since this inequality is true for any scoring function $r$, it is true for the maximum value. Thus the proof is complete.
\end{proof}

\newpage
\subsection{Proof and Interpretation of Theorem~\ref{thm:optimal_solution}}
\label{appsubsec:proof_optimalsoln}
\begin{theorem*}[\textbf{Theorem}~\ref{thm:optimal_solution}]
The optimal $\Sigma_1^*$ and $\Sigma_0^*$ to the following problem
\begin{align*}
    \min_{\Sigma_0, \Sigma_1 \in \Sym}& \KL{\calN(\bar{g}^{(1)}, vI + \Sigma_1)}{\calN(\bar{g}^{(0)}, uI + \Sigma_0)} +  \KL{\calN(\bar{g}^{(0)}, uI + \Sigma_0)}{\calN(\bar{g}^{(1)}, vI + \Sigma_1)} \\
     \subjectto &
\end{align*}
\begin{align*}
    \Sigma_0 \Sigma_1 =& \; \Sigma_1 \Sigma_0, \\
    p \cdot \tr(\Sigma_1) + (1-p)\cdot \tr(\Sigma_0) \le& \; P ,\\
    \Sigma_1 \succeq& \; \bzero ,\\
    \Sigma_0 \succeq& \; \bzero.
\end{align*}
have the form:
    \vspace{-0.05in}
    \begin{align}
        \Sigma_1^* = \frac{\Lam{1}{1} - \Lam{2}{1}}{\norm{\Delta g}_2^2} (\Delta g)(\Delta g)^{\top} + \Lam{2}{1}I_d, \quad
        \Sigma_0^* = \frac{\Lam{1}{0} - \Lam{2}{0}}{\norm{\Delta g}_2^2} (\Delta g)(\Delta g)^{\top} + \Lam{2}{0}I_d,
    \label{eq:app_solution}
    \end{align}
    where $(\Lam{1}{0}, \Lam{2}{0}, \Lam{1}{1}, \Lam{2}{1})$ is the solution to the following 4-variable optimization problem:
    \begin{align*}
    \min_{\lambda_1^{(0)}, \lambda_1^{(1)}, \lambda_2^{(0)}, \lambda_2^{(1)}} \; &(d-1) \frac{\lambda_2^{(0)} + u}{\lambda_2^{(1)} + v} + (d-1) \frac{\lambda_2^{(1)} + v}{\lambda_2^{(0)} + u} + \frac{\lambda_1^{(0)} + u + \|\Delta g\|_2^2}{\lambda_1^{(1)} + v} + \frac{ \lambda_1^{(1)} + v + \|\Delta g\|_2^2}{\lambda_1^{(0)} + u}\\
   \textrm{s.t.} & \quad p \lambda_1^{(1)} + p(d-1) \lambda_2^{(1)}+ (1-p) \lambda_1^{(0)}
   + (1-p)(d-1)\lambda_2^{(0)} \le \; P,\nonumber \\
     \qquad &-\lambda_1^{(1)} \le \; 0,\;\;\; -\lambda_1^{(0)} \le \; 0, \;\;\; -\lambda_2^{(1)} \le \; 0, \;\;\; - \lambda_2^{(0)} \le \; 0,\\  & \qquad \fourquad \lambda_2^{(1)} - \lambda_1^{(1)} \le \; 0, \;\;\; \lambda_2^{(0)} - \lambda_1^{(0)} \le \; 0 \nonumber \\
  \end{align*}
\end{theorem*}
\begin{proof}[Proof of Theorem \ref{thm:optimal_solution}]
By writing out the analytical close-form of the KL divergence between two Gaussian distributions, the optimization can be written as:
\begin{align*}
    &\min_{\Sigma_0, \Sigma_1 \in \Sym} \;  \tr((\Sigma_1+vI)^{-1}(\Sigma_0+uI))  + \tr((\Sigma_0+uI)^{-1}(\Sigma_1+vI))   + \\
    & \fourquad \fourquad (\bar{g}^{(1)} - \bar{g}^{(0)})^{\top}\paren{(\Sigma_1+vI)^{-1} + (\Sigma_0+uI)^{-1}}(\bar{g}^{(1)} - \bar{g}^{(0)})\notag\\
     &\subjectto 
\end{align*}
\begin{equation}
    \begin{aligned}
        \Sigma_0 \Sigma_1 =& \; \Sigma_1 \Sigma_0, \\
    p \cdot \tr(\Sigma_1) + (1-p)\cdot \tr(\Sigma_0) \le& \;P, \\
    \Sigma_1 \succeq& \; \bzero, \\
    \Sigma_0 \succeq& \; \bzero.
    \end{aligned}
\end{equation}

By the commutative constraint on the two positive semidefinite matrices $\Sigma_1$ and $\Sigma_0$, we know that we can factor these two matrices using the same set of eigenvectors. We thus write:
\begin{align}
    \Sigma_0 =& Q^{\top} \diag(\lambda_1^{(0)}, \ldots, \lambda_d^{(0)}) Q, \nonumber \\
    \Sigma_1 =& Q^{\top} \diag(\lambda_1^{(1)}, \ldots, \lambda_d^{(1)}) Q, \label{appeq:sigmas}
\end{align} where $Q \in \Real^{d \times d}$ is an orthogonal matrix and the eigenvalues $\lambda_i^{(0)}, \lambda_i^{(1)}$ are nonnegative and decreasing in value.

Using this alternative expression of $\Sigma_1$ and $\Sigma_0$, we can express the optimization in terms of $\{\lambda_i^{(1)}\}, \{\lambda_i^{(0)}\}, Q$:

\begin{align*}
    & \min_{\{\lambda_i^{(1)}\}, \{\lambda_i^{(0)}\}, Q} \;\; \sum_{i=1}^d \frac{\lambda_i^{(0)} + u}{\lambda_i^{(1)} + v} + \sum_{i=1}^d \frac{\lambda_i^{(1)} + v}{\lambda_i^{(0)} + u}  + \\ & \fourquad \qquad \quad  \brck{Q(\bar{g}^{(1)} - \bar{g}^{(0)})}^{\top} \diag\paren{\ldots, \frac{1}{\lambda_i^{(0)} + u} + \frac{1}{\lambda_i^{(1)} + v}, \ldots} Q(\bar{g}^{(1)} - \bar{g}^{(0)})
\end{align*}
\begin{align*}
    \subjectto \quad &p(\sum_{i=1}^d \lambda_i^{(1)}) + (1-p)(\sum_{i=1}^d \lambda_i^{(0)}) \le \; P \\
    &- \lambda_i^{(1)} \le \; 0, \; \forall i \in [d]\\
    &- \lambda_i^{(0)} \le \; 0, \; \forall i \in [d].\\
    &\lam{i}{1} \ge  \lam{j}{1}, \forall i < j. \\
    &\lam{i}{0} \ge  \lam{j}{0}, \forall i < j. \\
    &Q \;\;  \textrm{orthogonal}.
\end{align*}

For any fixed feasible $\{\lambda_i^{(1)}\}, \{\lambda_i^{(0)}\}$, we see that the corresponding minimizing $Q$ will set its first row to be the unit vector in the direction of $\Delta g$. Thus by first minimizing $Q$, the optimization objective reduces to:
\begin{align*}
    &\sum_{i=1}^d \frac{\lambda_i^{(0)} + u}{\lambda_i^{(1)} + v} + \sum_{i=1}^d \frac{\lambda_i^{(1)} + v}{\lambda_i^{(0)} + u} +  \frac{g}{\lambda_1^{(0)} + u} + \frac{g}{\lambda_1^{(1)} + v} \nonumber \\
    =& \sum_{i=2}^d \frac{\lambda_i^{(0)} + u}{\lambda_i^{(1)} + v} + \sum_{i=2}^d \frac{\lambda_i^{(1)} + v}{\lambda_i^{(0)} + u} +  \frac{\lambda_1^{(1)} + v + \norm{\Delta g}_2^2}{\lambda_1^{(0)} + u} + \frac{ \lambda_1^{(0)} + u + \norm{\Delta g}_2^2}{\lambda_1^{(1)} + v} 
\end{align*}
We see that for the pair of variable $(\lam{i}{1}, \lam{i}{0})$ ($i \ge 2$), the function $\frac{\lambda_i^{(0)} + u}{\lambda_i^{(1)} + v} + \frac{\lambda_i^{(1)} + v}{\lambda_1^{(0)} + u}$ is strictly convex over the line segment $p\lam{i}{1} + (1-p)\lam{i}{0} =c $ for any nonnegative $c$ and attains the the minimum value at $\lam{i}{1} = 0$ when $u < v$ and $\lam{i}{0} = 0$ when $u \ge v$. Suppose without loss of generality $u \ge v$, then for the optimal solution we must have 
$\lam{i}{0}=0$ for all $i \ge 2$. Under this condition, we notice that the function $m(x) = \frac{u}{x+v} + \frac{x+v}{u}$ is strictly convex on the positive reals. Thus for all $\cbrck{\lambda_i^{(1)}}$ that satisfies $\sum_{i=2}^d \lambda_i^{(1)} = c$ for a fixed nonnegative $c$, by Jensen inequality, we have
\begin{align*}
    &\;\frac{1}{d-1} \sum_{i=2}^d \paren{\frac{u}{\lambda_i^{(1)} + v} + \frac{\lambda_i^{(1)} + v}{u}} \\
    =&\; \frac{1}{d-1} \sum_{i=2}^d m(\lambda_i^{(1)}) \\
    \ge& \; m\paren{\frac{1}{d-1} \sum_{i=2}^d \lambda_i^{(1)}} \\
    =&\; m\paren{\frac{c}{d-1}}.
\end{align*}

From this, we see that the optimal solution's variables $\{\lam{i}{1}\}$ must take on the same value ($\frac{c}{d-1}$) for all $i\ge2$. The case when $u \le v$ is similar. As a result, we have proved that at the optimal solution, we must have:
\begin{align*}
    \lambda_i^{(0)}=\lambda_j^{(0)} \textrm{ and } \lambda_i^{(1)}=\lambda_j^{(1)}, \textrm{ for } i, j \ge 2.
\end{align*}

Hence, the optimization problem over the $2d$ variables $\cbrck{\lam{i}{1}}_{i=1}^d \bigcup \cbrck{\lam{i}{0}}_{i=1}^d$ can be reduced to an optimization problem over the four variables $\cbrck{\lam{1}{1}, \lam{2}{1}, \lam{1}{0}, \lam{2}{0}}$:
\begin{align}
    \min_{\lambda_1^{(0)}, \lambda_1^{(1)}, \lambda_2^{(0)}, \lambda_2^{(1)}}& \; (d-1) \frac{\lambda_2^{(0)} + u}{\lambda_2^{(1)} + v} + (d-1) \frac{\lambda_2^{(1)} + v}{\lambda_2^{(0)} + u} + \frac{\lambda_1^{(0)} + u + \norm{\Delta g}_2^2}{\lambda_1^{(1)} + v} + \frac{ \lambda_1^{(1)} + v + \norm{\Delta g}_2^2}{\lambda_1^{(0)} + u} \label{eq:four_variable_objective} \\ 
     \subjectto \qquad & p \lambda_1^{(1)} + p(d-1) \lambda_2^{(1)}+ (1-p) \lambda_1^{(0)}+ (1-p)(d-1)(\lambda_2^{(0)}) \le \; P \nonumber \\
    & \tenquad - \lambda_1^{(1)} \le \; 0 \nonumber \\
    & \tenquad - \lambda_1^{(0)} \le \; 0 \nonumber \\
    & \tenquad - \lambda_2^{(1)} \le \; 0 \nonumber \\
    & \tenquad - \lambda_2^{(0)} \le \; 0 \nonumber \\
    & \tenquad \lambda_2^{(1)} - \lambda_1^{(1)} \le \; 0 \nonumber \\
    & \tenquad \lambda_2^{(0)} - \lambda_1^{(0)} \le \; 0 \nonumber.
\end{align}

Given the optimal solution to the above 4-variable problem $(\Lam{1}{0}, \Lam{2}{0}, \Lam{1}{1}, \Lam{2}{1})$, we can set $Q$ to be any orthogonal matrix whose first row is the vector $\frac{\Delta g}{\norm{\Delta g}_2}$. Plugging this back into the expression of $\Sigma_1$ and $\Sigma_0$ in Equation~\eqref{appeq:sigmas} gives us the final result.

Thus the proof is complete.
\end{proof}

\remark[Interpreting the optimal $\Sigma_1^*$ and $\Sigma_0^*$]{} From the form of the optimal solution in  \eqref{eq:app_solution}, we see that the optimal covariance matrices are both linear combinations of two terms: a rank one matrix $(\Delta g)(\Delta g)^{\top}$ and the identity matrix $I_d$. Because a zero-mean Gaussian random vector with convariance matrix $(A + B)$ can be constructed as the sum of two independent zero-mean Gaussian random vectors with covariance matrices $A$ and $B$ respectively, we see that the optimal additive noise random variables $\eta^{(1)}$ and $\eta^{(0)}$ each consist of two independent components: one random component lies along the line that connects the positive and negative gradient mean vectors (whose covariance matrix is proportional to  $\Delta g \Delta g^{\top}$); the other component is sampled from an isotropic Gaussian. The use of the first random directional component and the fact that the isotropic Gaussian component have different variance scaling for the positive and negative class clearly distinguishes \Marvell~from the isotropic Gaussian baseline \iso.

~

\remark[How to solve]{} By analyzing the KKT condition of this four variable problem we can find that the optimal solution must exactly lie on the hyperplane $p \lambda_1^{(1)} + p(d-1) \lambda_2^{(1)}+ (1-p) \lambda_1^{(0)}+ (1-p)(d-1)\lambda_2^{(0)} = P$. From the proof above we additionally know that $\Lam{2}{1} = 0$ if $u < v$ and $\Lam{2}{0} = 0$ if $u \ge v$. Thus the problem is further reduced to a 3-variable problem. If we consider keeping one of the 3 remaining variables' values fixed, then the feasible region becomes restricted to a line segment. We can simply perform a line search optimization of the convex objective and find the optimal values for the remaining two free variables. We can then alternate over which one of the three variables to fix and optimize over the other two. This optimization procedure would eventually converge and give us the optimal solution. This approach mimics the Sequential Minimal Optimization technique used to solve the dual of the support vector machine (SVM).

\subsection{\Marvell~algorithm description}
\label{appsubsec:marvell_details}
We use $\bone \in \Real^d$ to denote the vector with $1$ in each coordinate. We use $g[j]$ to denote the $j$-th row of the matrix $g$ and $y[j]$ to denote the $j$-th coordinate of the vector $y$.

\begin{algorithm}[H]
\caption{\Marvell~algorithm}
\SetAlgoLined
\SetKwInOut{Input}{input}\SetKwInOut{Output}{output}
\Input{$g \in \Real^{B \times d}$, \quad a size-$B$ batch of unperturbed gradients\\$y \in \cbrck{0, 1}^B$, \, the label for each example in the batch\\$s$, \fourquad \,\,\, privacy hyperparameter for the power constraint $P$}
\Output{$\tilde{g} \in \Real^{B\times d}$, \quad batch of perturbed gradients }

\tcp{\small \blue{Step 1: estimate the optimization constants from the batch gradients using maximum likelihood estimation (MLE)}}
$p \leftarrow \frac{\bone^Ty}{d}$ \tcc*[r]{\small positive fraction}

$\bar{g}^{(1)} \leftarrow \frac{1}{\bone^Ty} g^T y$ \tcc*[r]{\small positive mean}

$\bar{g}^{(0)}  \leftarrow \frac{1}{B - \bone^Ty} g^T (\bone - y)$ \tcc*[r]{\small negative mean}

$v \leftarrow \frac{1}{d \cdot (\bone^Ty)} \sum_{j=1}^B y[j] \cdot \norm{g[j] - \bar{g}^{(1)}}_2^2 $ \tcc*[r]{\small positive convariance}

$u \leftarrow \frac{1}{d \cdot (B - \bone^Ty)} \sum_{j=1}^B (1 - y[j]) \cdot \norm{g[j] - \bar{g}^{(0)}}_2^2$ \tcc*[r]{\small negative convariance}

$\Delta g = \bar{g}^{(1)} - \bar{g}^{(0)}$; 

$P \leftarrow s \cdot \|\Delta g\|_2^2$ \tcc*[r]{\small power constraint hyperparameter}

\tcp{\small \blue{Step 2: optimize the four-variable problem}}
\eIf{$u < v$}{
$\lambda_2^{(1)} \leftarrow 0$ \tcc*[r]{\small this variable is optimal at $0$}

Randomly initialize the optimization variables $\lam{1}{1}$, $\lam{1}{0}$, $\lam{2}{0}$ in the feasible region in  
\eqref{eq:four_variable_objective}.}
{
$\lambda_2^{(0)} \leftarrow 0 $ \tcc*[r]{\small this variable is optimal at $0$
}

Randomly initialize the optimization variables $\lam{1}{1}$, $\lam{2}{1}$, $\lam{1}{0}$ in the feasible region in  
\eqref{eq:four_variable_objective}.
}

\While{not converged}{
Fix one of the newly updated optimization variables;

\tcp{\small The 4-variable optimal solution lies on a hyperplane in $\Real^4$ (see Appendix~\ref{appsubsec:proof_optimalsoln} Remark) so fixing two variables gives us a line-segment}

Update the remaining two optimization variables by performing 1-d line-search minimization of the convex function \eqref{eq:four_variable_objective} while satisfying the constraints;
}

\tcp{\small \blue{Step 3: compute the optimal covariance matrices}}

$\Sigma_1^* \leftarrow \frac{\lam{1}{1} - \lam{2}{1}}{\norm{\Delta g}_2^2} (\Delta g)(\Delta g)^{\top} + \lam{2}{1}I_d$;

$\Sigma_0^* \leftarrow \frac{\lam{1}{0} - \lam{2}{0}}{\norm{\Delta g}_2^2} (\Delta g)(\Delta g)^{\top} + \lam{2}{0}I_d$;

\tcp{\small \blue{Step 4: perturb the gradients}}
$\tilde{g} \leftarrow \textbf{0}_{B \times d}$ \tcc*[r]{\small an empty matrix to store the perturbed gradients}

\For{$j \leftarrow 1$ \KwTo $B$}{
\eIf{$y[j] = 1$}{
$\tilde{g}[j] \leftarrow g[j] + \eta^{(1)}$, where $\eta^{(1)} \sim \calN(\bzero, \Sigma_1^*$);
}(\tcp*[h]{$y[j] = 0$}){
$\tilde{g}[j] \leftarrow g[j] + \eta^{(0)}$, where $\eta^{(0)} \sim \calN(\bzero, \Sigma_0^*$);
}
}

\end{algorithm}

\newpage
\subsection{Data Setup and Experimental Details}
\label{appsubsec:experiment_setup}
We first describe how we preprocess each of the datasets in \ref{appsubsubsec:dataset_preprocessing}. We then describe the model architecture used for each dataset in \ref{appsubsubsec:model_architecture}. Finally, we describe what the training hyperparameters are used for each dataset/model combination and the amount of compute used for the experiments in \ref{appsubsubsec:model_training_details}.

\subsubsection{Dataset preprocessing}
\label{appsubsubsec:dataset_preprocessing}

\paragraph{[Criteo]} Every record of Criteo has $27$ categorical input features and $14$ real-valued input features. We first replace all the \texttt{NA} values in categorical features with a single new category (which we represent using the empty string) and replace all the \texttt{NA} values in real-valued features with $0$. For each categorical feature, we convert each of its possible value uniquely to an integer between $0$ (inclusive) and the total number of unique categories (exclusive). For each real-valued feature, we linearly normalize it into $[0,1]$. We then randomly sample $10\%$ of the entire Criteo publicly provided training set as our entire dataset (for faster training to generate privacy-utility trade-off comparision) and further make the subsampled dataset into a 90\%-10\% train-test split.

\paragraph{[Avazu]} Unlike Criteo, each record in Avazu only has categorical input features. We similarly replace all \texttt{NA} value with a single new category (the empty string), and for each categorical feature, we convert each of its possible value uniquely to an integer between $0$ (inclusive) and the total number of unique categories (exclusive). We use all the records in provided in Avazu and randomly split it into 90\% for training and 10\% for test.

\paragraph{[ISIC]}
The official SIIM-ISIC Melanoma Classification dataset has a total $33126$ of skin lesion images with less than $2\%$ positive examples. Because for image classification model training it is desirable to use a batch size of $\sim 10^2$, it is highly likely that there won't be any positive examples sampled in a batch of such size. Thus to make the label leakage problem more severe, we modify the dataset by retaining all the 584 positive examples and randomly choosing $584 \times 9$ examples out of all the negative examples. By doing this, we enforce that there are 10\% positive examples in this modified dataset. We randomly split these 5840 examples into a 80\%-20\% training and test split. We also resize the images to size $84\times84$ for efficient model training.

\subsubsection{Model architecture details}
\label{appsubsubsec:model_architecture}
\paragraph{[Criteo, ISIC]} We use a popular deep learning model architecture WDL \citep{cheng2016wide} for online advertising. Here the deep and wide part each first processes the categorical features in a given record by applying an embedding lookup for every categorical feature's value. We use an embedding dimension of 4 for the deep part and embedding dimension of 1 for the wide part. After the lookup, the deep/wide embeddings are then concatenated with the continuous features to form the raw input vectors for both the deep part and wide part respectively. (This step is skipped for Avazu as it has no continuous features.) Then the wide part computes the wide part logit value through a real-valued linear function (with bias) of its raw input vectors, while the deep part processes its raw input features using 6 ReLU-activated 128-unit MLP layers before producing a single deep part logit. The two logits are summed up to form the final logic value. The cut layer is after the output of the 3rd ReLU layer on the deep part.

\paragraph{[ISIC]}
Every input image after resizing is of size $84\times84\times3$. We use a convolutional model with 6 convolutional layers each with 64 channels $3 \times 3$ filter size with $1 \times 1$ stride size. Each convolutional layer is followed by a ReLU activation function whose output is then max pooled with $2\times2$ window and stride size $2\times 2$. The max-pooled output of the 6th layer is then flattened and pass into a 64-unit ReLU-activated MLP layer before finally being linearly transformed into a single logit score. The cut layer is after the output of the 4th max pool layer. Thus the cut layer feature and gradient are both of shape $5 \times 5 \times 64$.

\subsubsection{Model training details}
\label{appsubsubsec:model_training_details}

Because the protection mechanism requires adding noise to the cut layer gradient, the induced variance of the gradients of \nlpartynospace's $f$-parameters becomes larger. Thus to ensure smooth optimization and sufficient training loss minimization, we use a slightly smaller learning rate than what is normally used.

\paragraph{[Criteo]}
We use the Adam optimizer with a batch size of 1024 and a learning rate of $1e$$-4$ throughout the entire training of 5 epochs (approximately 20k stochastic gradient updates).

\paragraph{[ISIC]}
We use the Adam optimizer with a batch size of 128 and a learning rate of $1e$$-5$ throughout the entire training of 1000 epochs (approximately 35k stochastic gradient updates).

\paragraph{[Avazu]}
We use the Adam optimizer with a batch size of 32768 and a learning rate of $1e$$-4$ throughout the entire training of 5 epochs (approximately 5.5k stochastic gradient updates).

We conduct our experiments over 16 Nvidia 1080Ti GPU card. Each run of Avazu takes about 11 hours to finish on a single GPU card occupying 8GB of GPU RAM. Each run of Criteo takes about 37 hours to finish on a single GPU card using 5 GB of GPU RAM. Each run of ISIC takes about 12 hours to finish on a single GPU card occupying 4GB of GPU RAM.

\newpage
\subsection{Complete Experimental Results}
\label{appsubsec:experiment_results}
\subsubsection{Leak AUC progression for Avazu and Criteo}
\label{app:subsubsec_leakauc_progression}
In addition to the leak AUC progression on ISIC shown in Figure \ref{fig:leakage_trajectory} in the main paper, we also show the leak AUC progression on the Avazu and Criteo datasets throughout training here in Figure~\ref{app_fig:Avazu_leakage_trajectory} and Figure~\ref{app_fig:Criteo_leakage_trajectory}. We similarly compare \Marvell~with different levels of protection strength ($s$ values) against the no protection baseline \nonoise. As we can see, \Marvell~still achieves strong and flexible privacy protection on these two datasets against our label attacks at different model layers.
\begin{figure*}[ht]
\centering
\includegraphics[width=\linewidth]{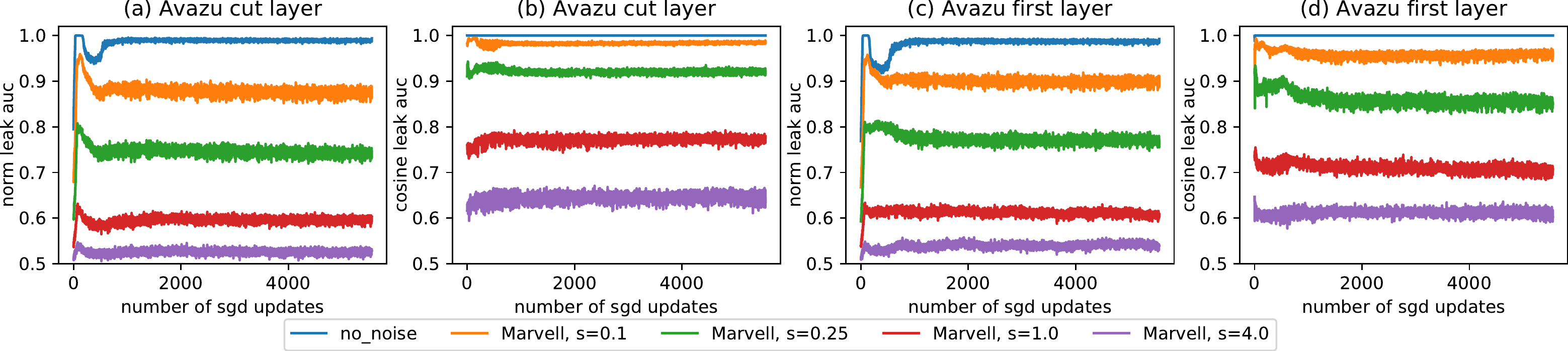}
\vspace{-0.3in}
\caption{Norm and cosine leak AUC (computed every batch) at the cut layer and at the first layer of \nonoise~(no protection) vs. \Marvell~with different scale hyperparameter $s$ throughout the Avazu training.}
\label{app_fig:Avazu_leakage_trajectory} 
\end{figure*}

\begin{figure*}[ht]
\centering
\includegraphics[width=\linewidth]{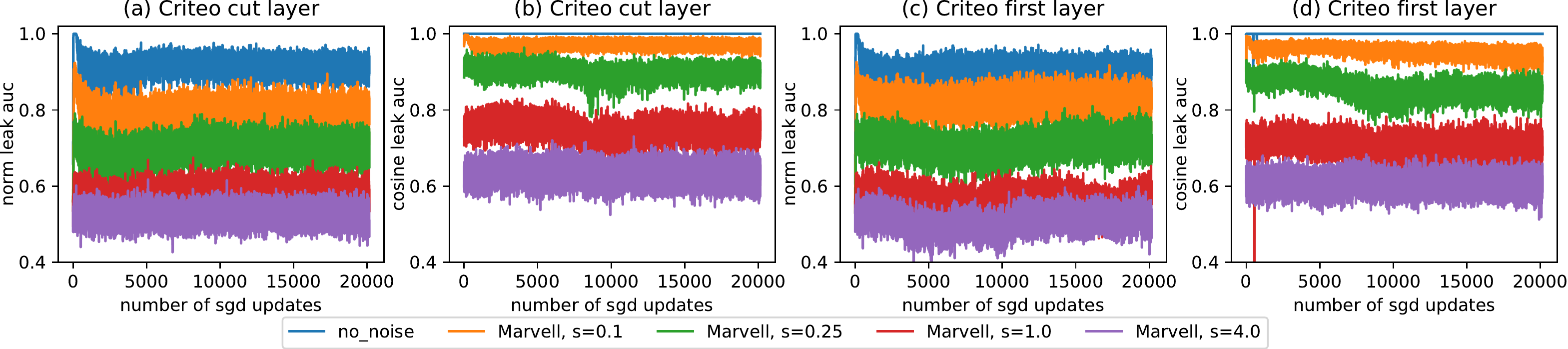}
\vspace{-0.3in}
\caption{Norm and cosine leak AUC (computed every batch) at the cut layer and at the first layer of \nonoise~(no protection) vs. \Marvell~with different scale hyperparameter $s$ throughout the Criteo training.}
 \label{app_fig:Criteo_leakage_trajectory} 
\end{figure*}

\subsubsection{Complete Privacy-Utility tradeoffs}
We show additional Privacy-Utility tradeoff results for all three datasets considered in this paper. (Some of the plots have already been shown in the main paper but we still include them here for completeness and ease of reference.) For each dataset, we compare the privacy-utility tradeoff over multiple measures of privacy and utility:

\textbf{Privacy}

We consider our introduced privacy metrics using the activation gradient from the cut layer and the first layer of the \nlparty:
\begin{itemize}
    \item 95\% norm leak AUC at cut layer
    \item 95\% cosine leak AUC at cut layer
    \item 95\% norm leak AUC
    at first layer
    \item 95\% cosine leak AUC at first layer
\end{itemize}

\textbf{Utility}

We consider three metrics of utility:
\begin{itemize}
    \item training loss (train loss): the lowest loss achieved on the training set throughout training. This directly measures how much the random protection perturbation influences the optimization.
    \item test loss. Because we only control the training optimization stochastic gradient's variance, measuring test loss directly tells us how much impact the training optimization random perturbation influences beyond optimization but on the learned model's generalization ability.
    \item test AUC. As we are dealing with binary classification problem (where performance is commonly measured through test AUC), we also naturally consider it as a utility metric.
\end{itemize}

As shown in Figure~\ref{fig:app_avazu_utility_privacy_tradeoff},\ref{fig:app_criteo_utility_privacy_tradeoff},\ref{fig:app_isic_utility_privacy_tradeoff}, \Marvell~consistently outperforms the isotropic Gaussian baseline over all the different privacy-utility definitions. In addition, our proposed heuristic \maxnorm~is also particularly effective against our identified norm and direction-based attacks.

\begin{figure}[ht]
    \centering
\begin{minipage}[t]{0.24\linewidth}
  \centering
    \includegraphics[width=0.97\linewidth]{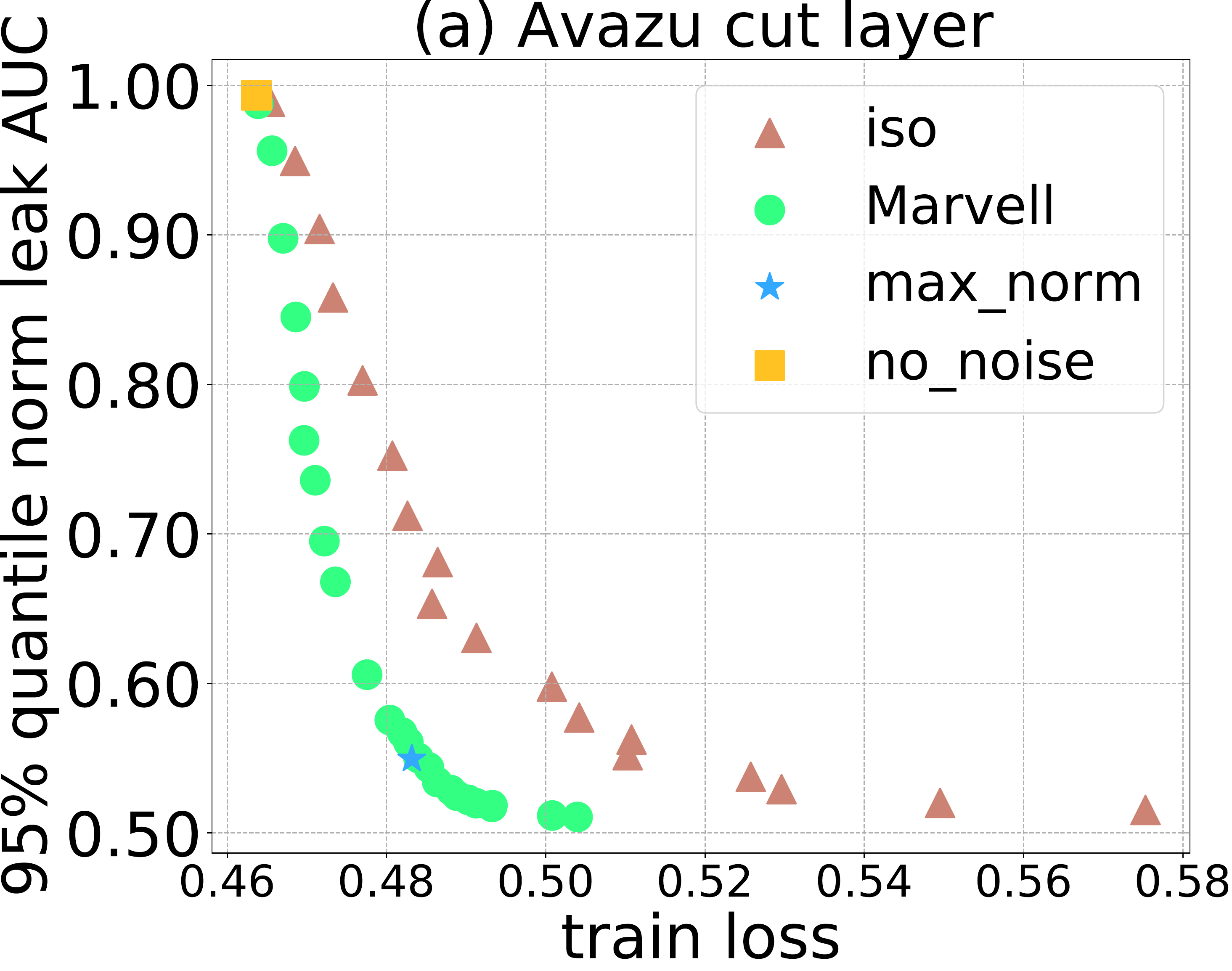}
  \end{minipage}
\begin{minipage}[t]{0.24\linewidth}
  \centering
    \includegraphics[width=0.97\linewidth]{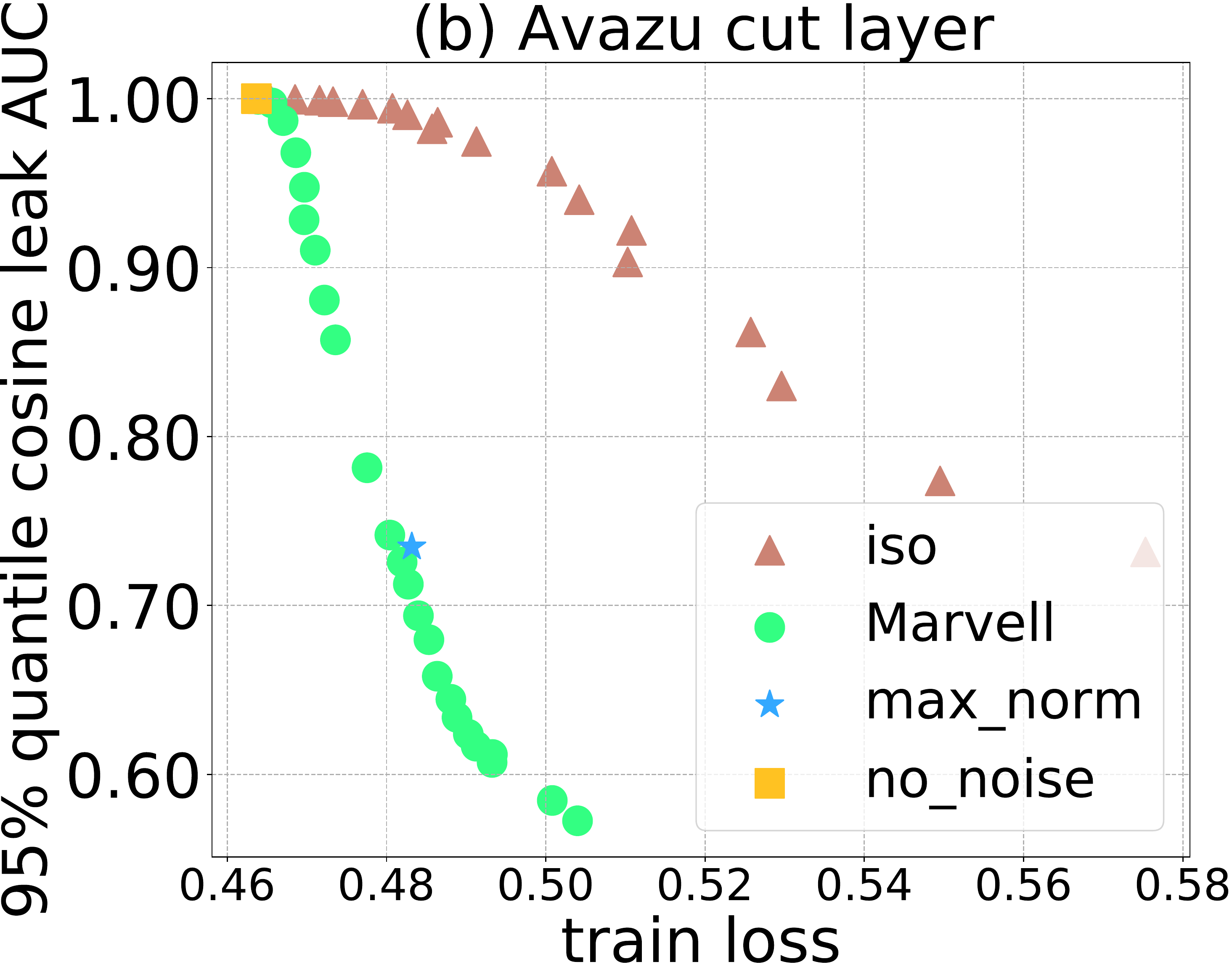}
  \end{minipage}
\begin{minipage}[t]{0.24\linewidth}
  \centering
    \includegraphics[width=0.97\linewidth]{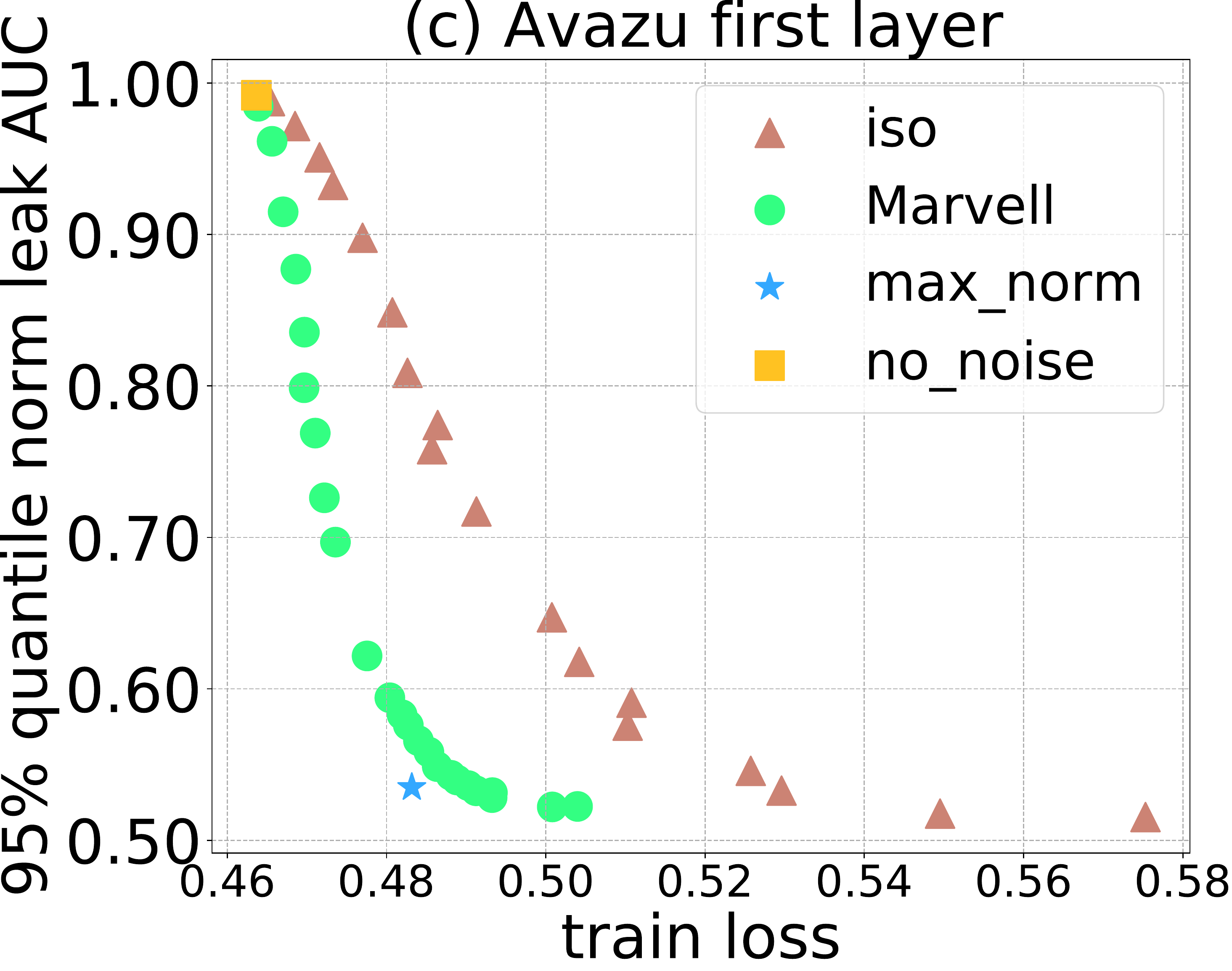}
  \end{minipage}
\begin{minipage}[t]{0.24\linewidth}
  \centering
    \includegraphics[width=0.97\linewidth]{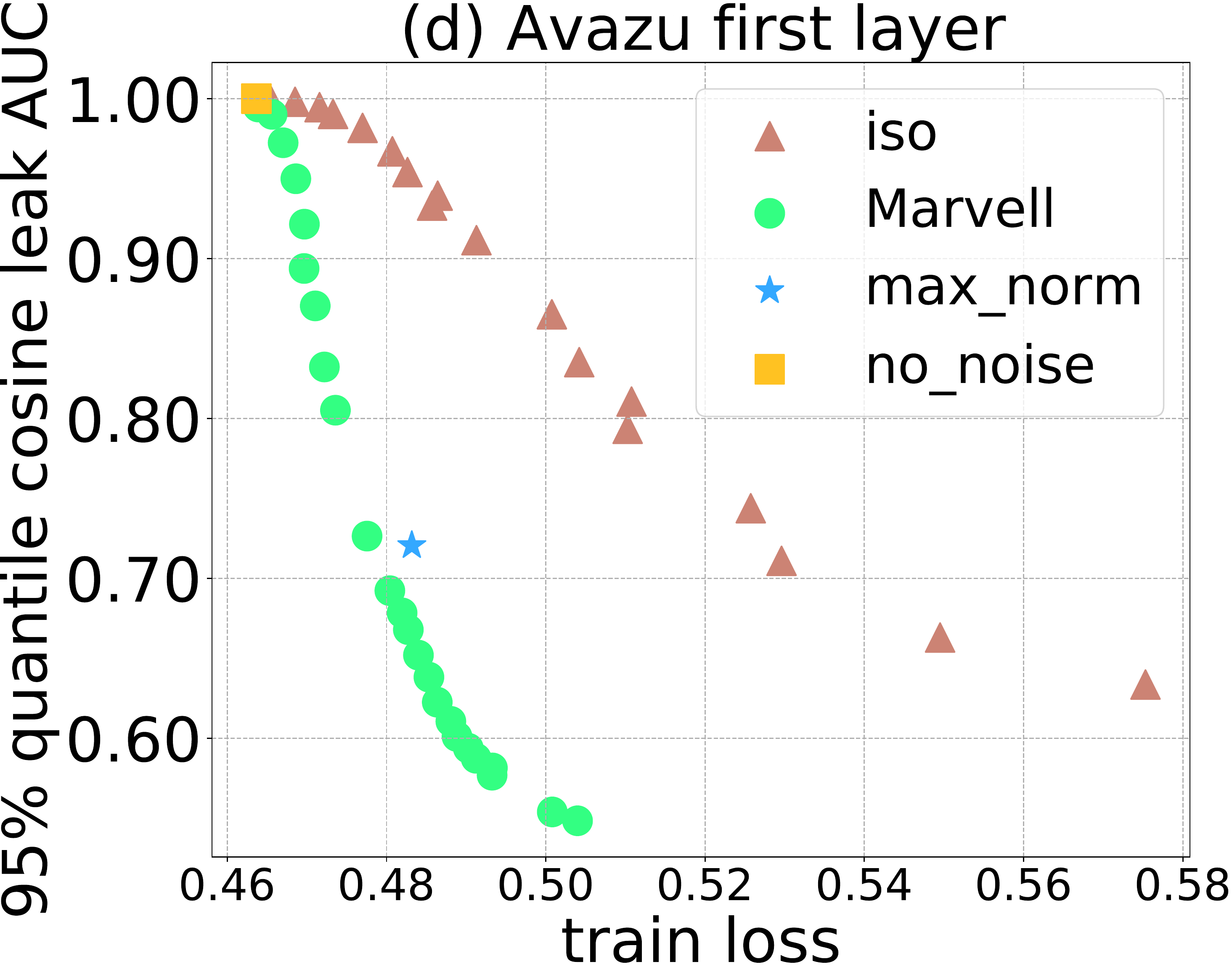}
  \end{minipage}
 
\vspace{1em}
\begin{minipage}[t]{0.24\linewidth}
  \centering
    \includegraphics[width=0.97\linewidth]{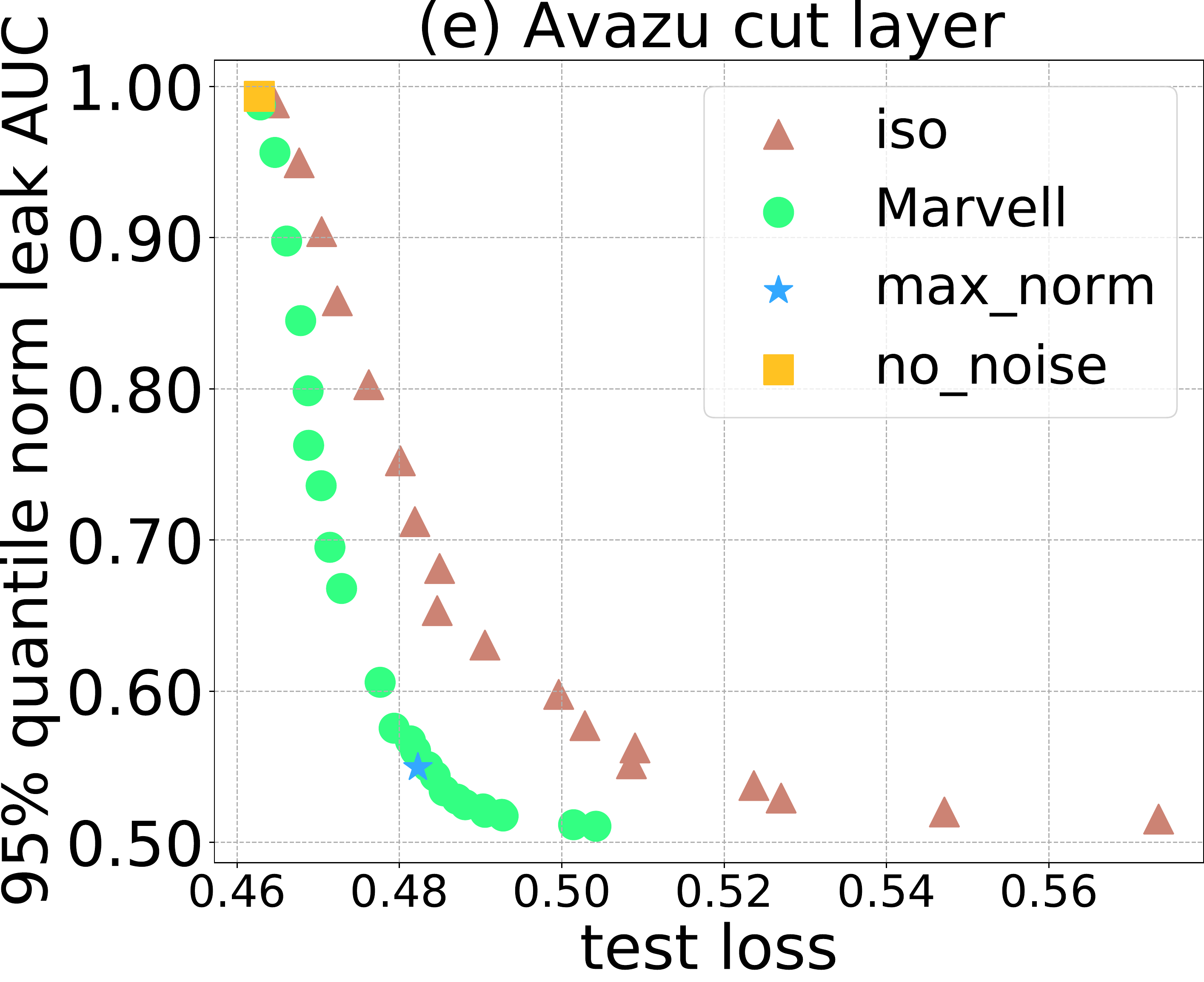}
  \end{minipage}
\begin{minipage}[t]{0.24\linewidth}
  \centering
    \includegraphics[width=0.97\linewidth]{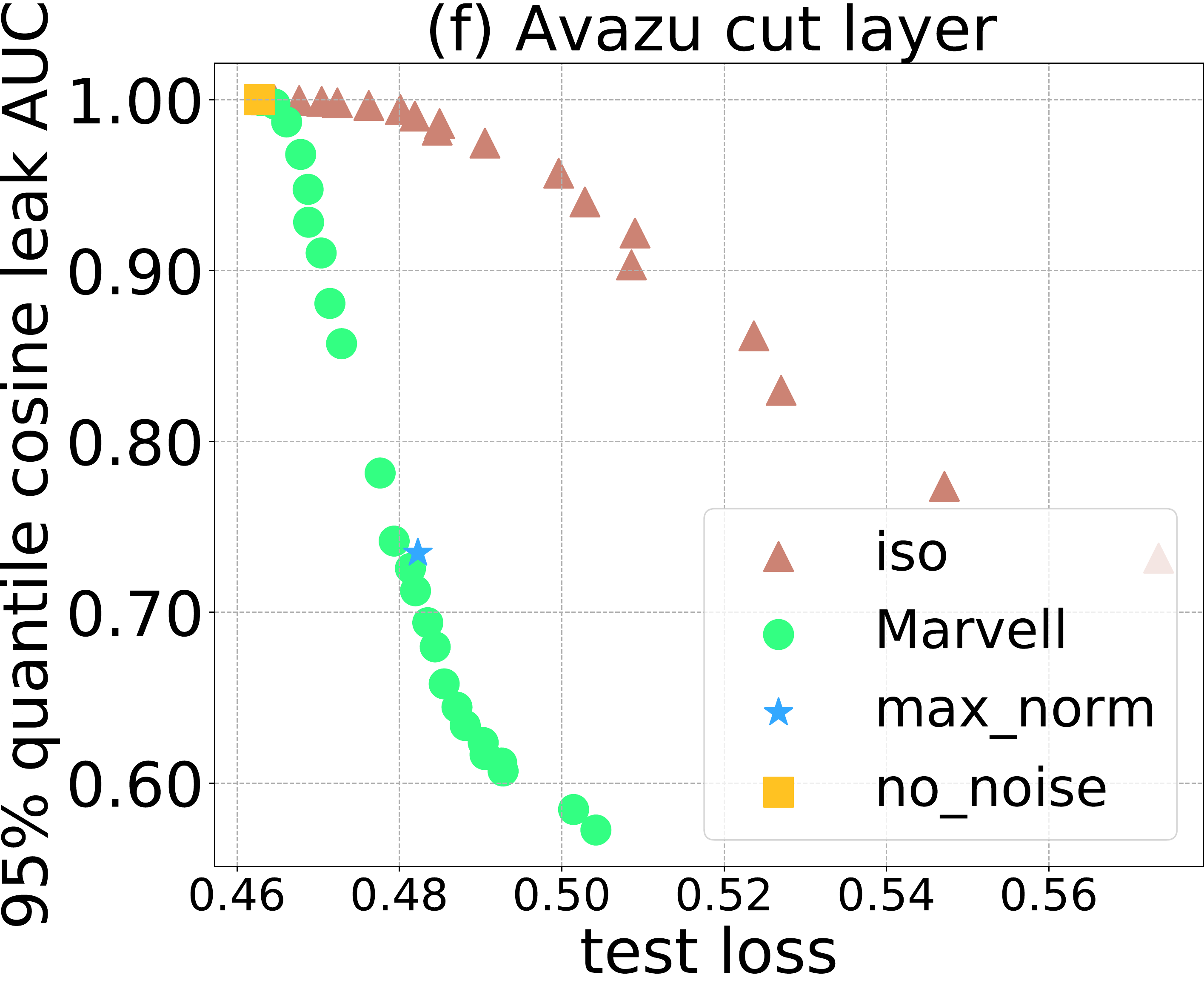}
  \end{minipage}
\begin{minipage}[t]{0.24\linewidth}
  \centering
    \includegraphics[width=0.97\linewidth]{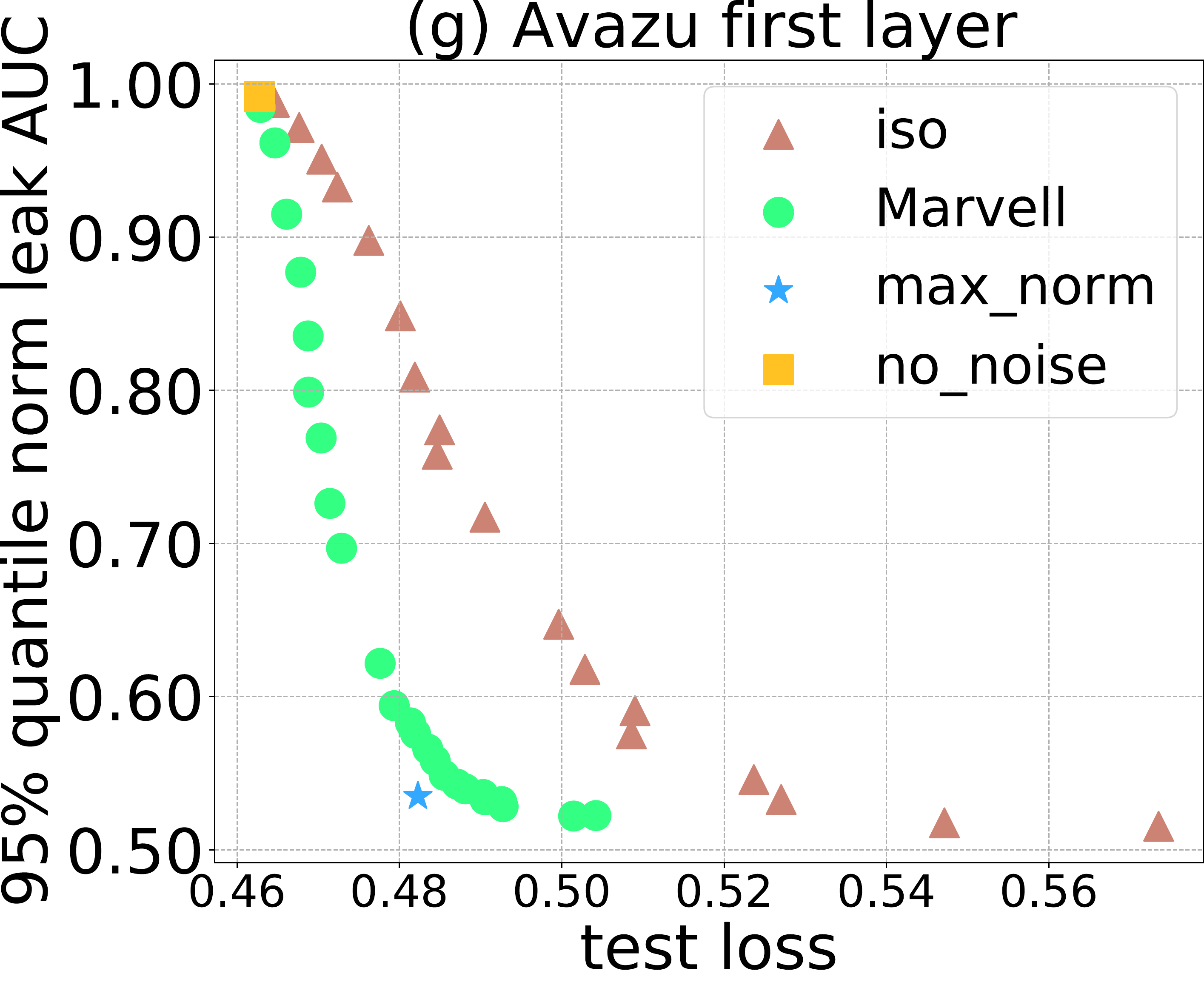}
  \end{minipage}
\begin{minipage}[t]{0.24\linewidth}
  \centering
    \includegraphics[width=0.97\linewidth]{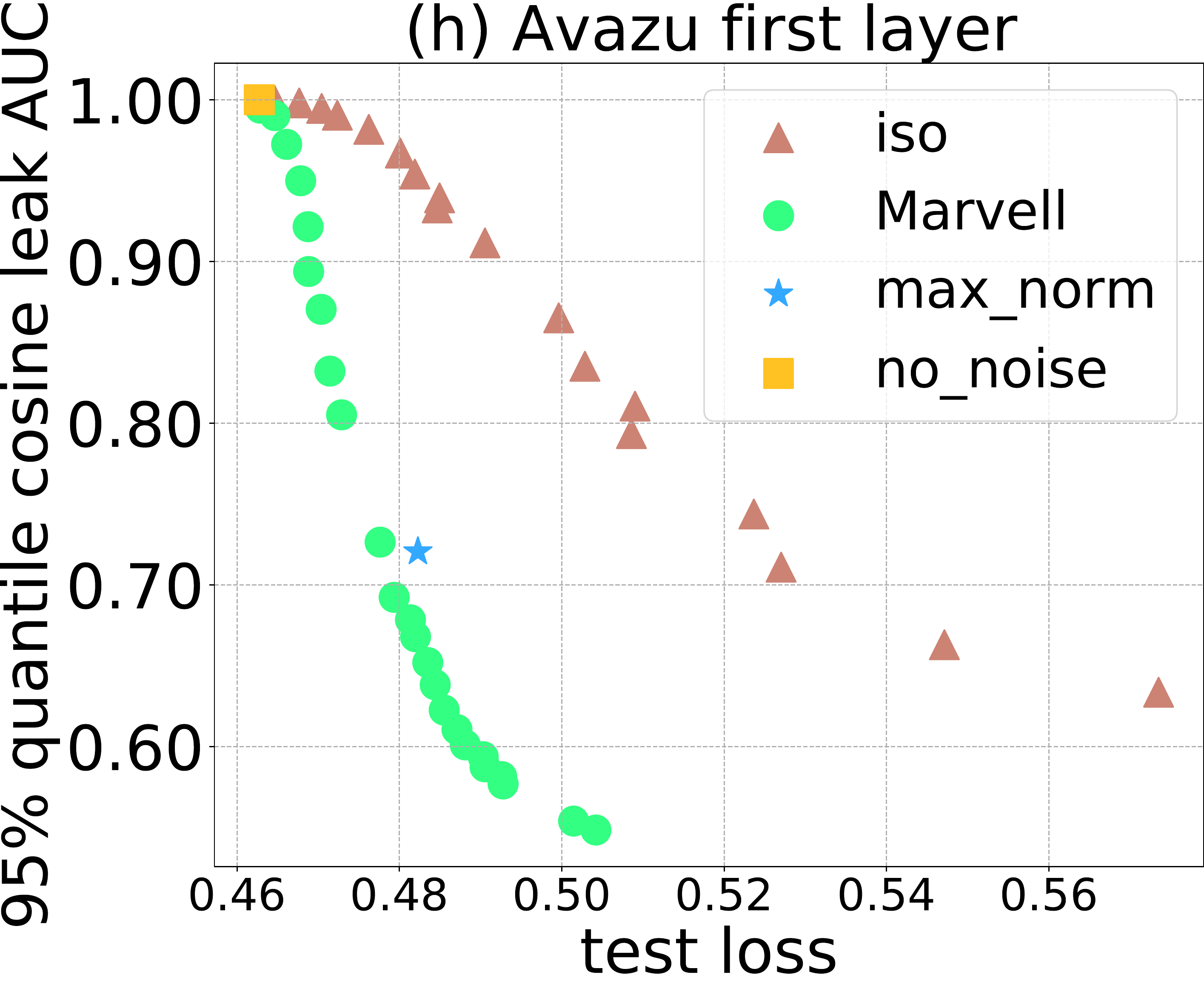}
  \end{minipage}

\vspace{1em}
\begin{minipage}[t]{0.24\linewidth}
  \centering
    \includegraphics[width=0.97\linewidth]{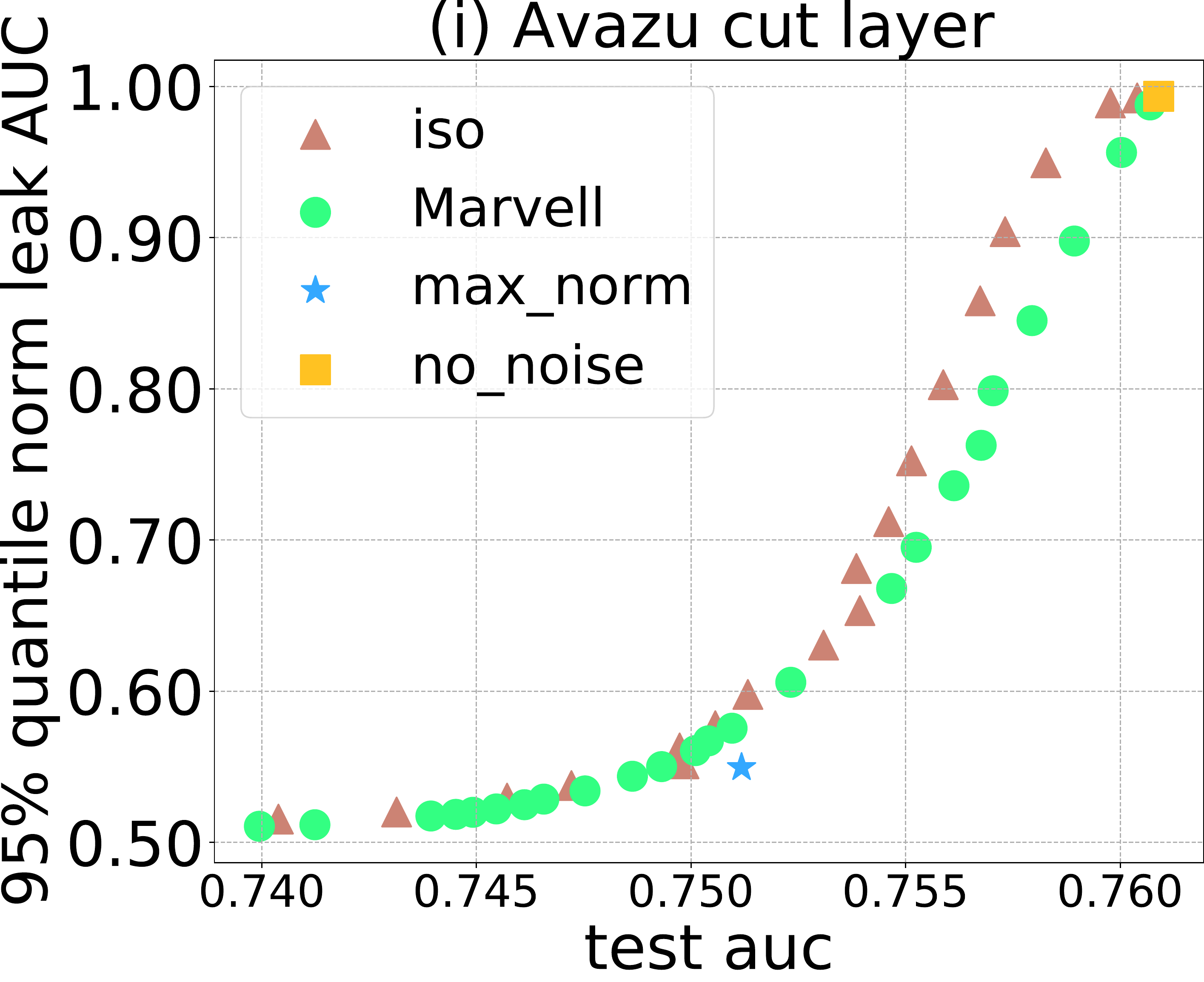}
  \end{minipage}
\begin{minipage}[t]{0.24\linewidth}
  \centering
    \includegraphics[width=0.97\linewidth]{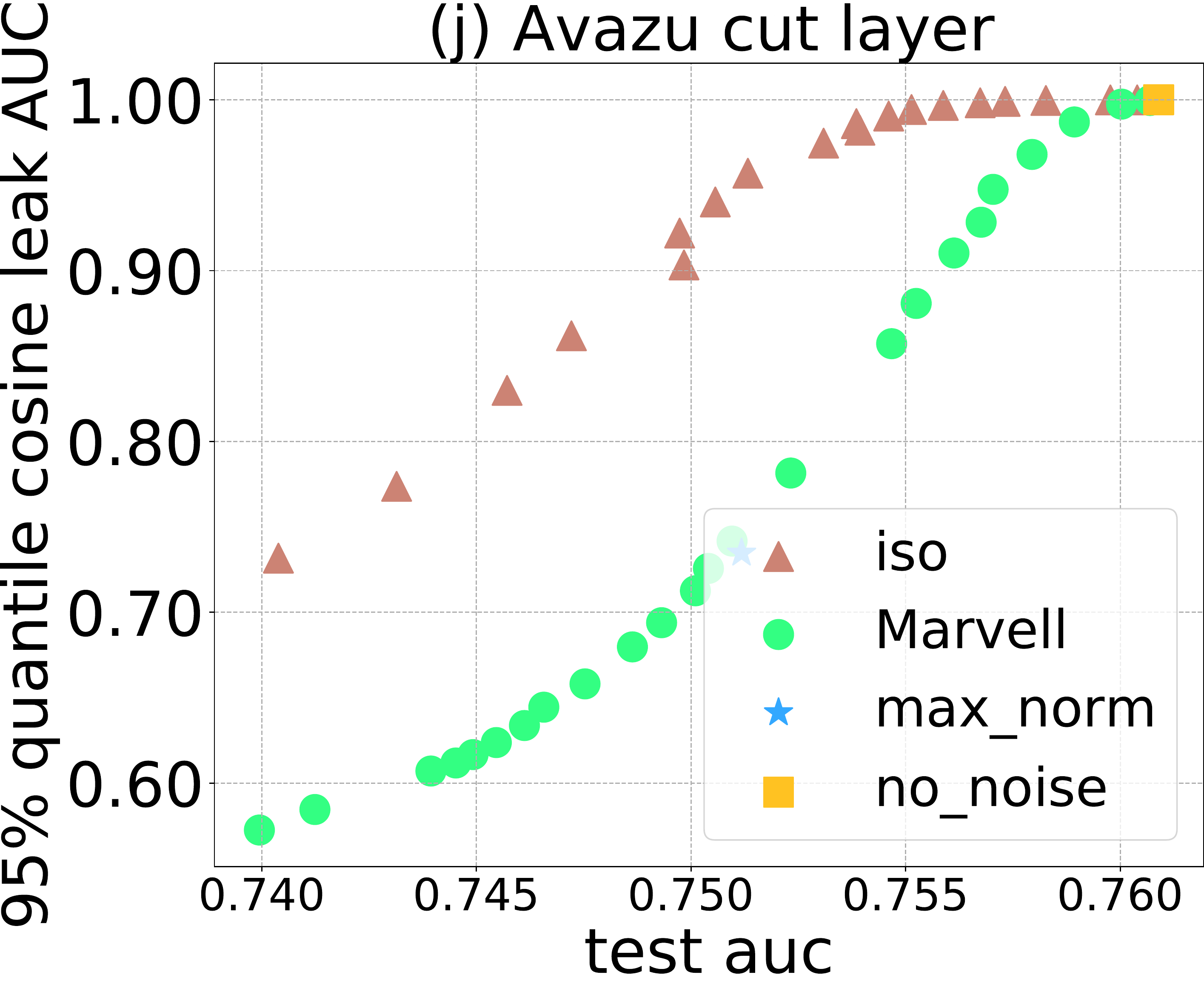}
  \end{minipage}
\begin{minipage}[t]{0.24\linewidth}
  \centering
    \includegraphics[width=0.97\linewidth]{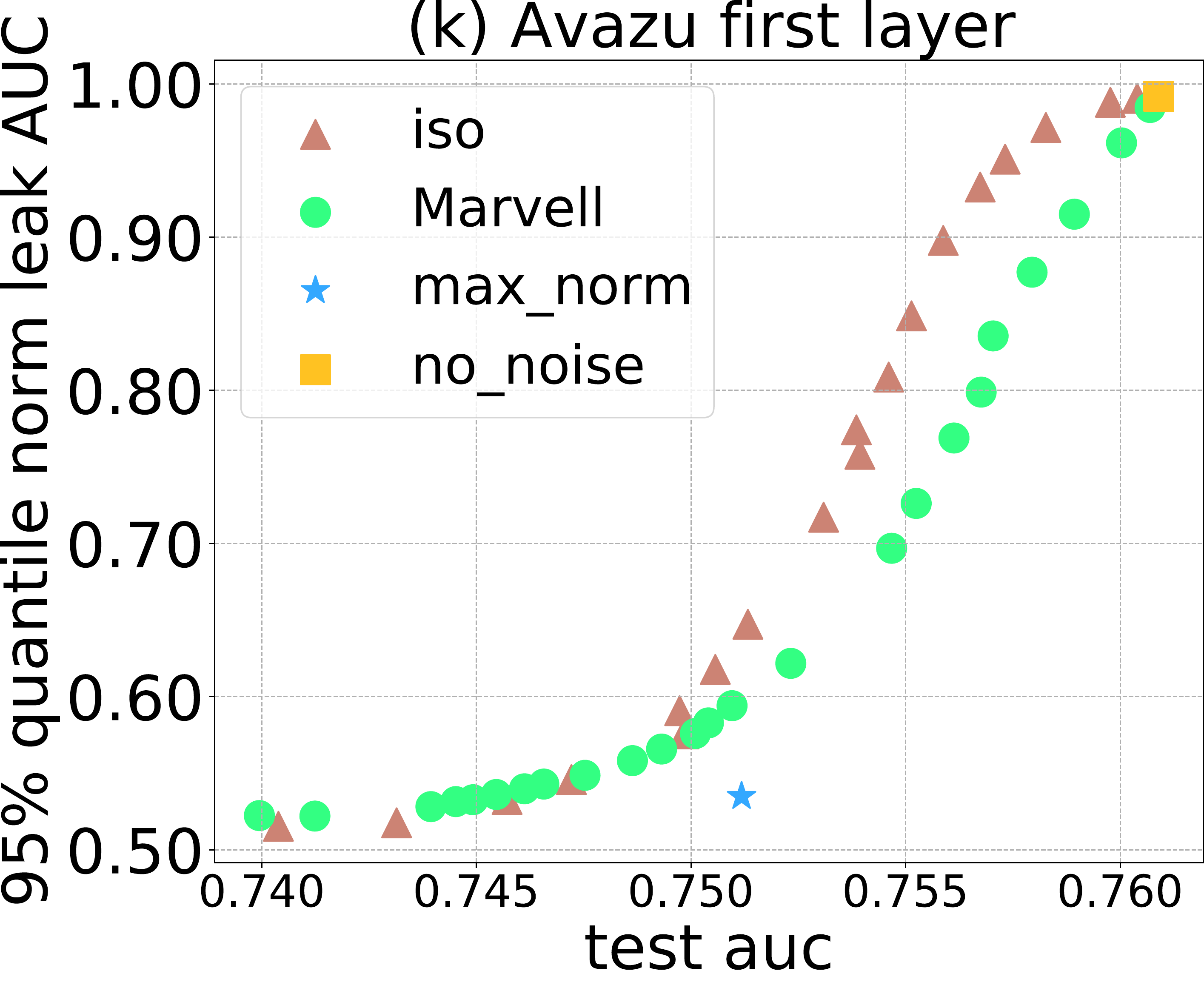}
  \end{minipage}
\begin{minipage}[t]{0.24\linewidth}
  \centering
    \includegraphics[width=0.97\linewidth]{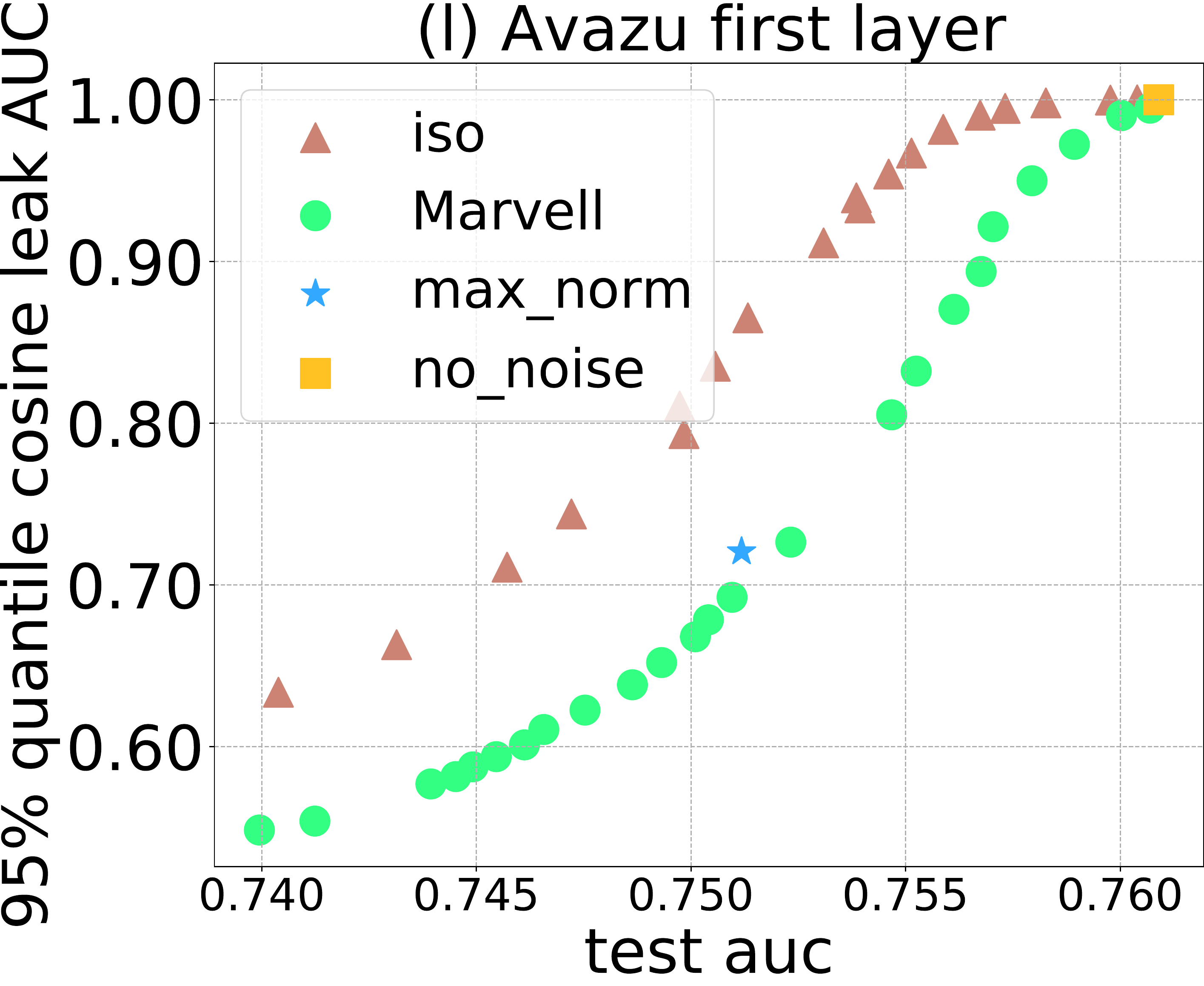}
  \end{minipage}
    \caption{Privacy (norm and cosine leak AUC) vs Utility (train loss, test loss, and test AUC) trade-off of protection methods (\Marvell, \iso, \nonoise, \maxnorm) at the cut layer and first layer on Avazu.}
\label{fig:app_avazu_utility_privacy_tradeoff}
\end{figure}

\begin{figure}[ht]
    \centering
\begin{minipage}[t]{0.24\linewidth}
  \centering
    \includegraphics[width=0.97\linewidth]{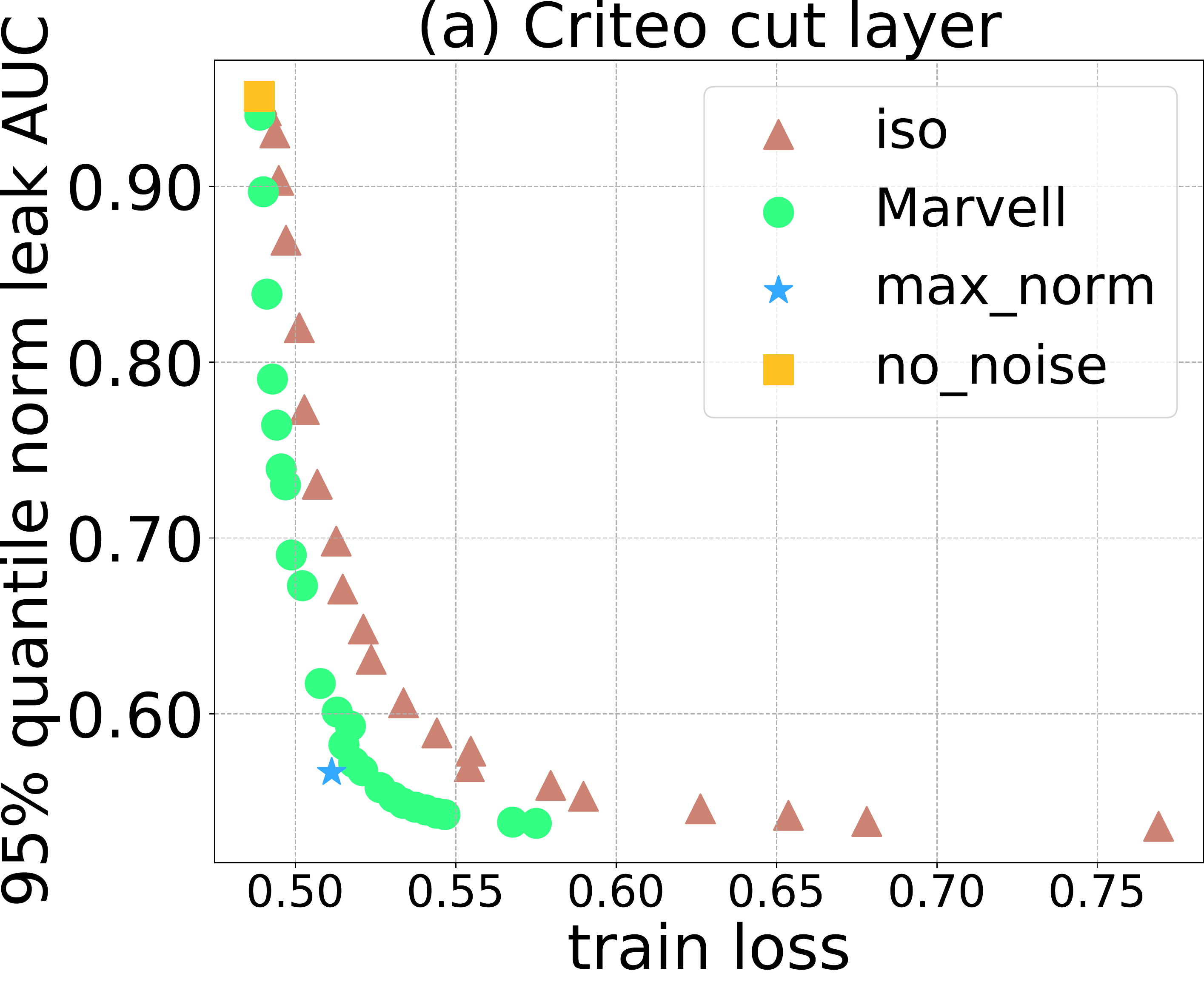}
  \end{minipage}
\begin{minipage}[t]{0.24\linewidth}
  \centering
    \includegraphics[width=0.97\linewidth]{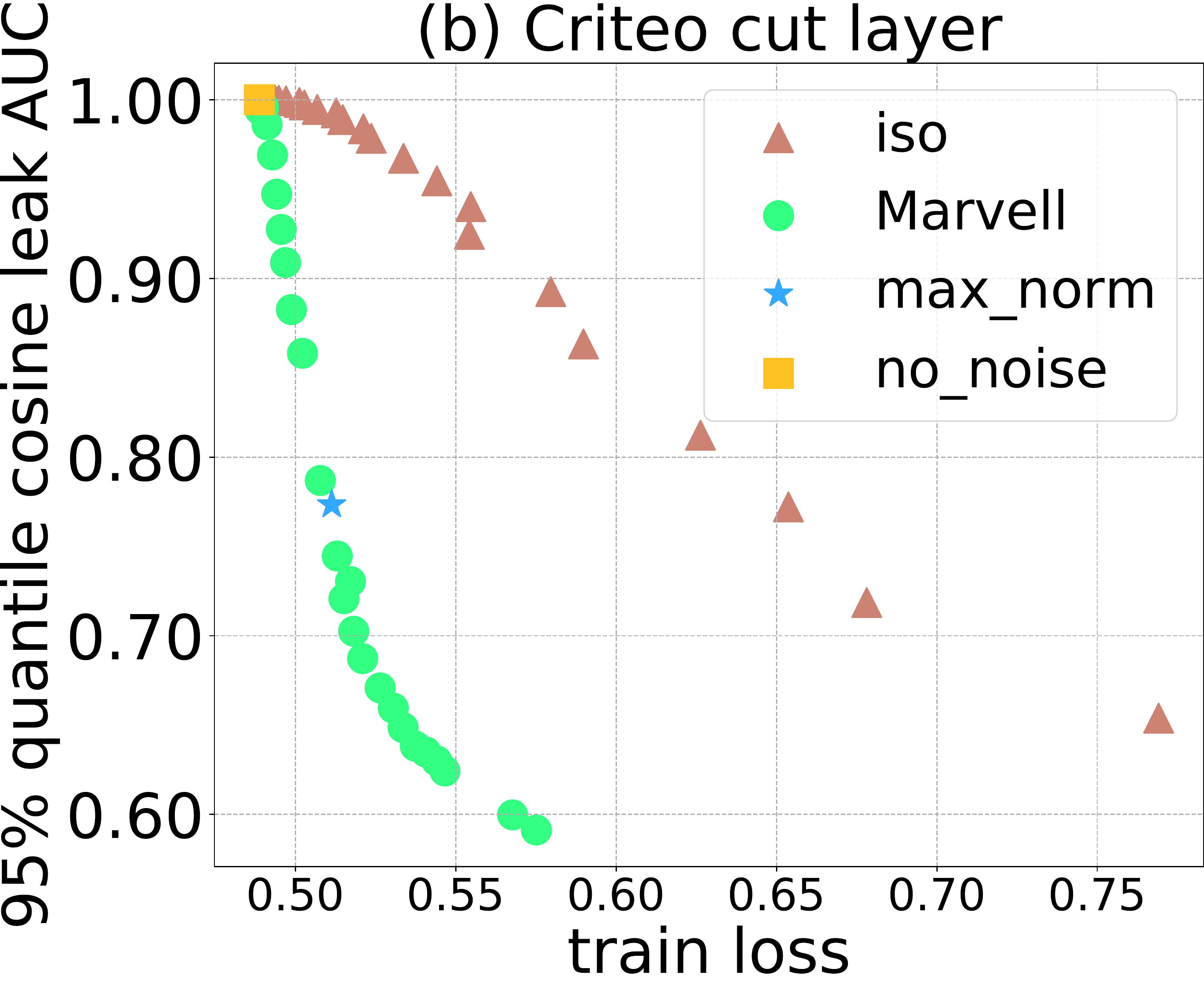}
  \end{minipage}
\begin{minipage}[t]{0.24\linewidth}
  \centering
    \includegraphics[width=0.97\linewidth]{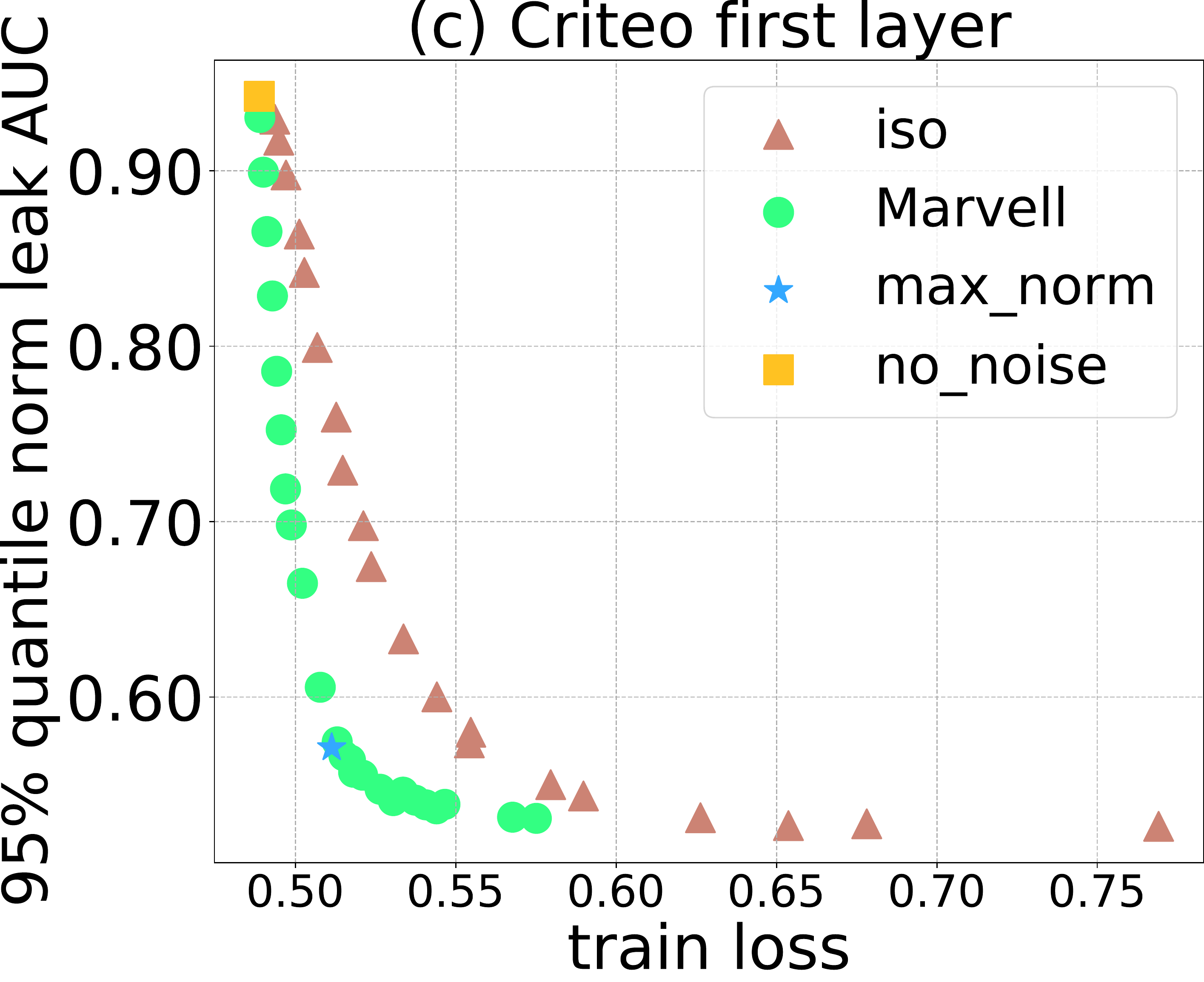}
  \end{minipage}
\begin{minipage}[t]{0.24\linewidth}
  \centering
    \includegraphics[width=0.97\linewidth]{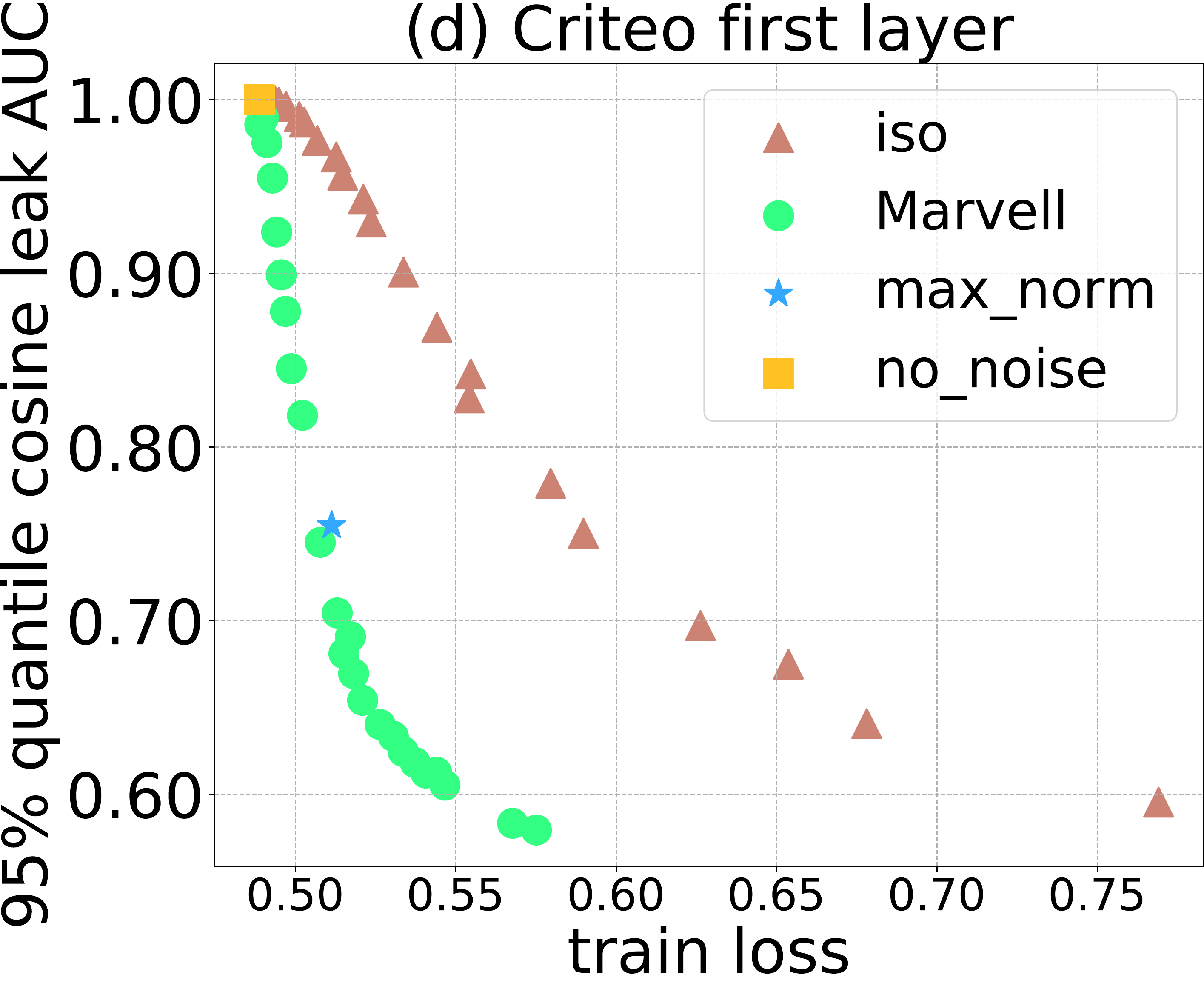}
  \end{minipage}
 
\vspace{1em}
\begin{minipage}[t]{0.24\linewidth}
  \centering
    \includegraphics[width=0.97\linewidth]{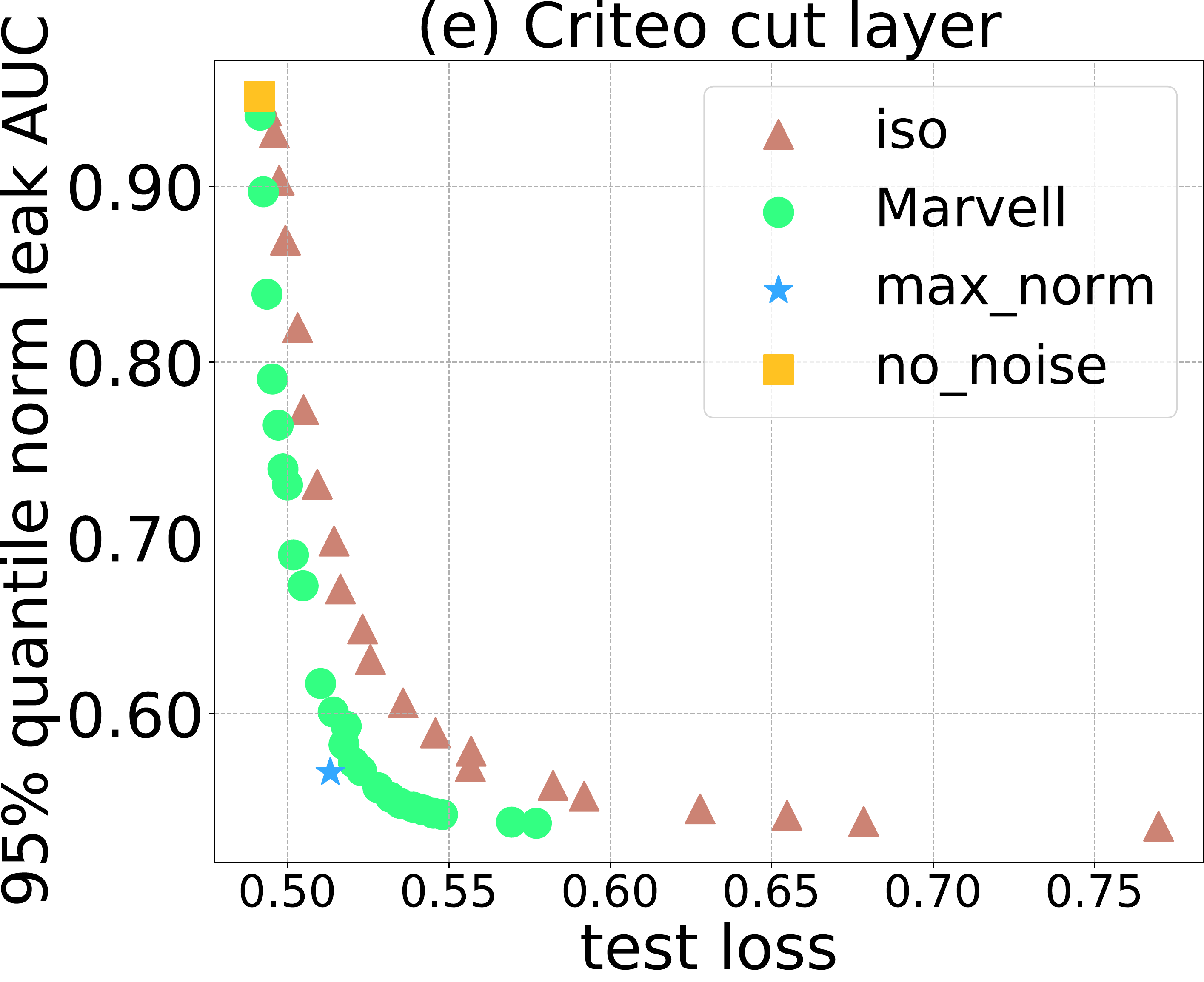}
  \end{minipage}
\begin{minipage}[t]{0.24\linewidth}
  \centering
    \includegraphics[width=0.97\linewidth]{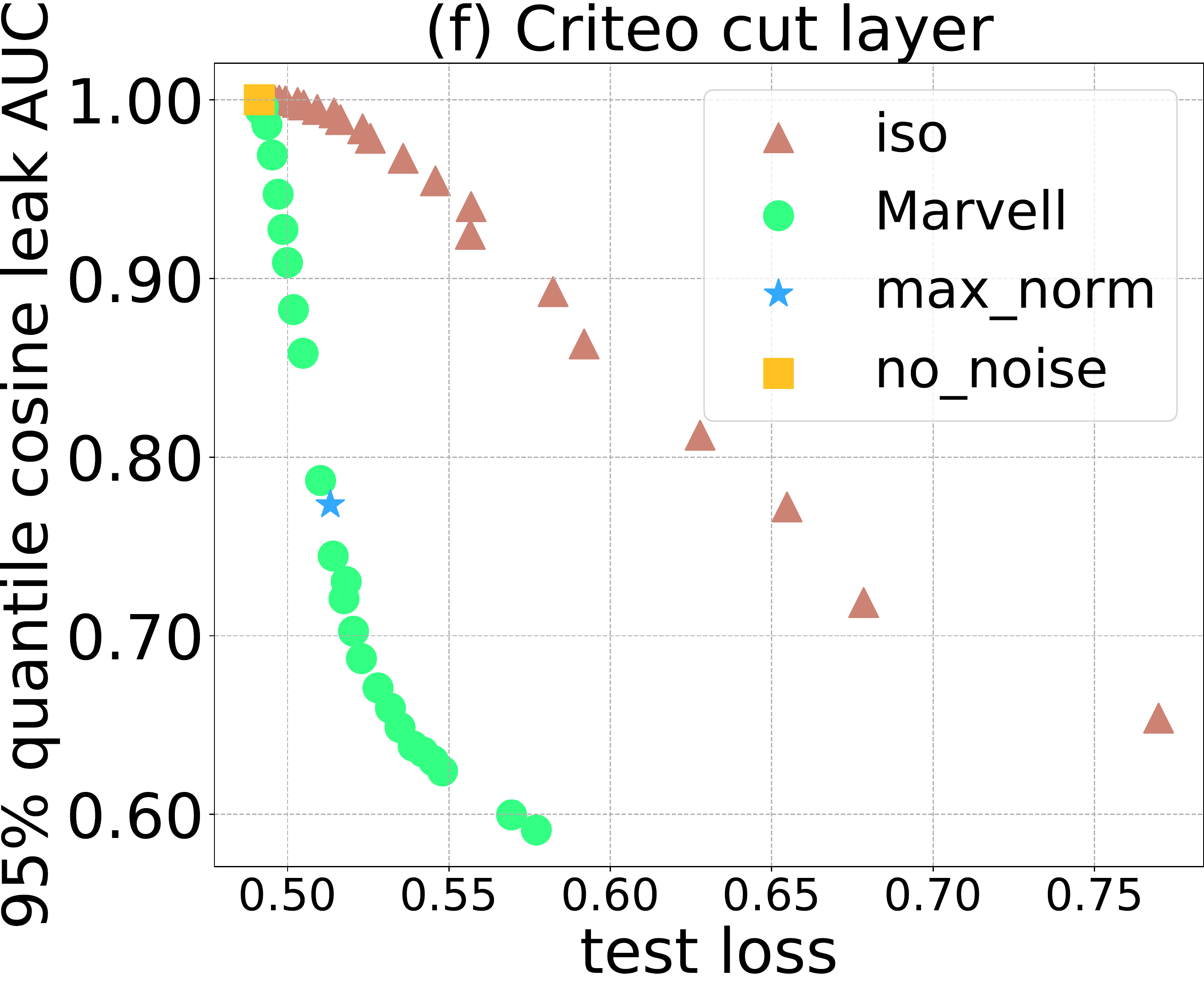}
  \end{minipage}
\begin{minipage}[t]{0.24\linewidth}
  \centering
    \includegraphics[width=0.97\linewidth]{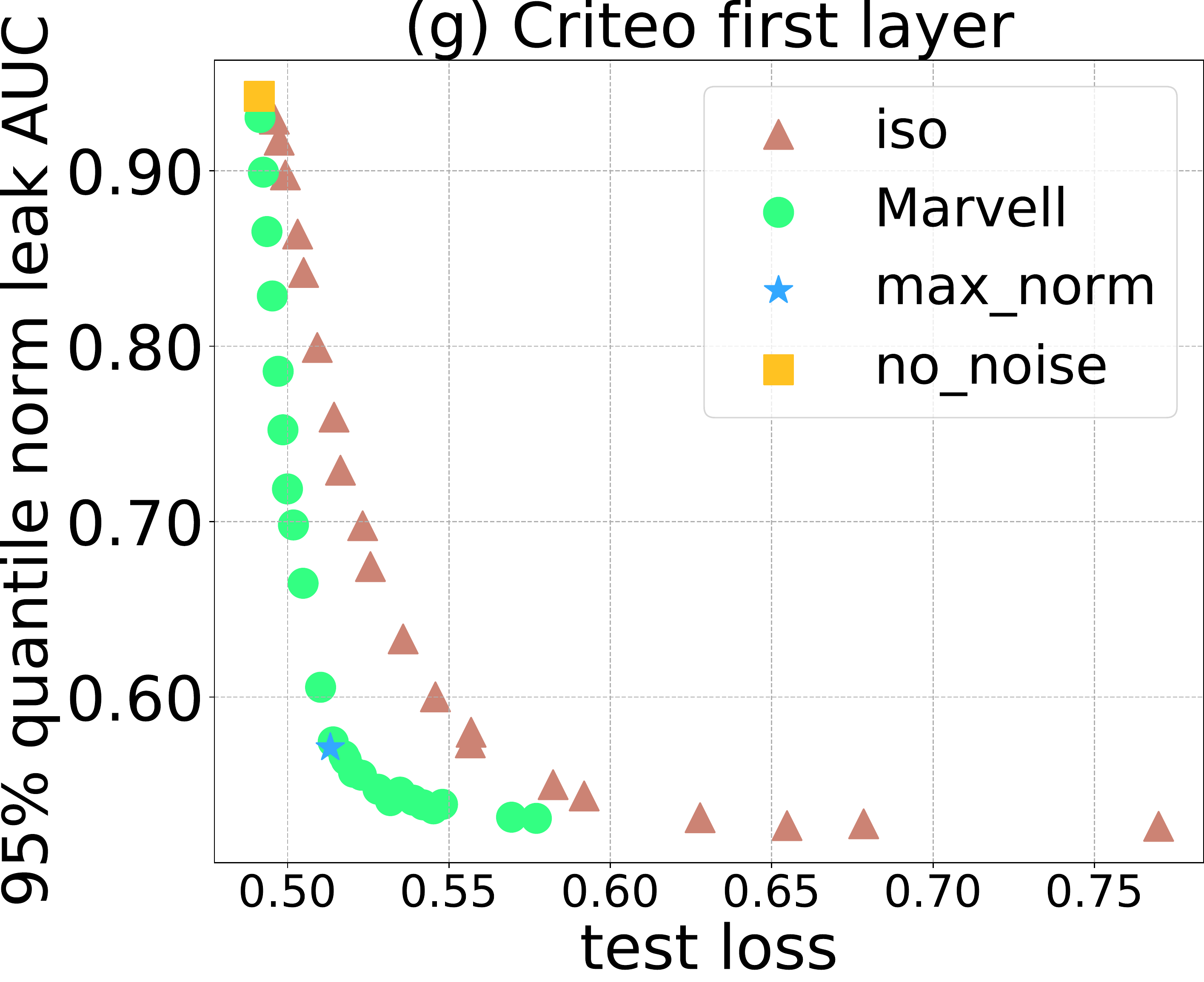}
  \end{minipage}
\begin{minipage}[t]{0.24\linewidth}
  \centering
    \includegraphics[width=0.97\linewidth]{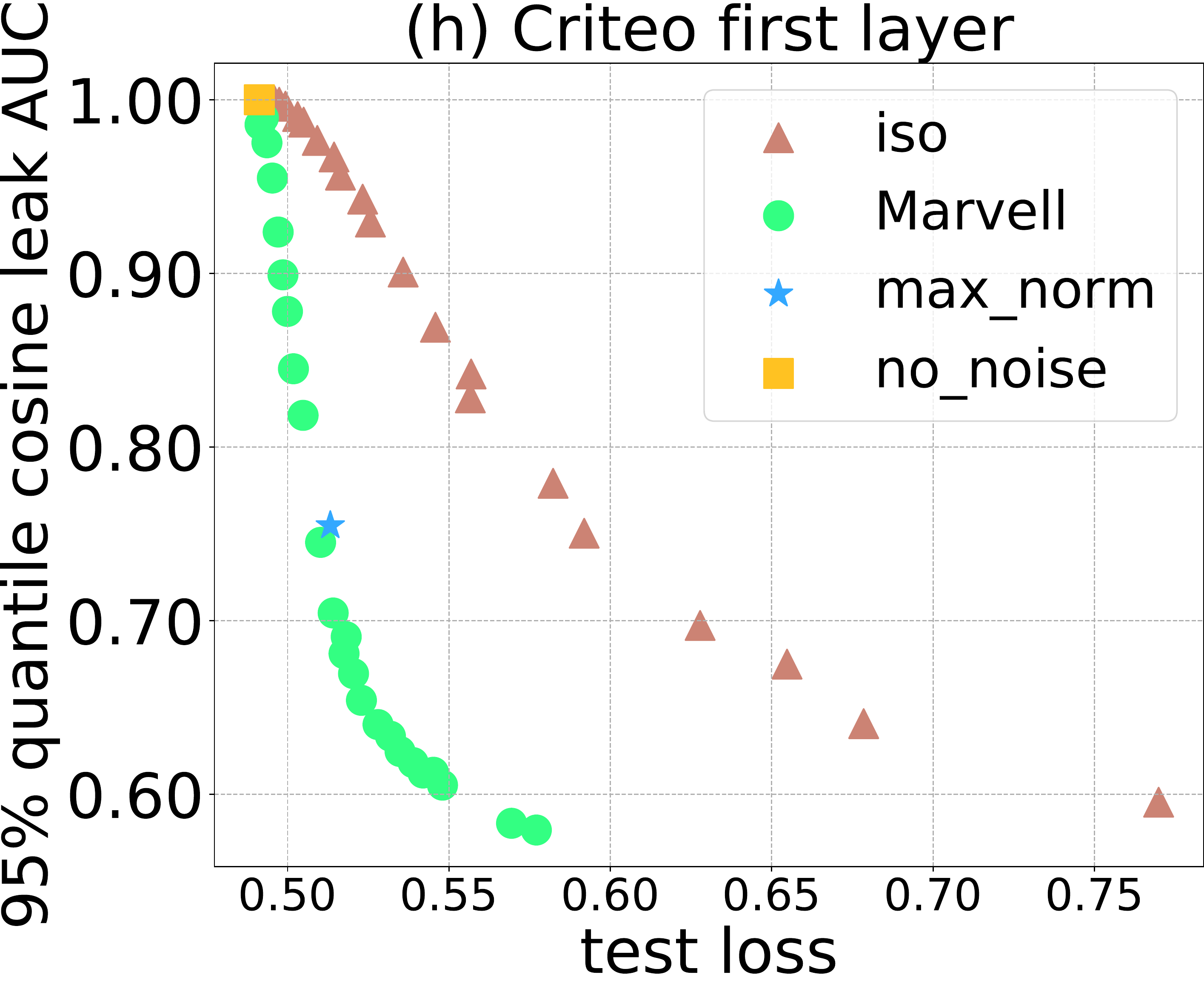}
  \end{minipage}

\vspace{1em}
\begin{minipage}[t]{0.24\linewidth}
  \centering
    \includegraphics[width=0.97\linewidth]{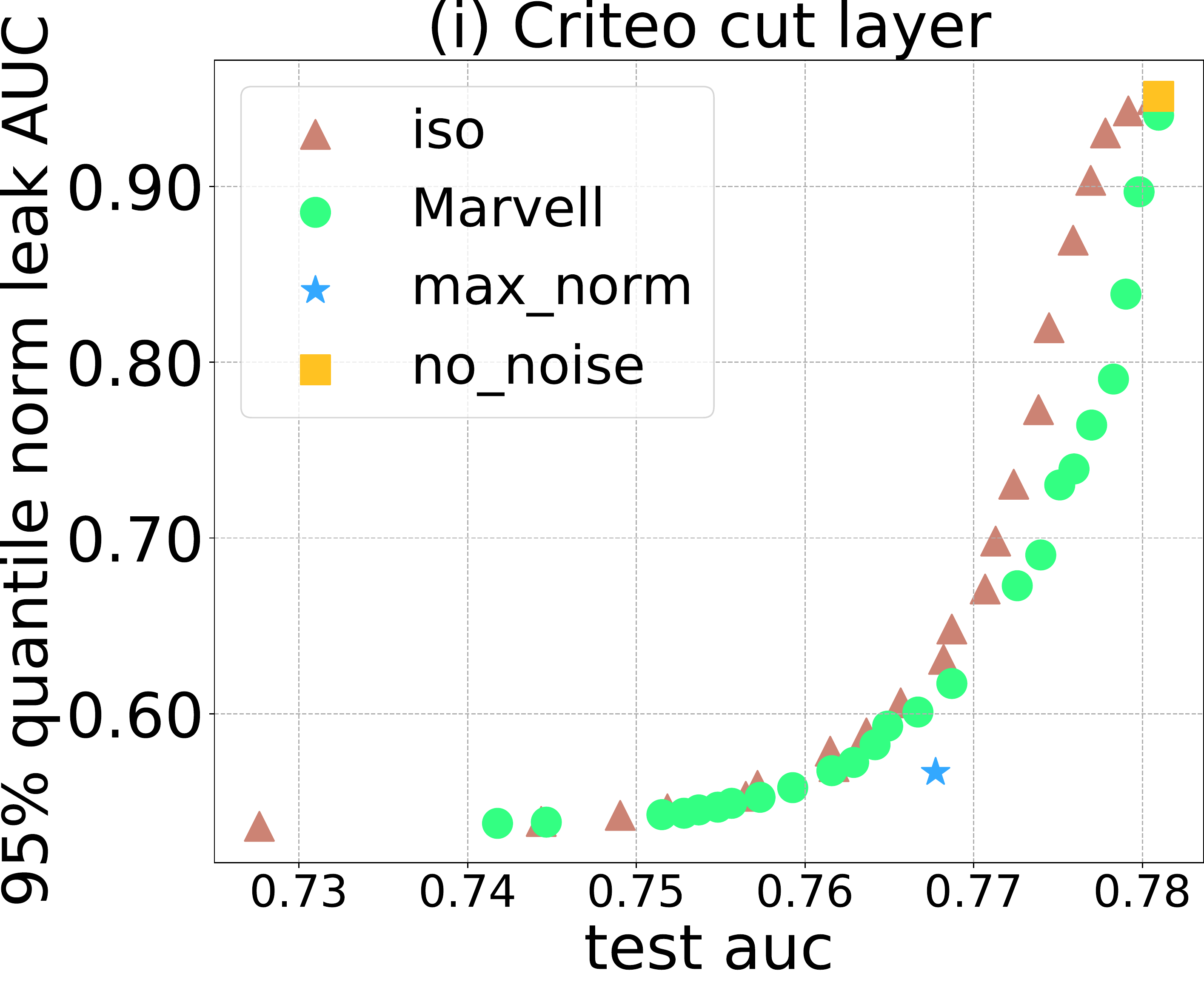}
  \end{minipage}
\begin{minipage}[t]{0.24\linewidth}
  \centering
    \includegraphics[width=0.97\linewidth]{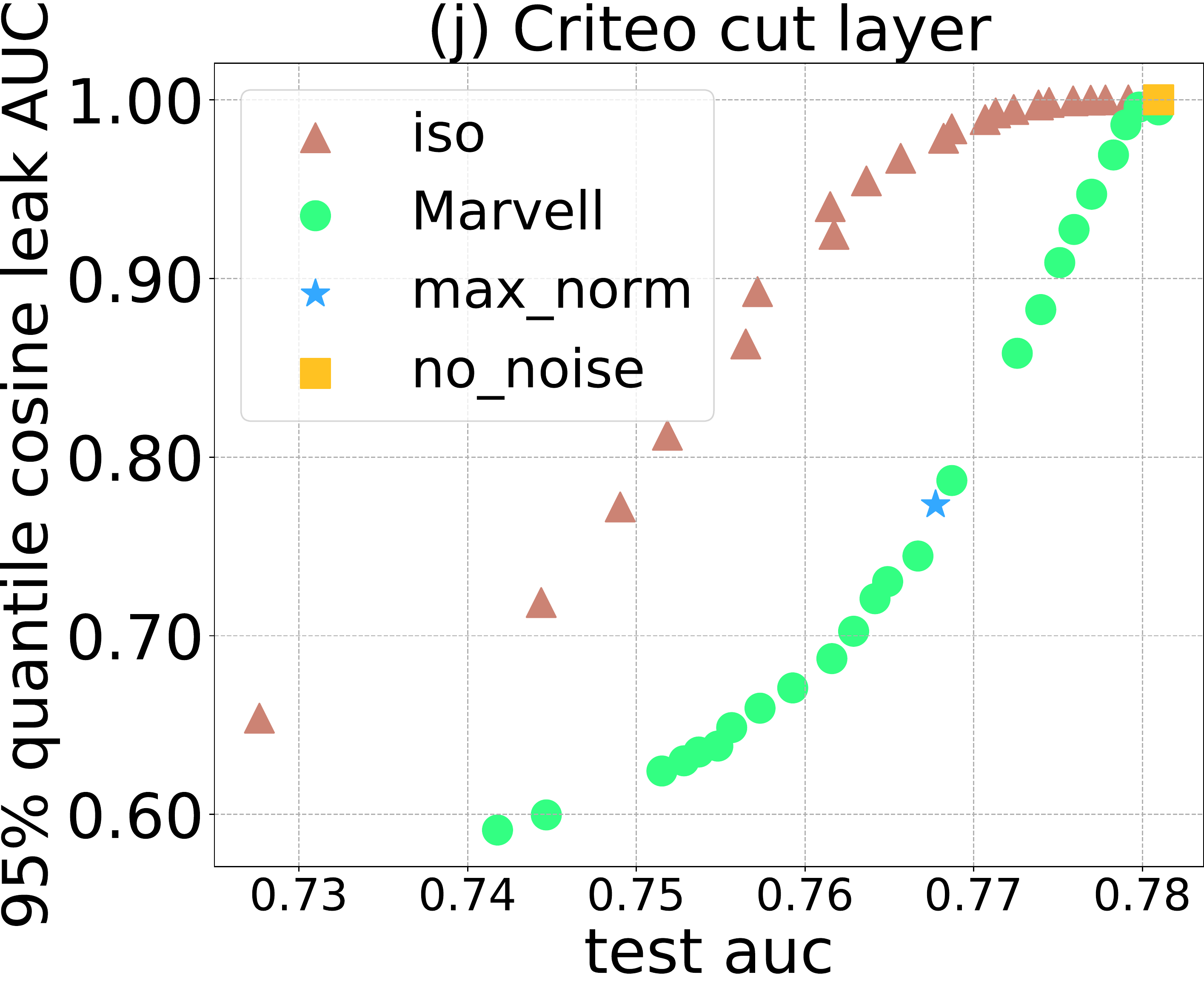}
  \end{minipage}
\begin{minipage}[t]{0.24\linewidth}
  \centering
    \includegraphics[width=0.97\linewidth]{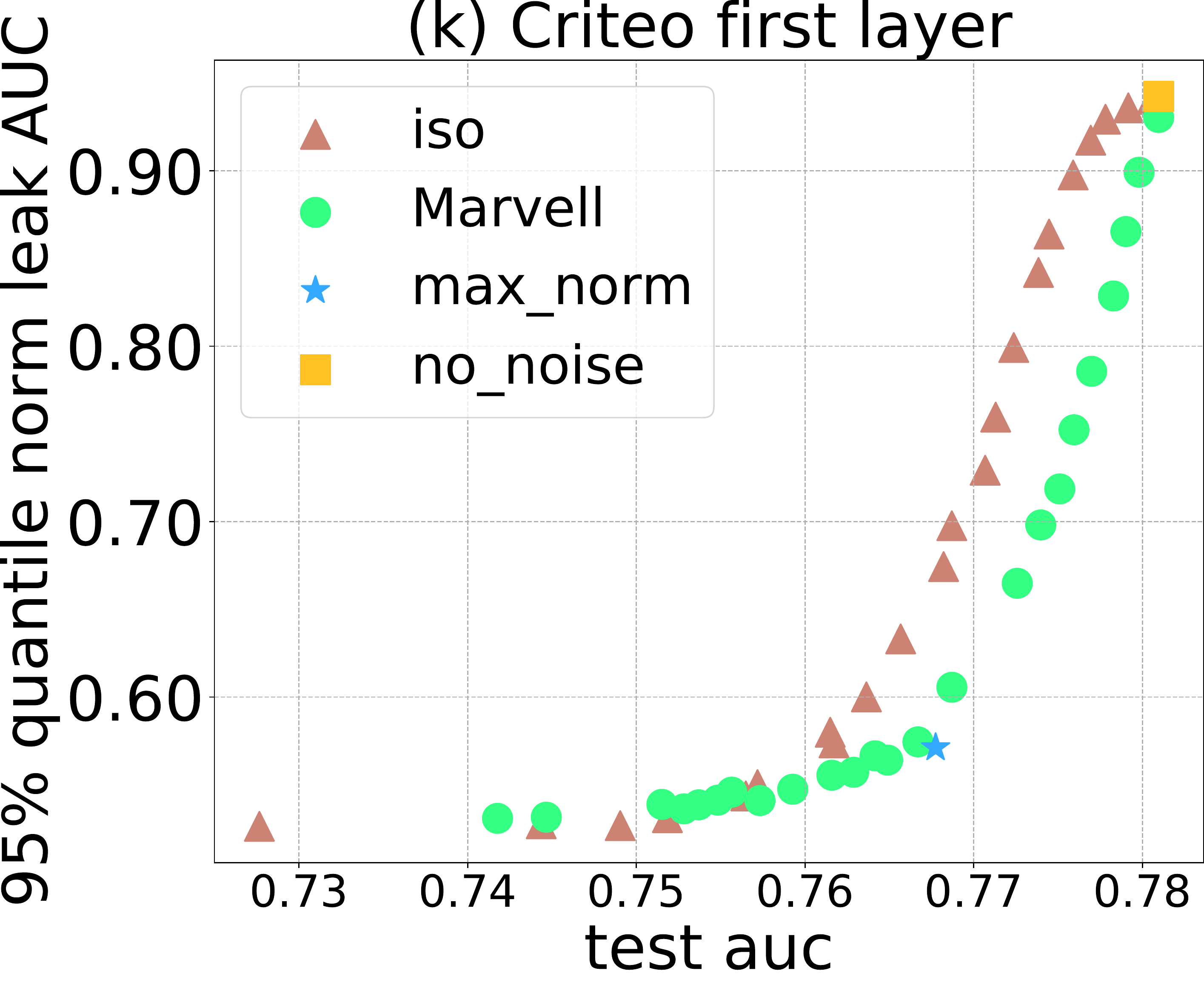}
  \end{minipage}
\begin{minipage}[t]{0.24\linewidth}
  \centering
    \includegraphics[width=0.97\linewidth]{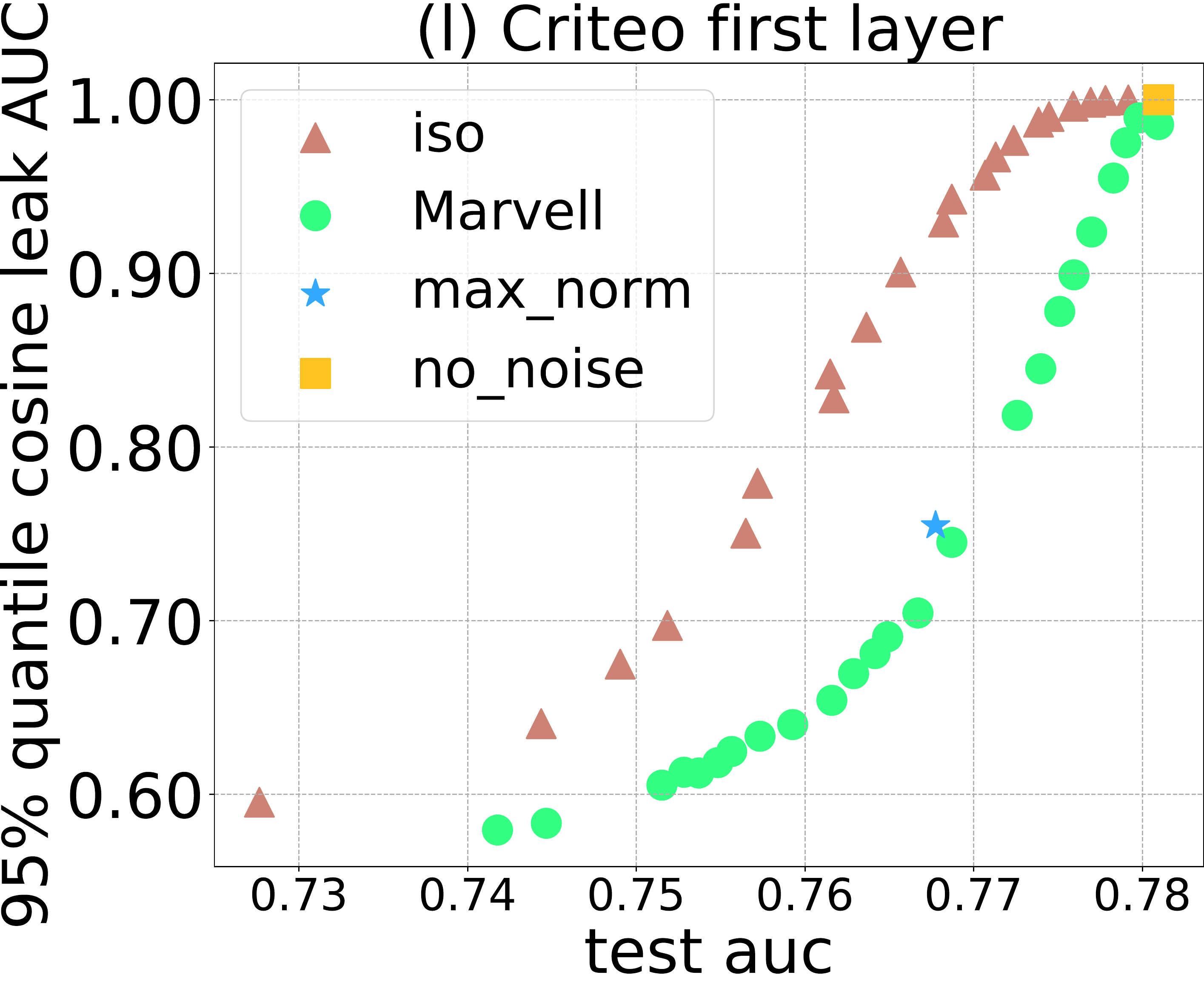}
  \end{minipage}
    \caption{Privacy (norm and cosine leak AUC) vs Utility (train loss, test loss, and test AUC) trade-off of protection methods (\Marvell, \iso, \nonoise, \maxnorm) at the cut layer and first layer on Criteo.}
\label{fig:app_criteo_utility_privacy_tradeoff}
\end{figure}

\begin{figure}[ht]
    \centering
\begin{minipage}[t]{0.24\linewidth}
  \centering
    \includegraphics[width=0.97\linewidth]{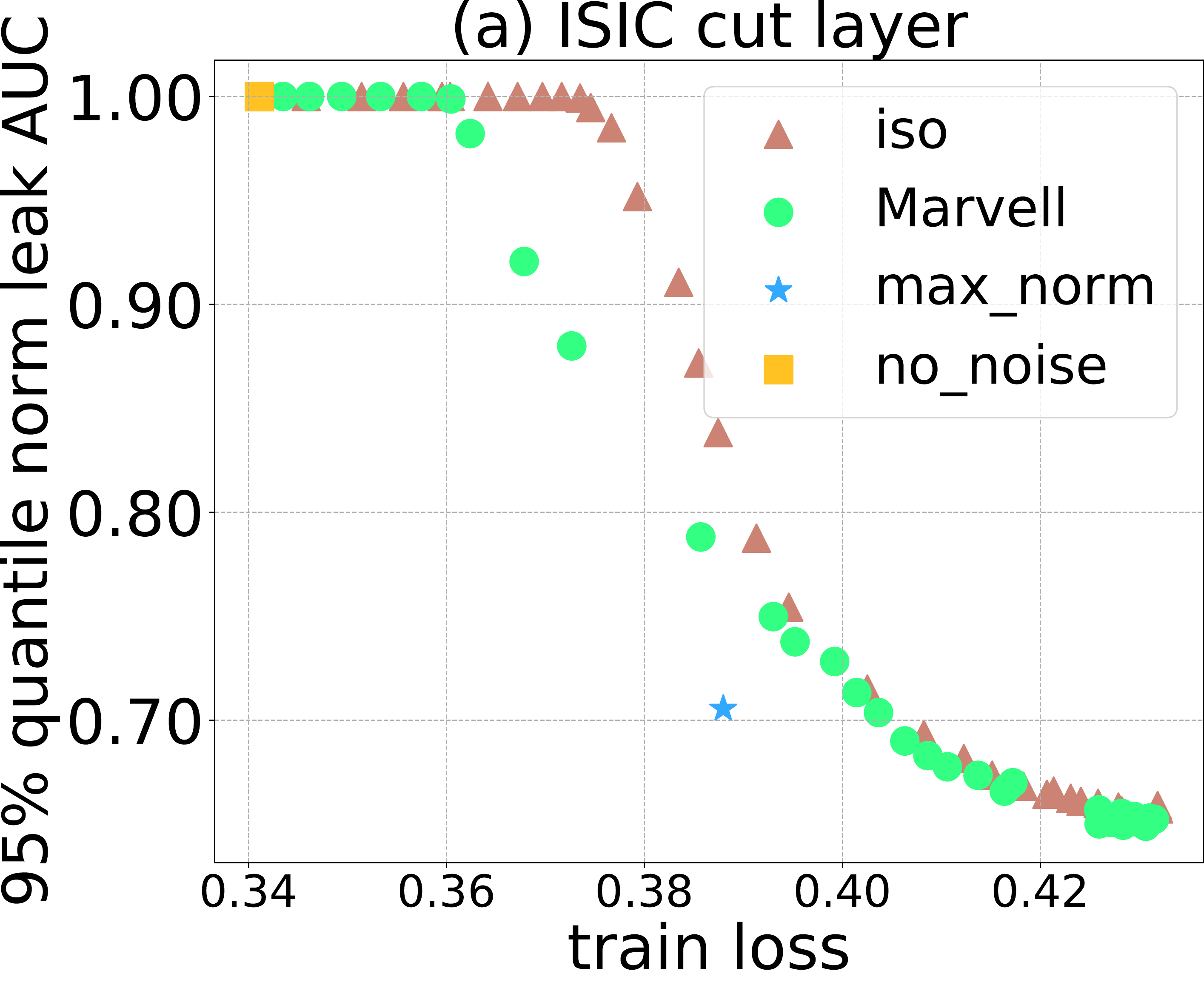}
  \end{minipage}
\begin{minipage}[t]{0.24\linewidth}
  \centering
    \includegraphics[width=0.97\linewidth]{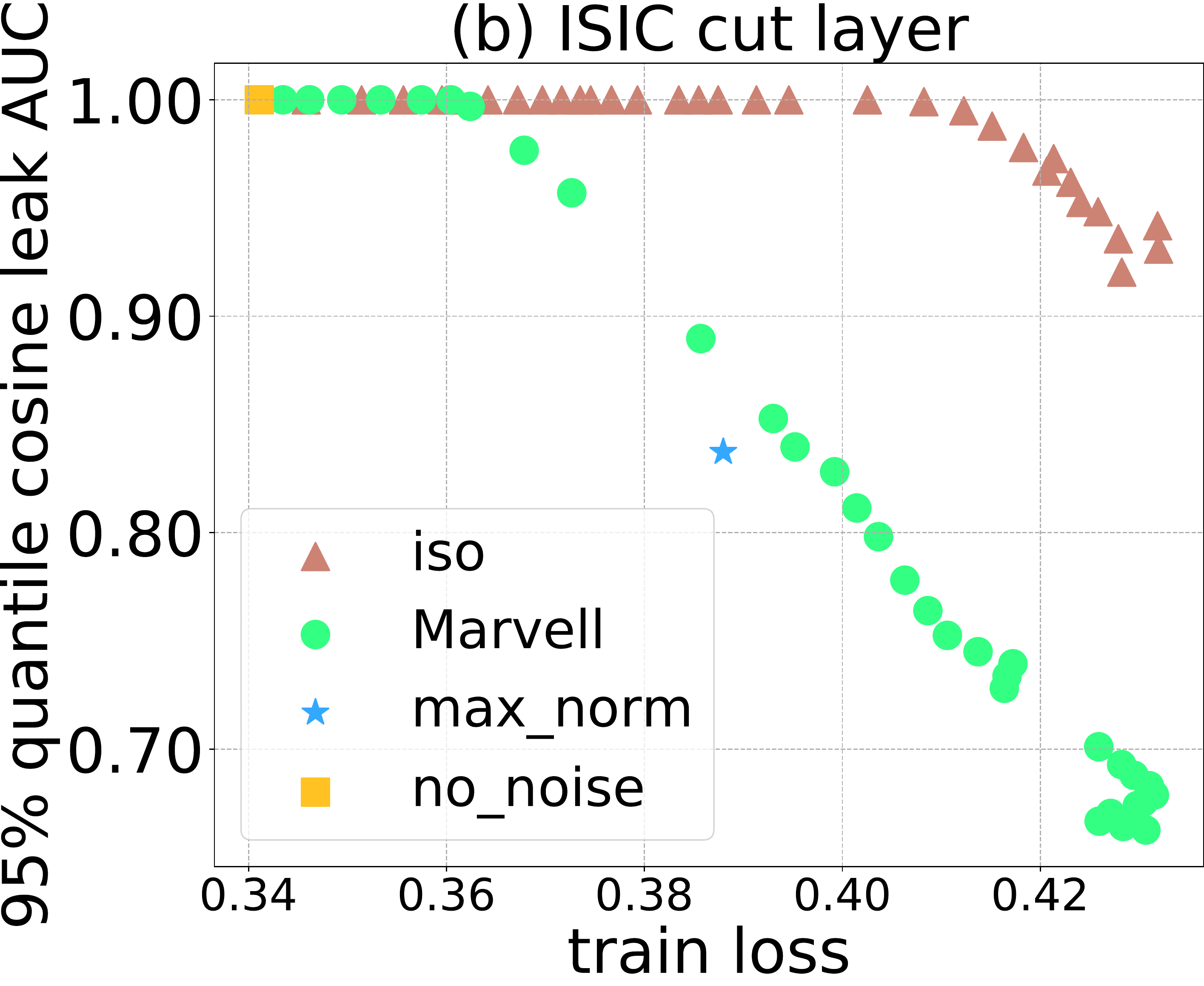}
  \end{minipage}
\begin{minipage}[t]{0.24\linewidth}
  \centering
    \includegraphics[width=0.97\linewidth]{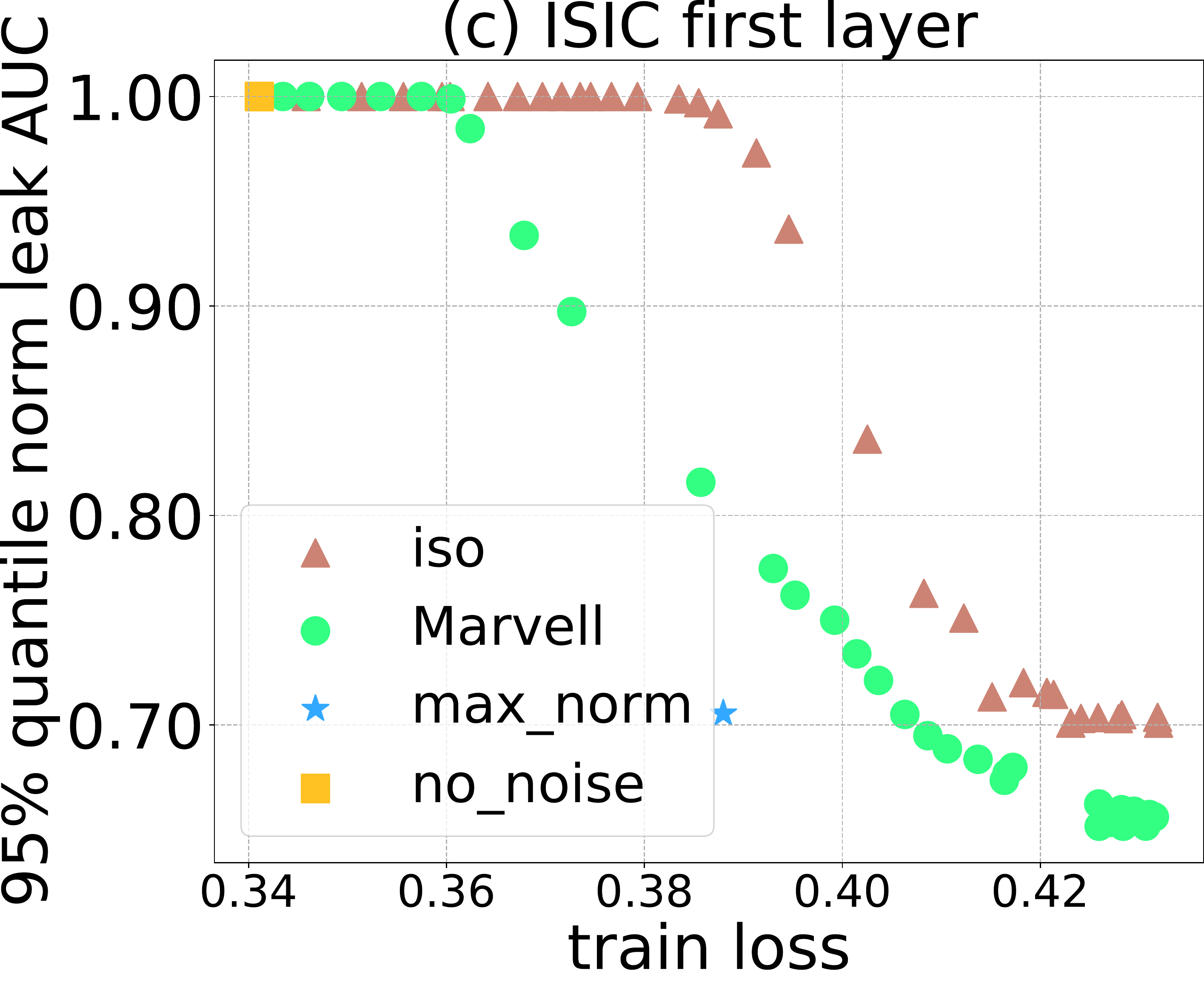}
  \end{minipage}
\begin{minipage}[t]{0.24\linewidth}
  \centering
    \includegraphics[width=0.97\linewidth]{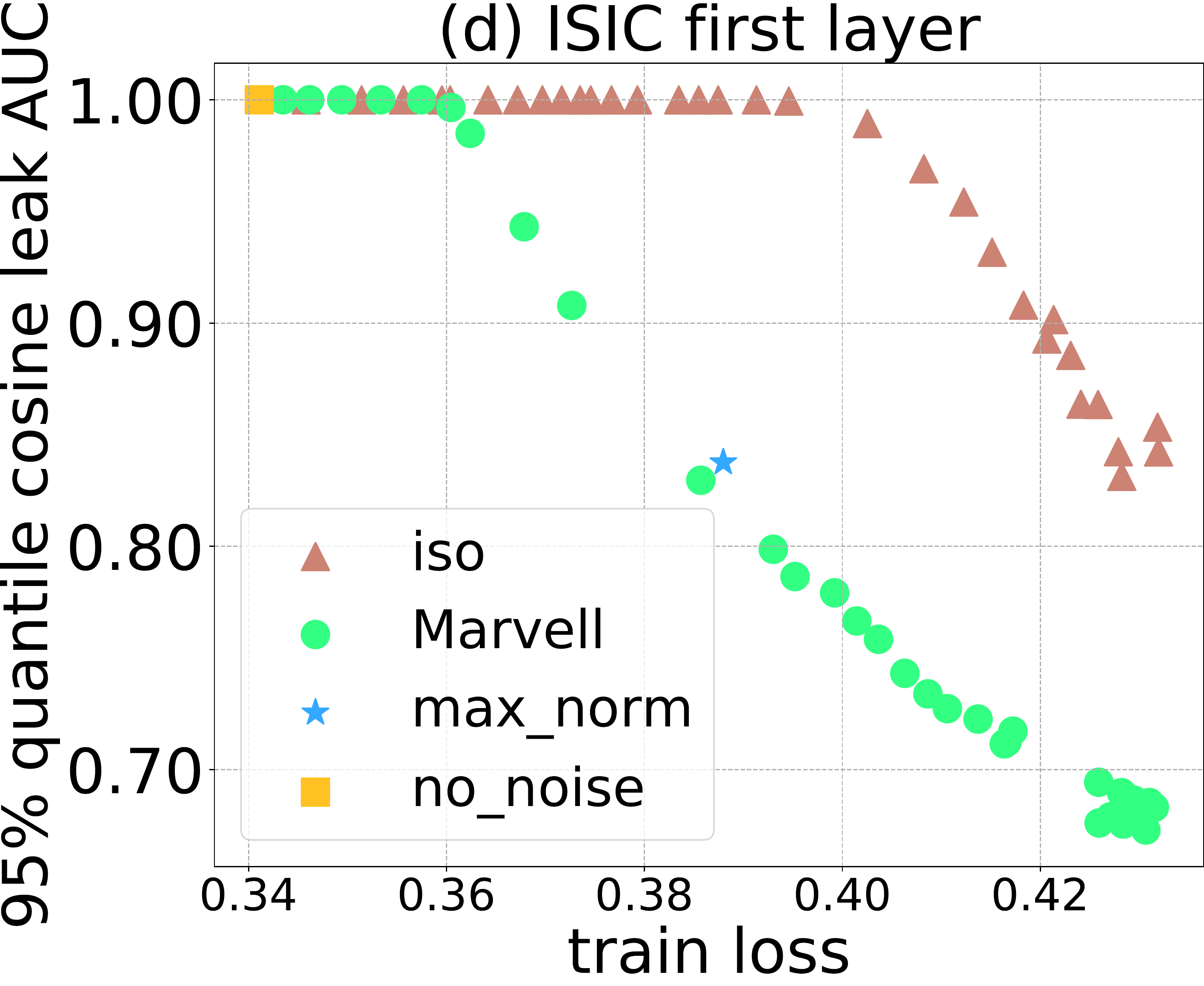}
  \end{minipage}
 
\vspace{1em}
\begin{minipage}[t]{0.24\linewidth}
  \centering
    \includegraphics[width=0.97\linewidth]{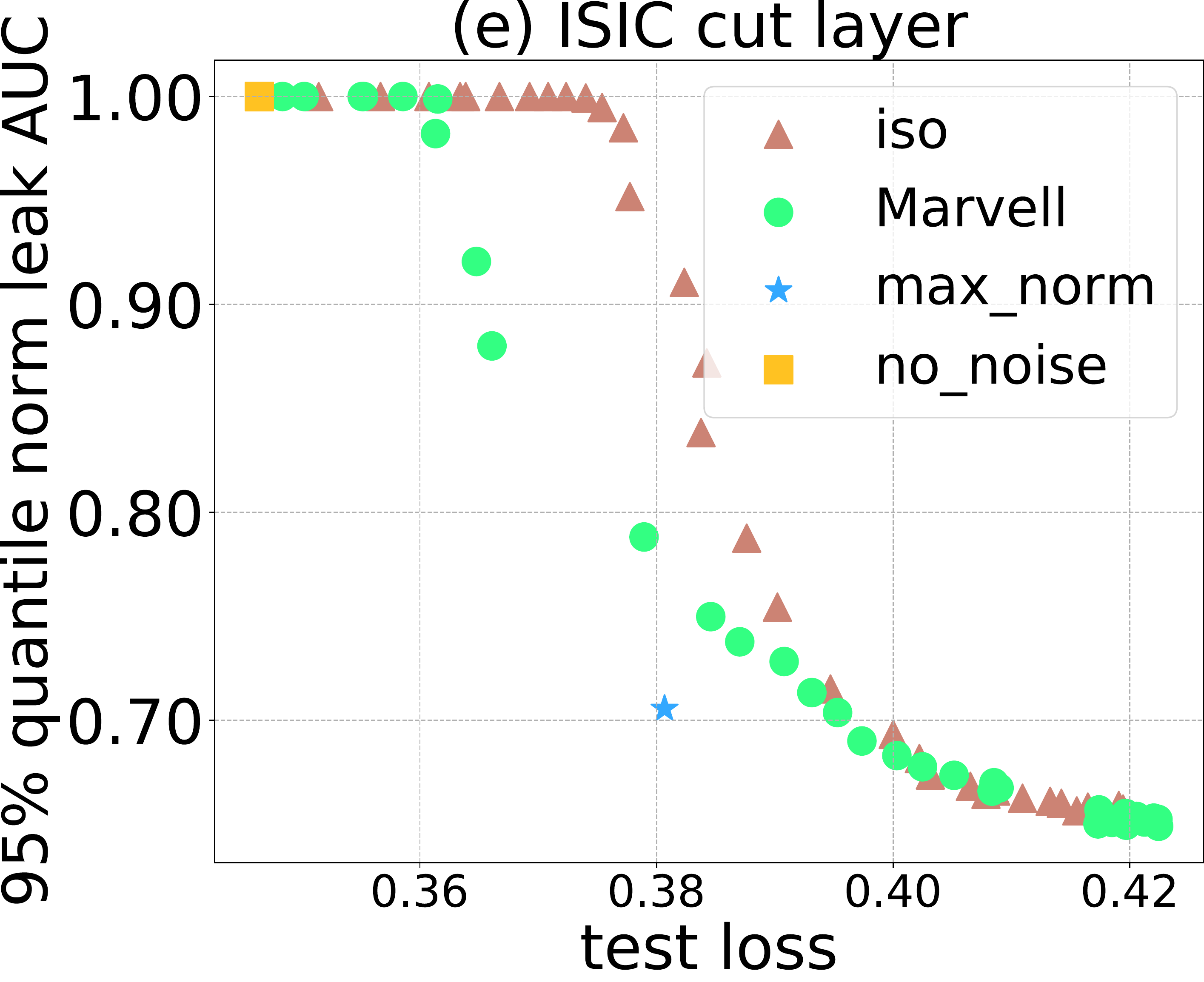}
  \end{minipage}
\begin{minipage}[t]{0.24\linewidth}
  \centering
    \includegraphics[width=0.97\linewidth]{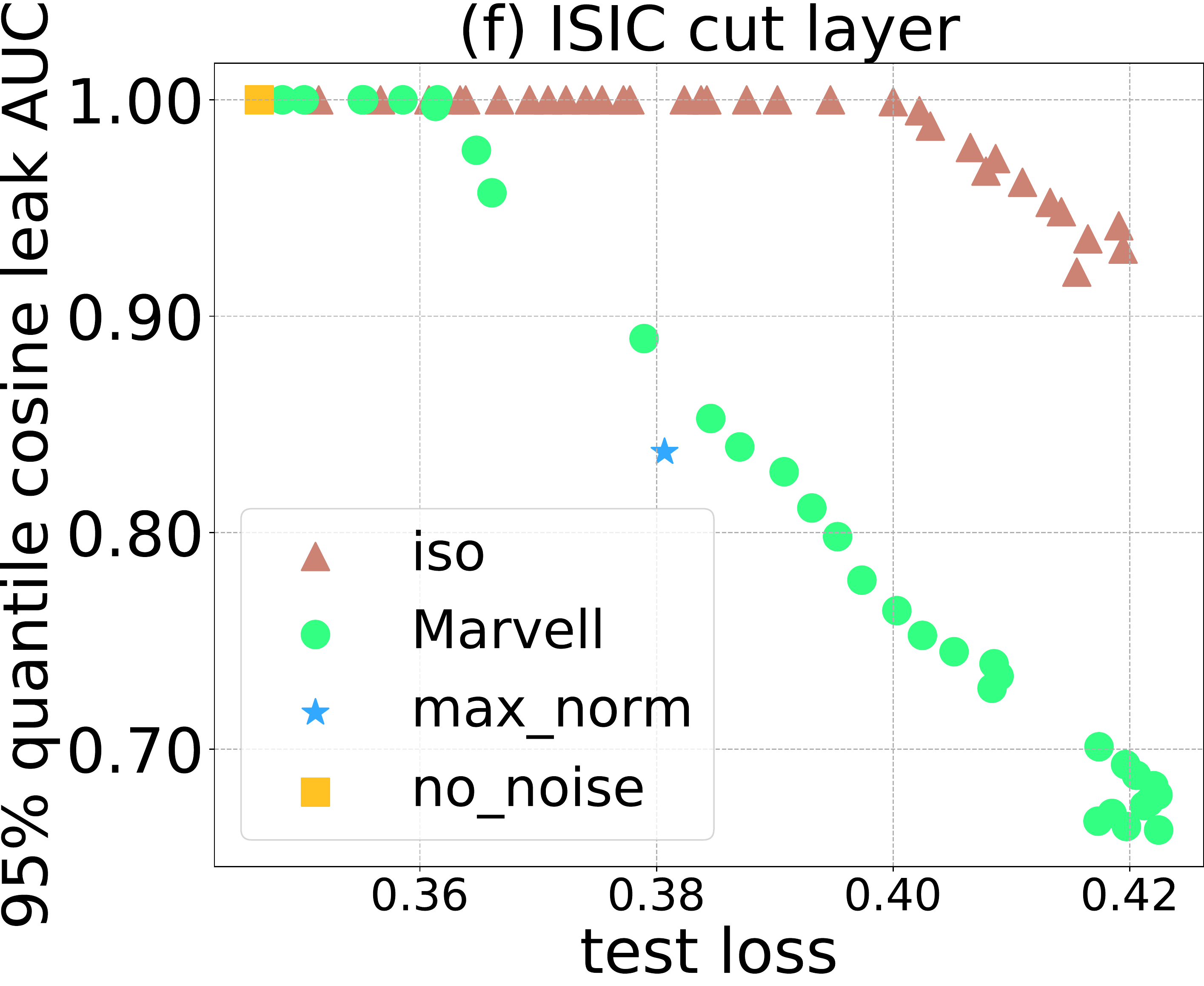}
  \end{minipage}
\begin{minipage}[t]{0.24\linewidth}
  \centering
    \includegraphics[width=0.97\linewidth]{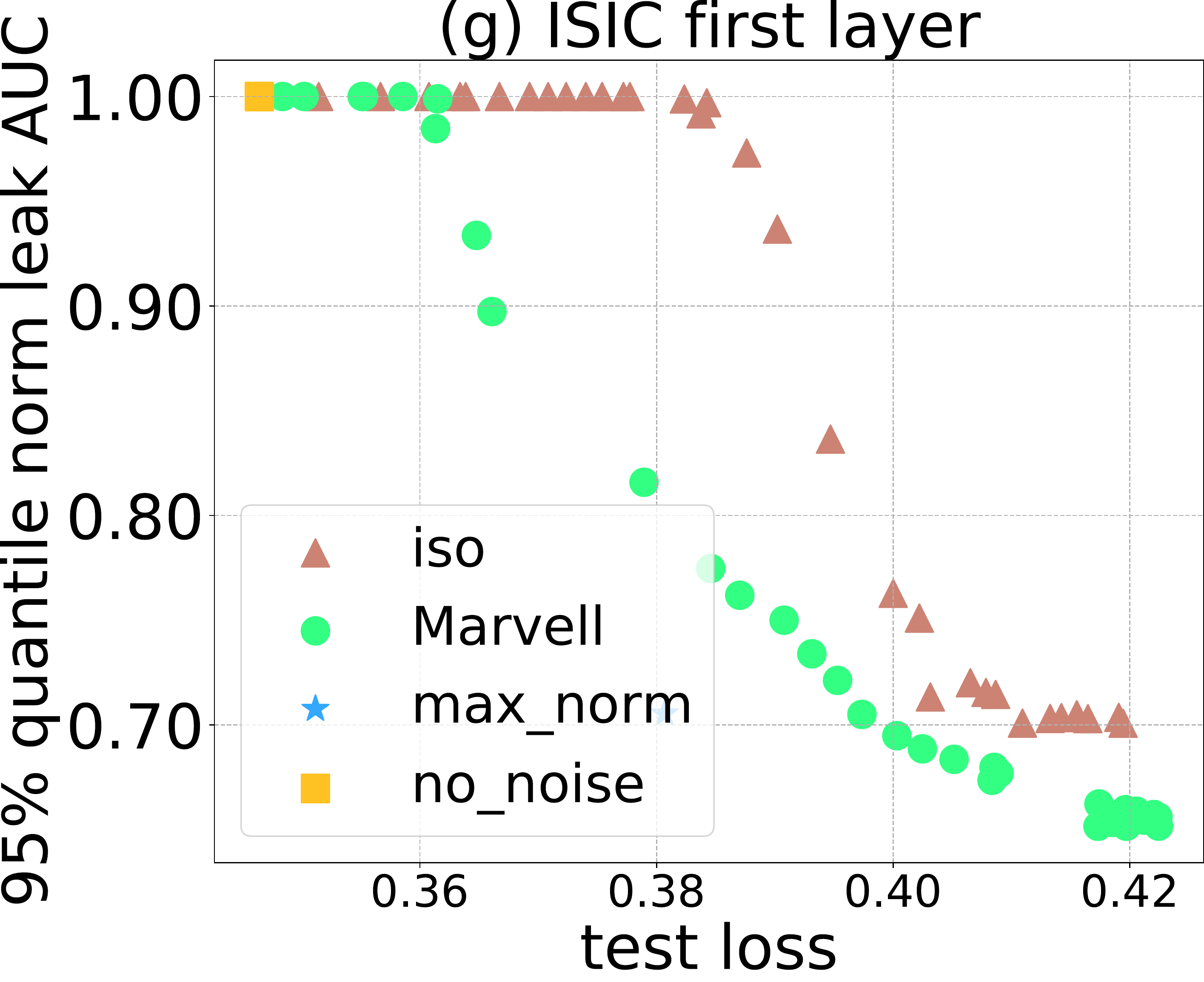}
  \end{minipage}
\begin{minipage}[t]{0.24\linewidth}
  \centering
    \includegraphics[width=0.97\linewidth]{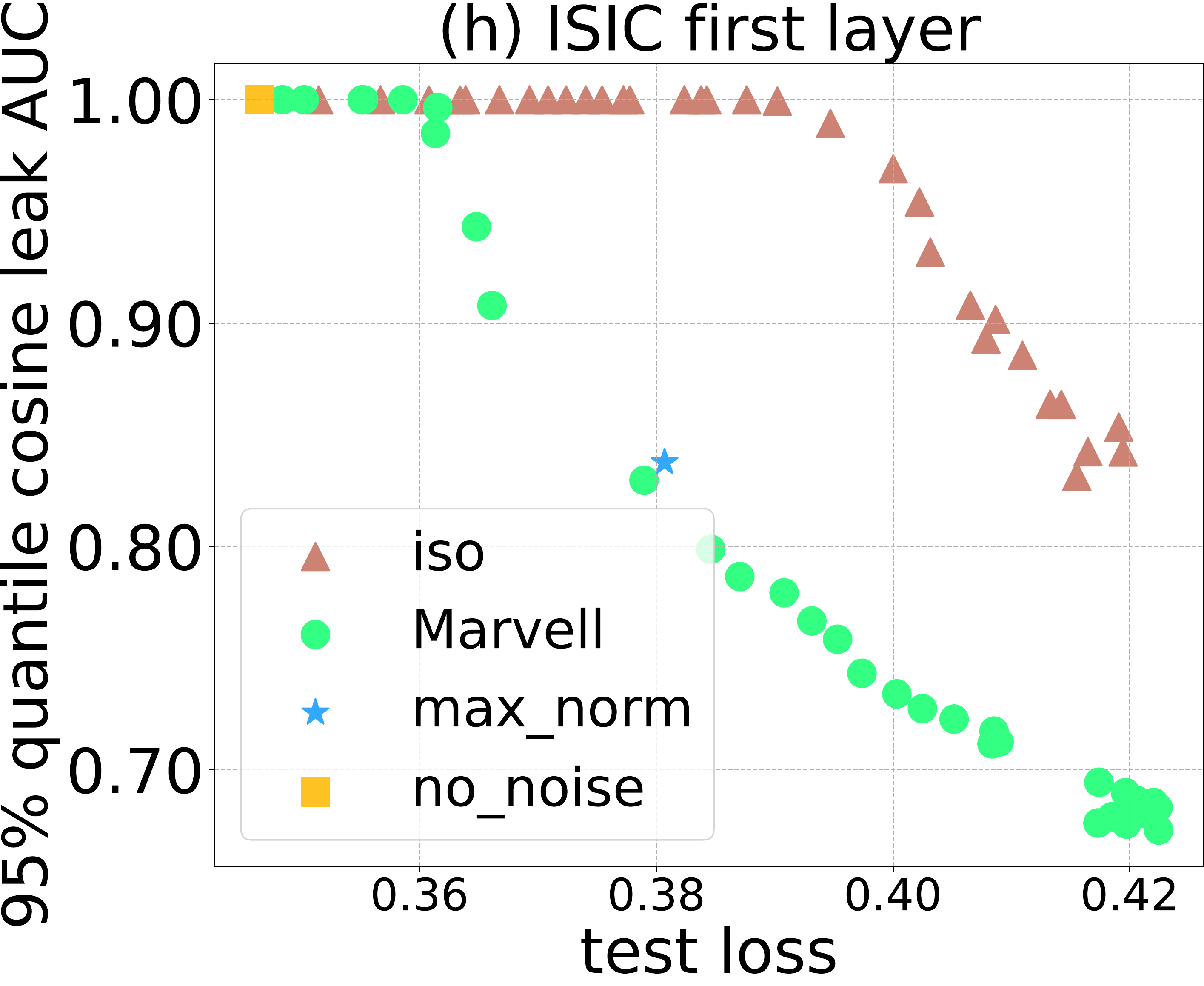}
  \end{minipage}

\vspace{1em}
\begin{minipage}[t]{0.24\linewidth}
  \centering
    \includegraphics[width=0.97\linewidth]{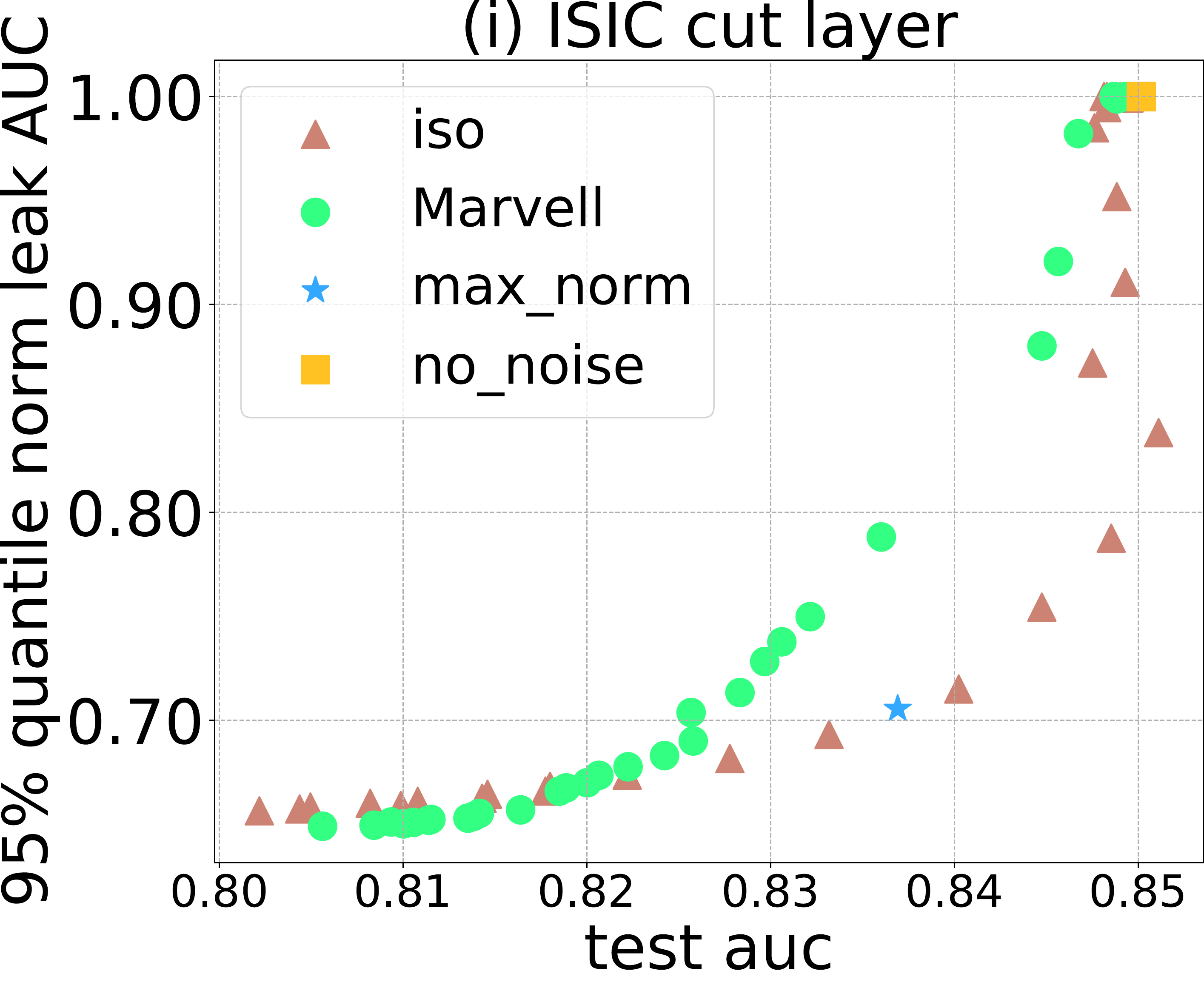}
  \end{minipage}
\begin{minipage}[t]{0.24\linewidth}
  \centering
    \includegraphics[width=0.97\linewidth]{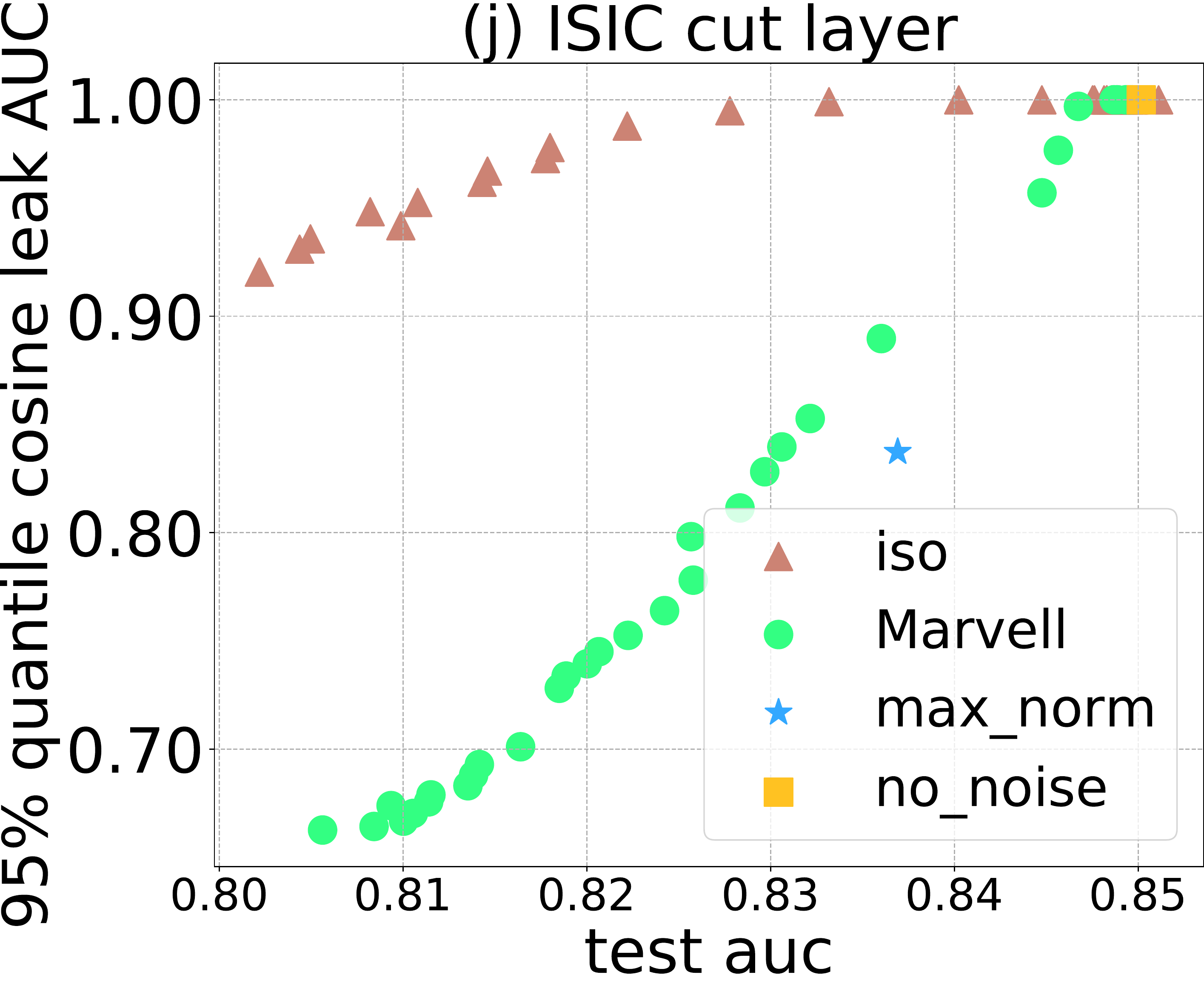}
  \end{minipage}
\begin{minipage}[t]{0.24\linewidth}
  \centering
    \includegraphics[width=0.97\linewidth]{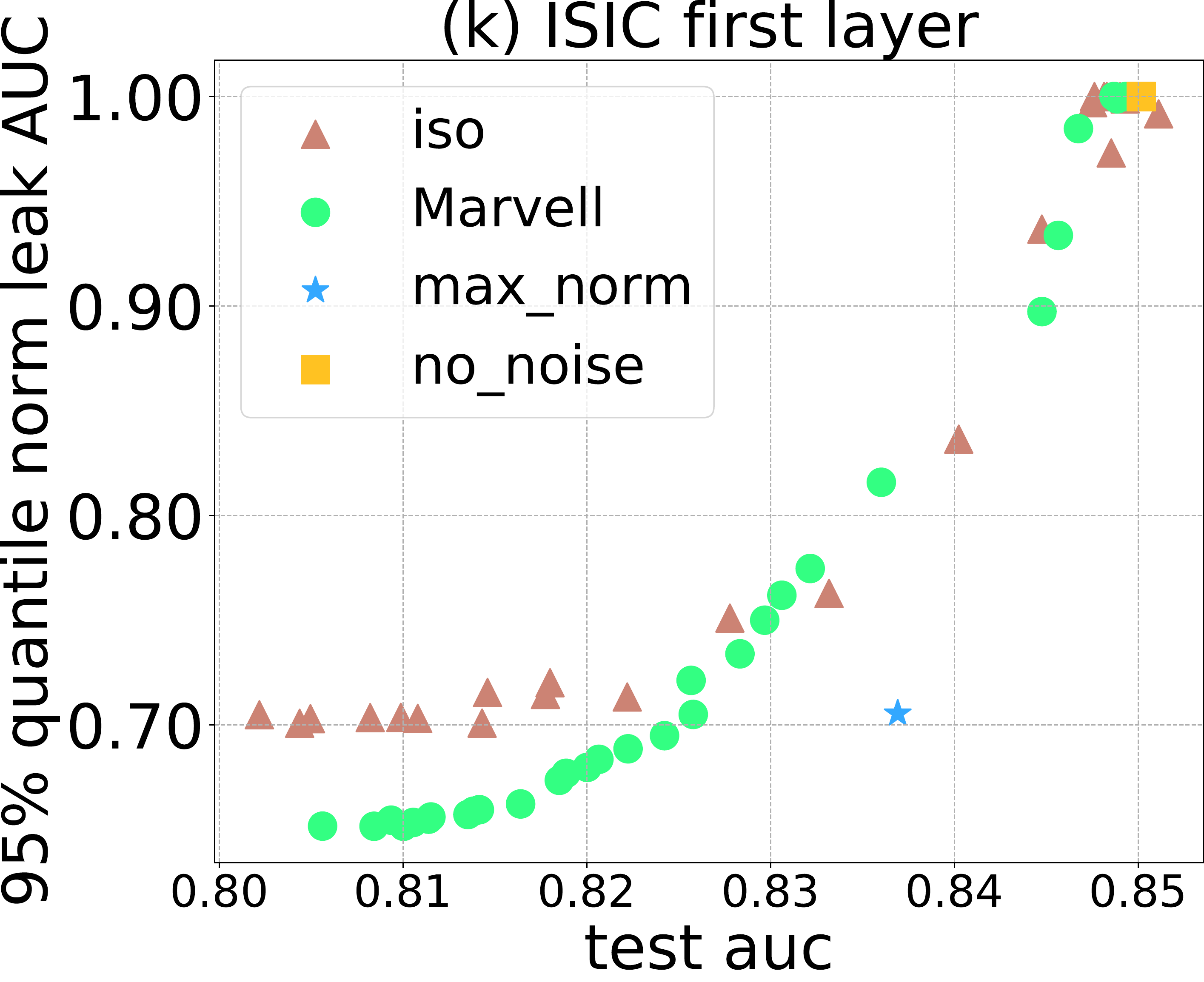}
  \end{minipage}
\begin{minipage}[t]{0.24\linewidth}
  \centering
    \includegraphics[width=0.97\linewidth]{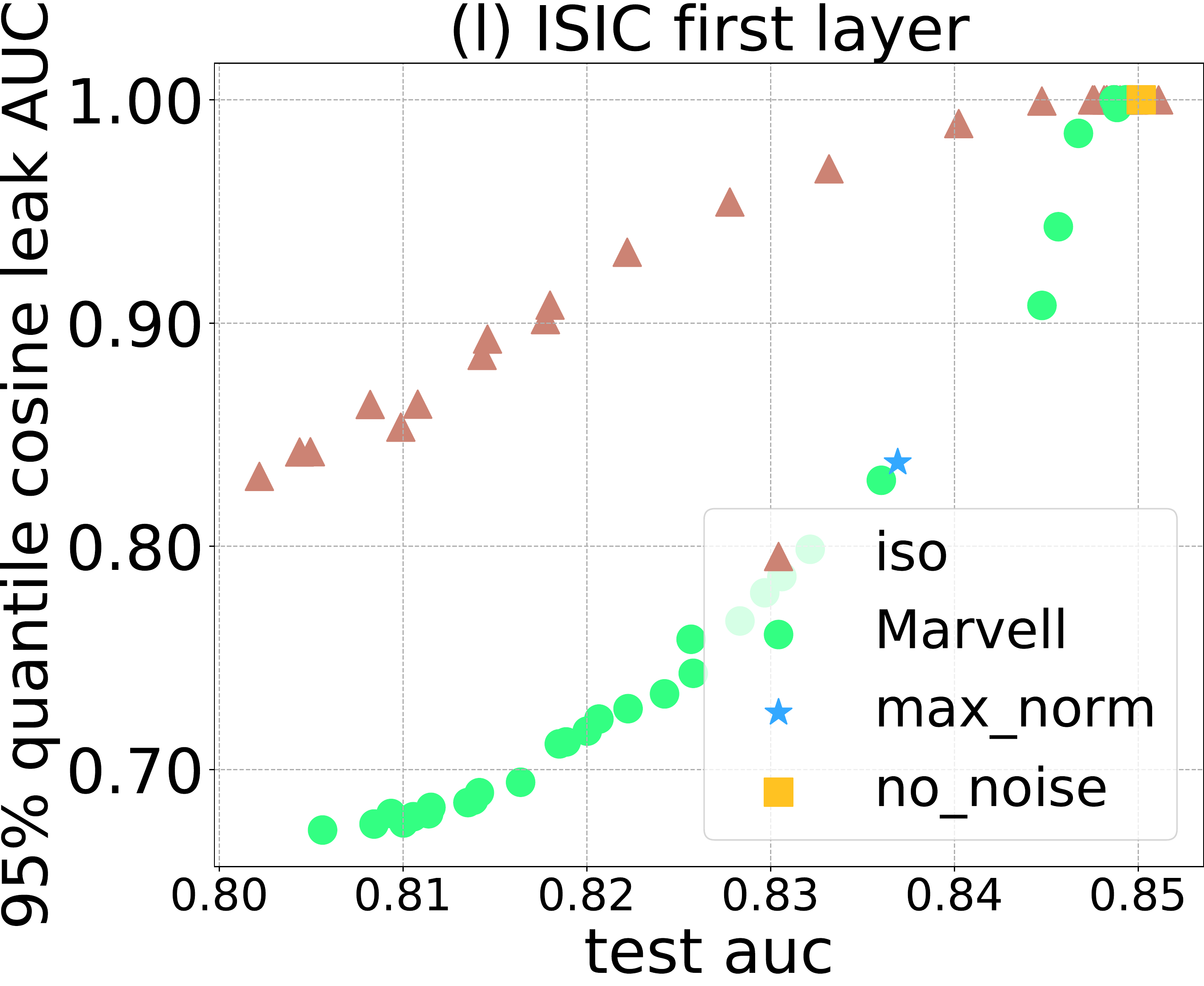}
  \end{minipage}
    \caption{Privacy (norm and cosine leak AUC) vs Utility (train loss, test loss, and test AUC) trade-off of protection methods (\Marvell, \iso, \nonoise, \maxnorm) at the cut layer and first layer on ISIC.}
\label{fig:app_isic_utility_privacy_tradeoff}
\end{figure}

\end{document}